\newtheorem{thm}{thm}[section]
\newtheorem{lem}[thm]{Lemma}
\newtheorem{prop}[thm]{Proposition}
\newtheorem{cor}[thm]{Corollary}
\newtheorem{exam}[thm]{Example}
\edef\csname f\@Alph\@tempcnta\endcsname{\noexpand\mathfrak{\@Alph\@tempcnta}}
\edef\csname l\@Alph\@tempcnta\endcsname{\noexpand\mathbb{\@Alph\@tempcnta}}
\edef\csname c\@Alph\@tempcnta\endcsname{\noexpand\mathcal{\@Alph\@tempcnta}}
\edef\csname b\@Alph\@tempcnta\endcsname{\noexpand\mathbf{\@Alph\@tempcnta}}
\newcommand{\dd}{{\delta}}
\newcommand{\DD}{{\Delta}}
\newcommand{\ee}{{\varepsilon}}
\newcommand{\ra}{{\rightarrow}}
\newcommand{\hra}{{\hookrightarrow}}
\newcommand{\cech}{{\rm \check{C}ech}}
\newcommand{\Cech}{{\rm \check{\lC}ech}}
\newcommand{\Rips}{{\rm {\lR}ips}}
\newcommand{\Sub}{{\rm {\lS}ub}}
\newcommand{\Filt}{{\rm {\lF}ilt}}
\newcommand{\dist}{{\rm dist}}
\newcommand{\Pers}{{\rm Pers}}
\newcommand{\pers}{{\rm pers}}
\newcommand{\Lip}{{\rm Lip}}
\newcommand{\Amp}{{\rm Amp}}
\newcommand{\diam}{{\rm diam}}
\newcommand{\mesh}{{\rm mesh}}
\newcommand{\median}{{\rm median}}
\newcommand{\KFDR}{{\rm KFDR}}
\newcommand{\id}{{\rm id}}
\newcommand{\lmid}{ \ \middle| \ }
\providecommand{\abs}[1]{\left\lvert#1\right\rvert}
\providecommand{\norm}[1]{\left\lVert#1\right\rVert}
\providecommand{\pare}[1]{\left( #1 \right)}
\providecommand{\rl}[1]{\left\{ #1 \right\}}
\providecommand{\card}[1]{{\rm card}\pare{#1}}
\providecommand{\inn}[2]{\langle #1, #2 \rangle}
\providecommand{\Ek}[3]{E_{#1}(\mu^{#2}_{#3})}
\providecommand{\dk}[4]{\norm{E_{#1}(\mu^{#2}_{#3}) - E_{#1}(\mu^{#2}_{#4})}_{\cH_{#1}}}
\title{Kernel method for persistence diagrams via kernel embedding and weight factor}
\author{Genki Kusano \thanks{Tohoku University, genki.kusano.r5@dc.tohoku.ac.jp}
\and Kenji Fukumizu \thanks{The Institute of Statistical Mathematics, fukumizu@ism.ac.jp}
\and Yasuaki Hiraoka \thanks{Tohoku University, hiraoka@tohoku.ac.jp}}
\date{}
\begin{document}
\maketitle

\begin{abstract}

Topological data analysis is an emerging mathematical concept for characterizing shapes in multi-scale data. In this field, persistence diagrams are widely used as a descriptor of the input data, and can distinguish robust and noisy topological properties.  
Nowadays, it is highly desired to develop a statistical framework on persistence diagrams to deal with practical data. 
This paper proposes a kernel method on persistence diagrams. 
A theoretical contribution of our method is that the proposed kernel allows one to control the effect of persistence, and, if necessary, noisy topological properties can be discounted in data analysis. 
Furthermore, the method provides a fast approximation technique. The method is applied into several problems including practical data in physics, and the results show the advantage compared to the existing kernel method on persistence diagrams.
\end{abstract}

\section{Introduction}
\label{sec:intro}

Recent years have witnessed an increasing interest in utilizing methods of algebraic topology for statistical data analysis.
In terms of algebraic topology, conventional clustering methods are regarded as charactering $0$-dimensional topological features which mean connected components of data.
Furthermore, higher dimensional topological features also represent informative shape of data, such as rings ($1$-dimension) and cavities ($2$-dimension).
The research analyzing these topological features in data is called {\em topological data analysis} (TDA) \cite{Ca09}, which has been successfully applied to various areas including information science \cite{CIdSZ08,dSG07}, biology \cite{KZPSGP07,XW14}, brain science \cite{LCKKL11,PETCNHV14,SMISCR08}, biochemistry \cite{GHIKMN13}, material science \cite{HNHEMN16, NHHEN15, STRFH17}, and so on.
In many of these applications, data have complicated geometric structures, and thus it is important to extract informative topological features from the data.

A {\em persistent homology} \cite{ELZ02}, which is a key mathematical tool in TDA, extracts robust topological information from data, and it has a compact expression called a {\em persistence diagram}.
While it is applied to various problems such as the ones listed above, statistical or machine learning methods for analysis on persistence diagrams are still limited.
In TDA, analysts often elaborate only single persistence diagram and, in particular, methods for handling many persistence diagrams, which can contain randomness from the data, are at the beginning stage (see the end of this section for related works).
Hence, developing a framework of statistical data analysis on persistence diagrams is a significant issue for further success of TDA and, to this goal, this paper discusses kernel methods for persistence diagrams.

\subsection{Topological descriptor}
\label{subsec:persistent_homology}

In order to provide some intuitions for the persistent homology, let us consider a typical way of constructing persistent homology from data points in a Euclidean space, assuming that the point set lies on a submanifold.
The aim is to make inference on the topology of the underlying manifold from finite data points.
We consider the $r$-balls (balls with radius $r$) to recover the topology of the manifold, as popularly employed in constructing an $r$-neighbor graph in many manifold learning algorithms.
While it is expected that, with an appropriate choice of $r$, the $r$-ball model can represent the underlying topological structures of the manifold, it is also known that the result is sensitive to the choice of $r$.
If $r$ is too small (resp. large), the union of $r$-balls consists simply of the disjoint $r$-balls (resp. a contractible space).
Then, by considering not one specific $r$ but all $r$, the persistent homology gives robust topological features of the point set.
\begin{figure}[htbp]
\begin{center}
\includegraphics[width=0.9\hsize]{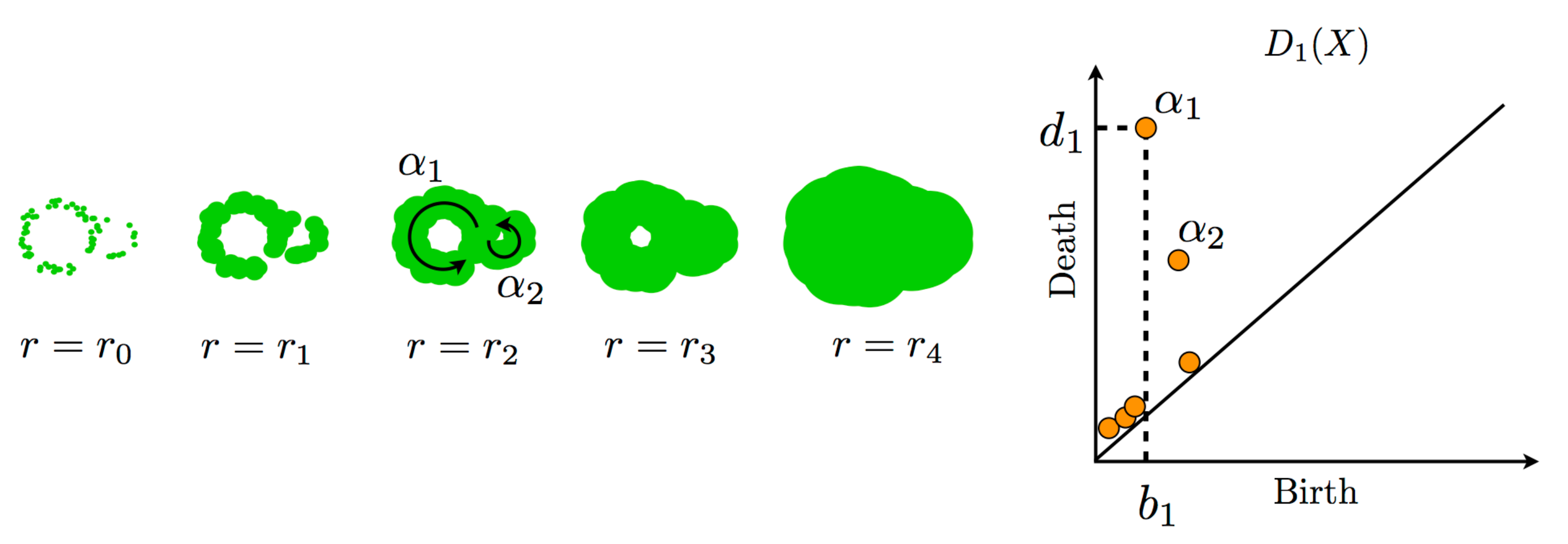}
\caption{Unions of $r$-balls at data points (left) and its $1$-st persistence diagram (right). The point $(b_{1},d_{1})$ in the persistence diagram represents the ring $\alpha_{1}$, which appears at $r=b_1$ and disappears at $r=d_1$. The noisy rings are plotted as the points close to the diagonal.}
\label{fig:filtration}
\end{center}
\end{figure}

As a useful representation of persistent homology, a persistence diagram is often used in topological data analysis.
The persistence diagram is given in the form of a multiset $D=\{(b_i,d_i) \in \lR^{2} \mid i\in I, \ b_i < d_i\}$ (Figure \ref{fig:filtration}).
Every point $(b_i,d_i)\in D$, called a {\em generator} of the persistent homology, represents a topological property (e.g., connected components, rings, cavities, etc.) which appears at $r=b_i$ and disappears at $r=d_i$ in the ball model.
Then, the {\em persistence} $d_i-b_i$ of the generator shows the robustness of the topological property under the radius parameter.
A generator with large persistence can be regarded as a reliable structure, while that with small persistence (points close to the diagonal) is likely to be a structure caused by noise.
In this way, persistence diagrams encode topological and geometric information of data points.
See Section \ref{sec:background} and Appendix \ref{sec:topology} for more information.

\subsection{Contribution}
\label{subsec:contribution}

Since a persistence diagram is a point set of variable size, it is not straightforward to apply standard methods of statistical data analysis, which typically assume vectorial data.
To vectorize persistence diagrams, we employ the framework of kernel embedding of (probability and more general) measures into reproducing kernel Hilbert spaces (RKHS).
This framework has recently been developed and leading various new methods for nonparametric inference \cite{MFSS17,SGSS07,SFG13}.
It is known \cite{SFL11} that, with an appropriate choice of kernels, a signed Radon measure can be uniquely represented by the Bochner integral of the feature vectors with respect to the measure.
Since a persistence diagram can be regarded as a sum of Dirac delta measures, it can be embedded into an RKHS by the Bochner integral.
Once such a vector representation is obtained, we can introduce any kernel methods for persistence diagrams systematically (see Figure \ref{fig:overview}). 
\begin{figure}[htbp]
\begin{center}
\includegraphics[width=0.95\hsize]{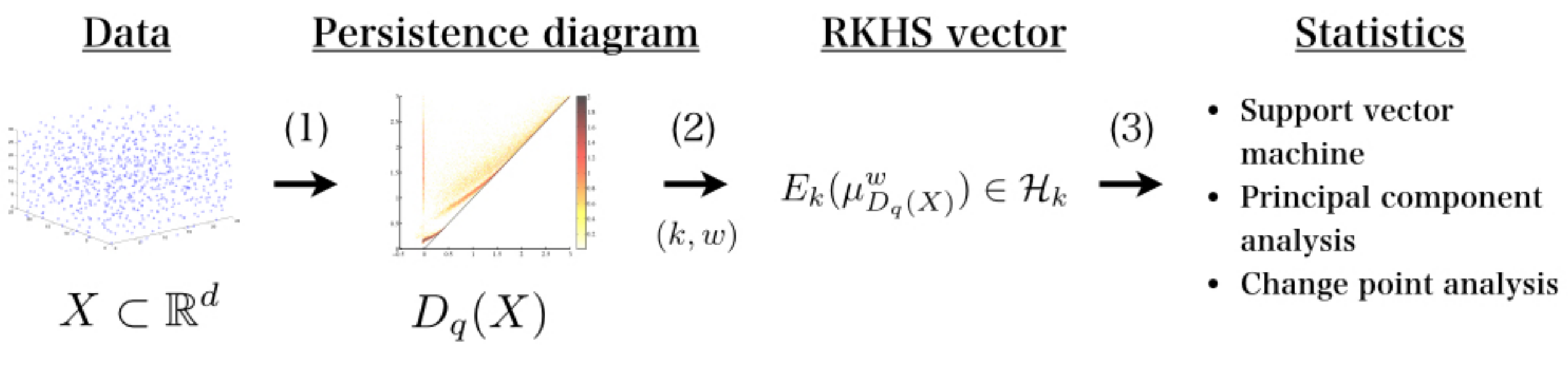}
\caption{
(1) A data set $X$ is transformed into a persistence diagram $D_q(X)$ (Section \ref{subsec:persistence_diagram}).
(2) The persistence diagram $D_q(X)$ is mapped to an RKHS vector $E_{k}(\mu_{D_{q}(X)}^{w})$, where $k$ is a positive definite kernel and $w$ is a weight function controlling the effect of persistence (Section \ref{subsec:vectorization}).
(3) Statistical methods are applied to those vector representations of  persistence diagrams (Section \ref{sec:experiment}).}
\label{fig:overview}
\end{center}
\end{figure}

Furthermore, since each generator in a persistence diagram is equipped with a persistence which indicates a robustness of the topological features, we will utilize it as a weight on the generator.
For embedding persistence diagrams in an RKHS, we propose a useful class of positive definite kernels, called {\em persistence weighted Gaussian kernel} (PWGK).
The advantages of this kernel are as follows:
(i) We can explicitly control the effect of persistence by a weight function, and hence discount the noisy generators appropriately in statistical analysis.
(ii) As a theoretical contribution, the distance defined by the RKHS norm for the PWGK satisfies the stability property, which ensures the continuity from data to the vector representation of the persistence diagram.
(iii) The PWGK allows efficient computation by using the random Fourier features \cite{RR07}, and thus it is applicable to persistence diagrams with a large number of generators.

We demonstrate the performance of the proposed kernel method with synthesized and real-world data, including granular systems (taken by X-ray Computed Tomography on granular experiments), oxide glasses (taken by molecular dynamics simulations) and protein datasets (taken by NMR and X-ray crystallography experiments).
We remark that these real-world problems have physical and biochemical significance in their own right, as detailed in Section \ref{sec:experiment}.

\subsection{Related works}
\label{subsec:related_work}

There are already some relevant works on statistical approaches to persistence diagrams.
Some studies discuss how to transform a persistence diagram to a vector \cite{AEKNPSCHMZ17, Bu15, CMWOXW15, COO15, RHBK15, RT16}.
In these methods, a transformed vector is typically expressed in a Euclidean space $\lR^{k}$ or a function space $L^{p}$, and simple and ad-hoc summary statistics like means and variances are used for data analysis such as principal component analysis (PCA) and support vector machines (SVMs).
In this paper, we will compare the performance among the PWGK, the persistence scale-space kernel \cite{RHBK15}, the persistence landscape \cite{Bu15}, the persistence image \cite{AEKNPSCHMZ17}, and the molecular topological fingerprint \cite{CMWOXW15} in several machine learning tasks.
Furthermore, we show that our vectorization is a generalization of the persistence scale-space kernel and the persistence image although the constructions are different.
We also remark that there are some works discussing statistical properties of persistence diagrams for random data points:
\cite{CGLM15} show convergence rates of persistence diagram estimation, and \cite{FLRWBS14} discuss confidence sets in a persistence diagram.
These works consider a different but important direction to the statistical methods for persistence diagrams.

The remaining of this paper is organized as follows:
In Section \ref{sec:background}, we review some basics on persistence diagrams and kernel embedding methods.
In Section \ref{sec:pdkernel},  the PWGK is proposed, and some theoretical and computational issues are discussed.
Section \ref{sec:experiment} shows experimental results and compares the proposed kernel method with other methods.

This paper is an extended version of our ICML paper \cite{KFH16}.
The difference from this conference version is as follows:
(i) Comparisons with other relevant methods, in particular, persistence landscapes and persistence images, have been added to this version.
(ii) New experimental results in comparison with other relevant methods.
(iii) Detailed proofs of the stability theorem has been added.

\section{Backgrounds}
\label{sec:background}

We review the concepts of persistence diagrams and kernel methods.  For readers who are not familiar with algebraic topology, we give a brief summary in Appendix \ref{sec:topology}.  See also \cite{Ha02} as an accessible introduction to algebraic topology.

\subsection{Persistence diagram}
\label{subsec:persistence_diagram}

In order to define a persistence diagram, we transform a data set $X$ into a filtration $\Filt(X)$ and compute its persistent homology $H_{q}(\Filt(X))$.
In this section, we will first introduce this mathematical framework of persistence diagrams. Then, by using a ball model filtration, we will intuitively explain geometrical meanings of persistence diagrams.
The ball model filtrations can be generalized toward two constructions using $\cech$ complexes and sub-level sets.
The former construction is useful for computations of persistence diagrams and the later is useful to discuss theoretical properties.

\subsubsection{Mathematical framework of persistence diagrams}

Let $K$ be a coefficient field of homology\footnote{In this setting, all homology are $K$-vector spaces.
You may simply consider the case $K=\lR$, but the theory is built with an arbitrary field.}.
Let $\Filt=\{F_a\mid a\in\mathbb{R}\}$ be a (right continuous) {\em filtration} of simplicial complexes (resp. topological spaces), i.e., $F_a$ is a subcomplex (resp. subspace) of $F_b$ for $a\leq b$ and $F_{a}=\bigcap_{a<b}F_{b}$.
For $a\leq b$, the $K$-linear map induced from the inclusion $F_a\hookrightarrow F_b$ is denoted by $\rho^b_a: H_q(F_a)\rightarrow H_q(F_b)$, where $H_{q}(F_{a})$ is the $q$-th homology of $F_{a}$.
The $q$-th {\em persistent homology} $H_q(\Filt)=(H_q(F_a),\rho^b_a)$ of $\Filt$ is defined by the family of homology $\{H_q(F_a)\mid a\in\mathbb{R}\}$ and the induced linear maps $\{ \rho^b_a \mid a\leq b \}$. 

A {\em homological critical value} of $H_q(\Filt)$ is the number $a\in\mathbb{R}$ such that the linear map $\rho^{a+\ee}_{a-\ee}: H_q(F_{a-\ee})\rightarrow H_q(F_{a+\ee})$ is not isomorphic for any sufficiently small $\ee>0$.
The persistent homology $H_q(\Filt)$ is called {\em tame} if $\dim H_q(F_a)<\infty$ for any $a\in\mathbb{R}$ and the number of homological critical values is finite. A tame persistent homology $H_q(\Filt)$ has a nice decomposition property:
\begin{thm}[\cite{ZC05}]\label{thm:decomposition}
A tame persistent homology can be uniquely expressed by
\begin{align}\label{eq:decom}
H_q(\Filt)\simeq\bigoplus_{i\in I} \lI[b_i,d_i],
\end{align}
where $\lI[b_i,d_i]=(U_a,\iota^b_a)$ consists of a family of $K$-vector spaces
\begin{align*}
U_a=\left\{\begin{array}{ll}
K,&b_i\leq a < d_i\\
0,&{\rm otherwise}
\end{array}\right.,
\end{align*}
and $\iota^b_a=\id_{K}$ for $b_i\leq a \leq b<d_i$.
\end{thm}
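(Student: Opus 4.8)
The plan is to reduce Theorem~\ref{thm:decomposition} to the classical structure theorem for finitely generated modules over a principal ideal domain, which is the route taken in \cite{ZC05}: one encodes the persistent homology as a graded module over the polynomial ring $K[t]$ and reads the interval decomposition off its invariant-factor (elementary-divisor) decomposition. Concretely the argument has three stages: first pass from the real-indexed family to a finite diagram of vector spaces using tameness, then apply the PID structure theorem to the associated graded $K[t]$-module, and finally translate the algebraic summands back into interval modules, the uniqueness being inherited from Krull--Schmidt.

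\emph{Reduction to a finite model.} By tameness, $H_q(\Filt)$ has only finitely many homological critical values $a_1 < \cdots < a_n$; set $a_0 = -\infty$, $a_{n+1} = +\infty$ and choose regular values $t_j \in (a_j, a_{j+1})$ for $0 \le j \le n$. Using the definition of a homological critical value together with a compactness argument on each closed subinterval of a gap $(a_j, a_{j+1})$, one checks that $\rho^b_a$ is an isomorphism whenever $a$ and $b$ lie in the same gap; combined with the right-continuity $F_a = \bigcap_{a<b} F_b$ (which forces births to occur exactly at the critical values), this shows that $H_q(\Filt)$ is determined, up to isomorphism, by the finite diagram of finite-dimensional $K$-vector spaces
\begin{align}\label{eq:finitediagram}
H_q(F_{t_0}) \to H_q(F_{t_1}) \to \cdots \to H_q(F_{t_n})
\end{align}
together with the bookkeeping data $a_1 < \cdots < a_n$ recording where the structure maps jump.

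\emph{The algebra and the translation back.} Encode the diagram \eqref{eq:finitediagram} as a $\mathbb{Z}_{\ge 0}$-graded $K[t]$-module $M$ by setting $M_j = H_q(F_{t_j})$ for $j \le n$, $M_j = M_n$ for $j > n$, and letting multiplication by $t$ act by the structure maps (and by the identity in degrees $\ge n$). Then $M$ is finitely generated over the PID $K[t]$, so the structure theorem yields a decomposition, unique up to reordering, of $M$ into a direct sum of graded shifts of $K[t]$ and of $K[t]/(t^m)$. In the dictionary of \cite{ZC05}, a shifted free summand corresponds to an interval module whose right endpoint is $+\infty$, and a shifted torsion summand $K[t]/(t^m)$ to a half-open interval module of width $m$. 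Re-indexing the integer gradings by the recorded real parameters (the birth of a summand appearing in degree $j$ becomes $b_i = a_j$, the death of a torsion summand becomes $d_i = a_{j+m}$, with $d_i = +\infty$ in the free case) turns this into the asserted isomorphism $H_q(\Filt) \simeq \bigoplus_{i \in I} \lI[b_i, d_i]$ with $b_i < d_i$, where $\lI[b_i,d_i]$ is precisely the interval module of the statement. Since each $\lI[b_i,d_i]$ is indecomposable and $\lI[b,d] \simeq \lI[b',d']$ only when $(b,d) = (b',d')$, the uniqueness of the multiset $\{(b_i,d_i)\}_{i \in I}$ follows from the uniqueness in the PID structure theorem, equivalently from Krull--Schmidt applied to $M$.

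\emph{Main obstacle.} The only genuinely delicate point is the reduction to \eqref{eq:finitediagram}: one must verify carefully that tameness together with right-continuity really does make $H_q(\Filt)$ locally constant away from the critical values, so that the passage to a finite diagram loses no information, and that the half-open convention $[b_i, d_i)$ forced on the interval modules by the $K[t]$-torsion factors is the one consistent with $F_a = \bigcap_{a<b} F_b$. Once this finite model is correctly in place, the remainder is standard commutative algebra.
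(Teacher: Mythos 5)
The paper itself offers no proof of Theorem \ref{thm:decomposition} --- it is quoted directly from \cite{ZC05} --- and your outline reproduces exactly the standard argument of that reference: use tameness to reduce to a finite diagram of finite-dimensional vector spaces, encode it as a finitely generated graded $K[t]$-module, apply the structure theorem over the PID $K[t]$, translate free and torsion summands into interval modules, and obtain uniqueness of the multiset $\{(b_i,d_i)\}$ from Krull--Schmidt. Your sketch is correct in this sense, and you rightly identify the only delicate point (that homology is locally constant between critical values and that right-continuity of the filtration yields the half-open convention $[b_i,d_i)$), which is precisely the part handled carefully in the cited literature.
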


Each summand $\lI[b_i,d_i]$ means a topological feature in $\Filt$ that appears at $a=b_{i}$ and disappears at $a=d_{i}$.
The birth-death pair $x=(b_i,d_i)$ is called a {\em generator} of the persistent homology, and $\pers(x):=d_{i}-b_{i}$ a {\em persistence} of $x$.
We note that, when $\dim H_q(F_a) \neq 0$ for any $a<0$ (resp. for any $a>0$), the decomposition \eqref{eq:decom} should be understood in the sense that some $b_i$ takes the value $-\infty$ (resp. $d_i=\infty$), where $-\infty,\infty$ are the elements in the extended real $\overline{\mathbb{R}}=\mathbb{R}\cup\{-\infty,\infty\}$.
From the decomposition \eqref{eq:decom}, we define the {\em persistence diagram} of $\Filt$ as the multi-set\footnote{A {\em multi-set} is a set with multiplicity of each point.
We regard a persistence diagram as a multi-set, since several generators can have the same birth-death pairs.}
\begin{align*}
D_q(\Filt)=\rl{ (b_i,d_i)\in\overline{\mathbb{R}}^2 \lmid i\in I}.
\end{align*}

In this paper, we assume that all persistence diagrams have finite cardinality because a tame persistent homology defines a finite persistence diagram.
Moreover, we also assume that all birth-death pairs are bounded\footnote{This assumption will be justified in Section \ref{subsec:geometrical}.}, that is, all elements in a persistence diagram take neither $\infty$ nor $-\infty$.
Here, we define the (abstract) persistence diagram $D$ by a finite multi-set above the diagonal $\lR^{2}_{{\rm ad}}:=\{(b,d) \in \lR^{2} \mid b < d\}$.

\subsubsection{Ball model filtrations}
\label{subsec:geometrical}

The example used in Figure \ref{fig:filtration} can be expressed as follows.
Let $X = \{\bm{x}_1, \ldots, \bm{x}_n \}$ be a finite subset in a metric space $(M,d_{M})$ and $X_{a}:=\bigcup_{i=1}^{n} B(\bm{x}_{i};a)$ be a union of balls $B(\bm{x}_i;a)=\{ \bm{x} \in M \mid d_{M}(\bm{x}_i,\bm{x}) \leq a\}$ with radius $a \geq 0$.
For convenience, let $X_{a}:=\emptyset \ (a < 0)$.
Since $\lX=\{X_{a} \mid a \in \lR \}$ is a right-continuous filtration of topological spaces and $X$ is a finite set, $H_{q}(\lX)$ is tame and the persistence diagram $D_{q}(\lX)$ is well-defined.
For notational simplicity, the persistence diagram of this ball model filtration is denoted by $D_q(X)$.

We remark that, in this model, there is only one generator in $D_0(X)$ that does not disappear in the filtration; its lifetime is $\infty$.
From now on, we deal with $D_0(X)$ by removing this infinite lifetime generator\footnote{This is called the {\em reduced persistence diagram}.}.
Let $\diam(X)$ be the diameter of $X$ defined by $\max_{\bm{x}_{i},\bm{x}_{j} \in X} d_{M}(\bm{x}_{i},\bm{x}_{j})$.
Then, all generators appear after $a=0$ and disappear before $a=\diam(X)$ because $X_{\diam(X)}$ becomes a contractible space.
Thus, for any dimension $q$, all birth-death pairs of $D_{q}(X)$ have finite values.

\subsubsection{Geometric complexes}

We review some standard methods of constructing a filtration from finite sets in a metric space. See also \cite{CdSO14} for more details.

Let $(M,d_M)$ be a metric space and $X= \{\bm{x}_1, \ldots, \bm{x}_n \}$ be a finite subset in $M$.
For a fixed $a \geq 0$, we form a $q$-simplex $[\bm{x}_{i_0} \cdots \bm{x}_{i_q}]$ as a subset $\{ \bm{x}_{i_0}, \ldots, \bm{x}_{i_q} \}$ of $X$ whenever there exists $\bar{\bm{x}} \in M$ such that $d_M(\bm{x}_{i_j},\bar{\bm{x}}) \leq a$ for all $j=0,\ldots,q$, or equivalently, $\cap^{q}_{j = 0} B(\bm{x}_{i_{j}}; a) \neq \emptyset$.
The set of these simplices forms a simplicial complex, called the $\check{C}${\em ech complex} of $X$ with parameter $a$, denoted by $\cech(X;a)$.
For $a < 0$, we define $\cech(X;a)$ as an empty set.
Since there is a natural inclusion $\cech(X;a) \hra \cech(X;b)$ whenever $a \leq b$, $\Cech(X)=\left\{\cech(X;a) \ \middle| \ a \in \lR \right\}$ is a filtration.
When $M$ is a subspace of $\lR^{d}$, from the nerve lemma \cite{Ha02}, it is known that the topology of $\cech(X;a)$ is the same\footnote{Precisely, they are {\em homotopy equivalent}.} as $X_{a}$ (Figure \ref{fig:cech}), and hence $D_{q}(\Cech(X))=D_{q}(X)$.

The Rips complex (or Vietoris-Rips complex) is also often used in TDA and it gives different topology from the $\cech$ complex.
For a fixed $a \geq 0$, we form a $q$-simplex $[\bm{x}_{i_0} \cdots \bm{x}_{i_q}]$ as a subset $\left\{ \bm{x}_{i_0}, \ldots , \bm{x}_{i_q} \right\}$ of $X$ that satisfies $d_M(\bm{x}_{i_j},\bm{x}_{i_k}) \leq 2a$ for all $j,k=0,\ldots ,q$.
The set of these simplices forms a simplicial complex, called the {\em Rips complex} of $X$ with parameter $a$, denoted by ${\rm Rips}(X;a)$.
Similarly, the Rips complex also forms a filtration $\Rips(X)$.
In general, $D_{q}(\Rips(X))$ is not the same as $D_{q}(X)$ (see Figure \ref{fig:cech}).

\begin{figure}[htbp]
\begin{center}
\includegraphics[width=0.7\hsize]{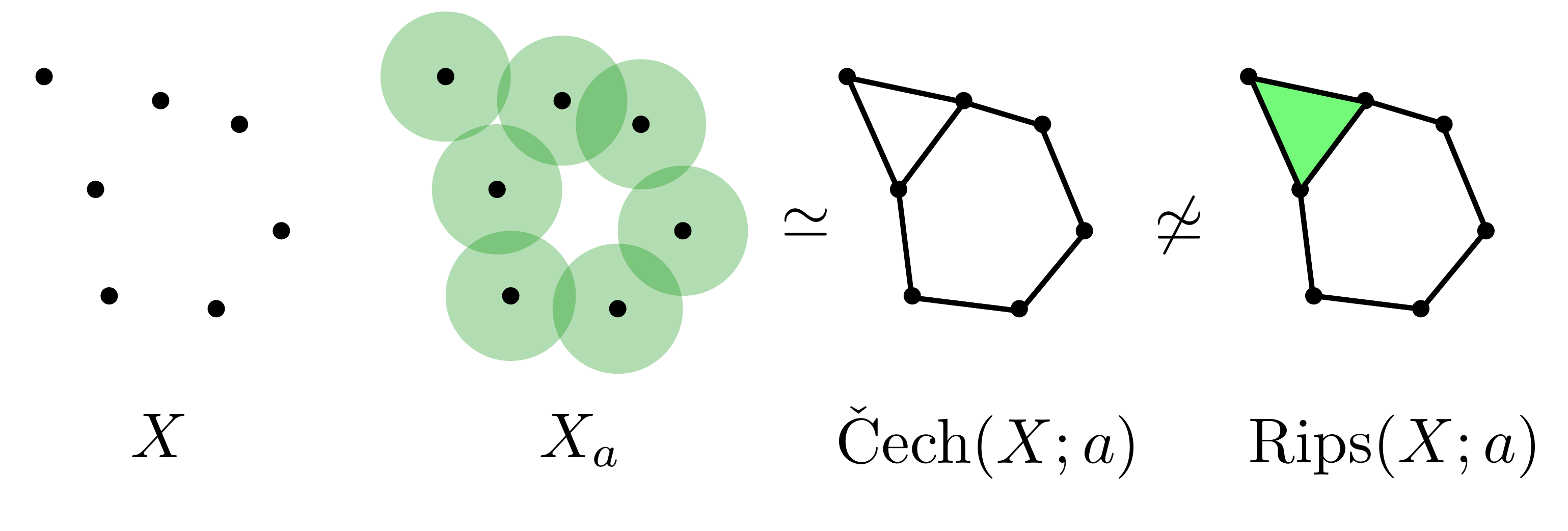}
\end{center}
\caption{A point set $X$, the union of balls $X_{a}$, the $\cech$ complex $\cech(X;a)$ and the Rips complex ${\rm Rips}(X;a)$. There are two rings in $X_{a}$ and $\cech(X;a)$. However, ${\rm Rips}(X;a)$ has only one ring because there is a $2$-simplex.}
\label{fig:cech}
\end{figure}

\subsubsection{sub-level sets}

Let $M$ be a topological space and $f:M \ra \lR$ be a continuous map.
Then, we define a {\em sub-level set} by ${\rm Sub}(f;a):=f^{-1}((-\infty,a])$ for $a \in \lR$ and its filtration by $\Sub(f):=\{{\rm Sub}(f;a) \mid a \in \lR \}$.
Here, $f:M \ra \lR$ is said to be {\em tame} if $H_{q}(\Sub(f))$ is tame.

For a finite set $X=\{\bm{x}_1, \ldots, \bm{x}_n \}$ in a metric space $(M, d_{M})$, we define the distance function $\dist_{X}:M \ra \lR$ by 
\[
\dist_{X}(\bm{x}):=\min_{\bm{x}_{i} \in X} d_{M} (\bm{x},\bm{x}_{i}).
\]
Then, we can see ${\rm Sub}(\dist_{X};a)=\bigcup_{x_{i} \in X} B(x_{i};a)$ and $D_{q}(\Sub(\dist_{X}))=D_{q}(X)$.
This means that the ball model is a special case of the sub-level set, and the $\cech$ complex and the sub-level set with the distance function $\dist_{X}$ give the same persistence diagram.
 
\subsection{Stability of persistence diagrams}
\label{sec:bottleneck_stability}

When we consider data analysis based on persistence diagrams, it is useful to introduce a distance measure among persistence diagrams for describing their relations.
In introducing a distance measure, it is desirable that, as a representation of data, the mapping from data to a persistence diagram is continuous with respect to the distance.
In many cases, data involve noise or stochasticity, and thus the persistence diagrams should be stable under perturbation of data.

The {\em bottleneck distance} $d_{{\rm B}}$ between two persistence diagrams $D$ and $E$ is defined by
\[
d_{{\rm B}}(D,E):=\inf_{\gamma} \sup_{x \in D \cup \DD} \norm{x-\gamma(x)}_{\infty},
\]
where $\DD:=\{(a,a) \mid a \in \lR \}$ is the diagonal set with infinite multiplicity and $\gamma$ ranges over all multi-bijections\footnote{A {\em multi-bijection} is a bijective map between two multi-sets counted with their multiplicity.} from $D \cup \DD$ to $E \cup \DD$.
Here, for $z=(z_1,z_2)\in\lR^2$, $\Vert z \Vert_\infty$ denotes $\max \{|z_1|,|z_2| \}$. 
We note that there always exists such a multi-bijection $\gamma$ because the cardinalities of $D \cup \DD$ and $E \cup \DD$ are equal by considering the diagonal set $\Delta$ with infinite multiplicity.
For sets $X$ and $Y$ in a metric space $(M, d_{M})$, let us recall the {\em Hausdorff distance} $d_{{\rm H}}$ given by
\begin{align*}
d_{{\rm H}}(X,Y):= \max \left\{\sup_{\bm{x} \in X} \inf_{\bm{y} \in Y} d_{M}(\bm{x},\bm{y}), \sup_{\bm{y} \in Y} \inf_{\bm{x} \in X} d_{M}(\bm{x},\bm{y}) \right\}.
\end{align*}
Then, the bottleneck distance satisfies the following stability property.
\begin{prop}[\cite{CdSO14,CEH07}]
\label{prop:point_stability}
Let $X$ and $Y$ be finite subsets in a metric space $(M,d_{M})$.
Then the persistence diagrams satisfy
\[
d_{{\rm B}}(D_{q}(X),D_{q}(Y)) \leq d_{{\rm H}}(X,Y).
\]
\end{prop}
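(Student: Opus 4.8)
The plan is to exhibit $d_{{\rm H}}(X,Y)$ as an interleaving parameter between the ball-model filtrations $\lX=\{X_{a}\mid a\in\lR\}$ and $\lY=\{Y_{a}\mid a\in\lR\}$ of $X$ and $Y$, and then to invoke the algebraic stability theorem for interleaved persistence modules. Write $\ee:=d_{{\rm H}}(X,Y)$. First I would check, directly from the definition of the Hausdorff distance, that $X_{a}\subseteq Y_{a+\ee}$ and $Y_{a}\subseteq X_{a+\ee}$ for every $a\in\lR$: if $\bm{x}\in X_{a}$ then $d_{M}(\bm{x},\bm{x}_{i})\leq a$ for some $\bm{x}_{i}\in X$; there is $\bm{y}_{j}\in Y$ with $d_{M}(\bm{x}_{i},\bm{y}_{j})\leq\ee$, and the triangle inequality gives $d_{M}(\bm{x},\bm{y}_{j})\leq a+\ee$, so $\bm{x}\in Y_{a+\ee}$. (Equivalently, this is the bound $\norm{\dist_{X}-\dist_{Y}}_{\infty}\leq\ee$ for the distance functions, the same computation, which lets one instead route the argument through the sub-level set description $D_{q}(X)=D_{q}(\Sub(\dist_{X}))$ of Section~\ref{subsec:geometrical}.)

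Next I would apply $H_{q}$ to the two families of inclusions $X_{a}\hookrightarrow Y_{a+\ee}$ and $Y_{a}\hookrightarrow X_{a+\ee}$. Because a composite of inclusions of subspaces is again an inclusion and $H_{q}$ is functorial, the induced linear maps commute with the structure maps $\rho^{b}_{a}$ of $H_{q}(\lX)$ and $H_{q}(\lY)$ in the pattern required of an $\ee$-interleaving of persistence modules. Both $H_{q}(\lX)$ and $H_{q}(\lY)$ are tame and decompose into finitely many interval summands $\lI[b_{i},d_{i}]$ by Theorem~\ref{thm:decomposition}, so $D_{q}(X)$ and $D_{q}(Y)$ are well defined. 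I would then conclude with the algebraic stability theorem --- the easy half of the isometry theorem, see \cite{CdSO14,CEH07} --- which asserts that an $\ee$-interleaving of tame persistence modules forces the bottleneck distance of their diagrams to be at most $\ee$; this gives $d_{{\rm B}}(D_{q}(X),D_{q}(Y))\leq\ee=d_{{\rm H}}(X,Y)$.

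The genuinely substantial step is the last one: the algebraic stability theorem, whose proof --- producing an $\ee$-matching of the diagonal-augmented diagrams out of the interleaving maps, for instance by the interpolation of filtrations in \cite{CEH07} or a rectangle-measure argument --- is not short, and I would import it as a black box. Everything else is elementary; the only point requiring a little care is checking that the interleaving squares actually commute, which reduces to functoriality of homology and transitivity of inclusions, and the passage between the ball-model and distance-function pictures is immediate, so either can serve as the entry point.
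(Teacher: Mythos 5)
The paper offers no proof of Proposition \ref{prop:point_stability}; it is imported as a known result from \cite{CdSO14,CEH07}, so the only baseline is the argument of those references. Your sketch is exactly that standard argument --- the Hausdorff bound gives the inclusions $X_{a}\subseteq Y_{a+\ee}$, $Y_{a}\subseteq X_{a+\ee}$ (equivalently $\norm{\dist_{X}-\dist_{Y}}_{\infty}\leq\ee$), functoriality turns these into an $\ee$-interleaving of the tame modules $H_{q}(\lX)$ and $H_{q}(\lY)$, and the algebraic stability theorem, legitimately used as a black box, yields $d_{{\rm B}}\leq\ee$ --- and it is correct in outline.
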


Proposition \ref{prop:point_stability} provides a geometric intuition of the stability of persistence diagrams.
Assume that two point sets $X$ and $Y$ are close to each other with $\ee=d_{{\rm H}}(X,Y)$. If there is a generator $(b,d) \in D_{q}(Y)$, then we can find at least one generator in $X$ which is born in $(b-\ee,b+\ee)$ and dies in $(d-\ee,d+\ee)$ (see Figure \ref{fig:stability}).
Thus, the stability guarantees the similarity of two persistence diagrams, and hence we can infer the true topological features from the persistence diagrams given by noisy observation (see also \cite{FLRWBS14}).
\begin{figure}[htbp]
\begin{center}
\includegraphics[width=0.8\hsize]{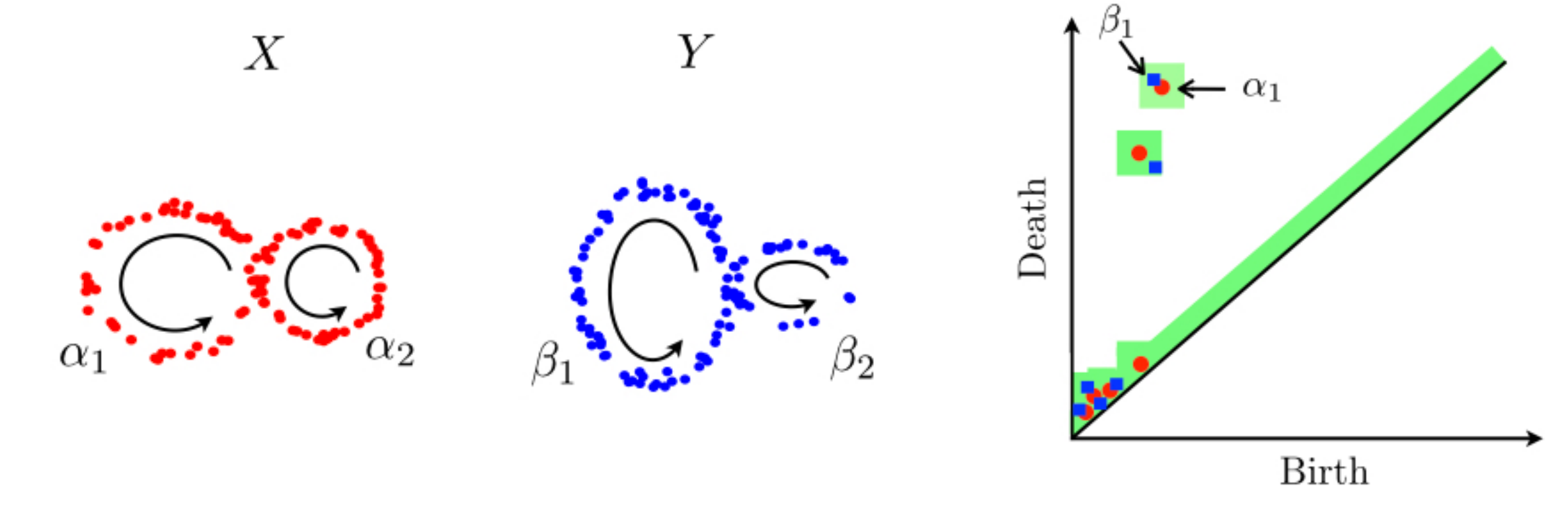}
\vspace{-3mm}
\caption{Two point sets $X$ and $Y$ (left) and their persistence diagrams (right).
The green region is an $\ee$-neighborhood of $D_{q}(Y)$ and all generators in $D_{q}(X)$ are in the $\ee$-neighborhood.}
\label{fig:stability}
\end{center}
\end{figure}

For $1 \leq p < \infty$, the {\em $p$-Wasserstein distance} $d_{{\rm W}_{p}}$, which is also used as a distance between two persistence diagrams $D$ and $E$, is defined by
\[
d_{ {\rm W}_{p}}(D,E)=\inf_{\gamma}\pare{\sum_{x \in D \cup \DD} \norm{x-\gamma(x)}^{p}_{\infty}}^{\frac{1}{p}},
\]
where $\gamma$ ranges over all multi-bijections from $D \cup \DD$ to $E \cup \DD$.
The $\infty$-Wasserstein distance $d_{{\rm W}_{\infty}}$ is defined by the bottleneck distance $d_{{\rm B}}$. Here, we define the {\em degree-$p$ total persistence} of $D$ by $\Pers_{p}(D):=\sum_{x \in D} \pers(x)^{p}$ for $1 \leq p < \infty$.

\begin{prop}[\cite{CEHM10}]
\label{prop:wasserstein_stability}
Let $1 \leq p' \leq p < \infty$, and $D$ and $E$ be persistence diagrams whose degree-$p'$ total persistences are bounded from above.
Then, 
\[
d_{ {\rm W}_{p}}(D, E) \leq \pare{\frac{\Pers_{p'}(D)+\Pers_{p'}(E)}{2}}^{\frac{1}{p}} d_{{\rm B}}(D,E)^{1-\frac{p'}{p}}.
\]
\end{prop}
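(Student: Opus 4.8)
The plan is to take an optimal bottleneck matching, clean it up, use it as a competitor in the $p$-Wasserstein optimization, and then interpolate between the $L^{p'}$ and $L^{\infty}$ matching costs; this is essentially the argument of \cite{CEHM10}.

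Since the diagrams are finite, there is a multi-bijection $\gamma^{*}:D\cup\DD\to E\cup\DD$ attaining the bottleneck distance, i.e.\ $\sup_{x\in D\cup\DD}\norm{x-\gamma^{*}(x)}_{\infty}=\dd$ with $\dd:=d_{{\rm B}}(D,E)$. First I would normalize $\gamma^{*}$ into a matching $\gamma$ with $\sup_{x}\norm{x-\gamma(x)}_{\infty}\le\dd$ that enjoys two extra properties: (a) any point of $D$ or of $E$ matched into $\DD$ is matched to its orthogonal projection onto the diagonal, so its cost is exactly $\pers(\cdot)/2$; and (b) any cross-matched pair $x\in D$, $y=\gamma(x)\in E$ satisfies $\norm{x-y}_{\infty}\le\tfrac12\max(\pers(x),\pers(y))$. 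Property (a) costs nothing because $\DD$ carries infinite multiplicity, so a diagonal match can always be re-routed to the nearest diagonal point without disturbing any other pair and without raising its cost. For (b), if some cross-pair violates it, re-route $x$ and $y$ each to its own diagonal projection: this replaces one edge of cost $\norm{x-y}_{\infty}$ by two edges of strictly smaller cost, hence does not increase the bottleneck value, and it decreases the number of cross-edges by one, so after finitely many such steps no violating cross-pair remains.

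Now $\gamma$ is a legal matching, so by definition of the Wasserstein distance $d_{{\rm W}_{p}}(D,E)^{p}\le\sum_{x\in D\cup\DD}\norm{x-\gamma(x)}_{\infty}^{p}$. Since $p'\le p$ and $\norm{x-\gamma(x)}_{\infty}\le\dd$, each term obeys $\norm{x-\gamma(x)}_{\infty}^{p}=\norm{x-\gamma(x)}_{\infty}^{p'}\norm{x-\gamma(x)}_{\infty}^{p-p'}\le\dd^{p-p'}\norm{x-\gamma(x)}_{\infty}^{p'}$, whence $d_{{\rm W}_{p}}(D,E)^{p}\le\dd^{p-p'}\Sigma$ with $\Sigma:=\sum_{x\in D\cup\DD}\norm{x-\gamma(x)}_{\infty}^{p'}$. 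To bound $\Sigma$, split the sum by match type: diagonal--diagonal matches contribute $0$; a point $x\in D$ (resp.\ $y\in E$) matched into $\DD$ contributes $(\pers(x)/2)^{p'}$ (resp.\ $(\pers(y)/2)^{p'}$) by (a); and a cross-pair contributes $\norm{x-y}_{\infty}^{p'}\le 2^{-p'}\max(\pers(x),\pers(y))^{p'}\le 2^{-p'}(\pers(x)^{p'}+\pers(y)^{p'})$ by (b) and $\max(s,t)^{p'}\le s^{p'}+t^{p'}$. Each generator of $D$ and of $E$ lies in exactly one group, so summing gives $\Sigma\le 2^{-p'}(\Pers_{p'}(D)+\Pers_{p'}(E))\le\tfrac12(\Pers_{p'}(D)+\Pers_{p'}(E))$, using $p'\ge1$; the hypothesis that the degree-$p'$ total persistences are bounded is exactly what makes this quantity finite. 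Substituting into $d_{{\rm W}_{p}}(D,E)^{p}\le\dd^{p-p'}\Sigma$ and taking $p$-th roots yields the asserted inequality.

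I expect the only genuinely delicate point to be the normalization of the bottleneck matching in the first step: one must check that re-routing edges onto the diagonal is a legal move on multi-bijections -- which is where the infinite multiplicity of $\DD$ is used -- and that the clean-up terminates while never exceeding the bottleneck value $\dd$. Everything after that is a H\"older-type interpolation together with the elementary estimates on $\norm{\cdot}_{\infty}$ versus $\pers(\cdot)$.
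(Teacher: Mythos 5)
Your proof is correct: the clean-up of the optimal bottleneck matching (projecting diagonal matches orthogonally and re-routing any cross-pair whose cost exceeds half the larger persistence), followed by the interpolation $\norm{x-\gamma(x)}_{\infty}^{p}\leq d_{{\rm B}}(D,E)^{p-p'}\norm{x-\gamma(x)}_{\infty}^{p'}$ and the bound $\Sigma\leq 2^{-p'}(\Pers_{p'}(D)+\Pers_{p'}(E))\leq\tfrac12(\Pers_{p'}(D)+\Pers_{p'}(E))$, is exactly the standard argument of \cite{CEHM10}, which the paper itself only cites without reproducing a proof. The one genuinely delicate point you flag -- that bottleneck-optimality controls only the maximal edge, so wasteful cross-edges must be re-routed before summing, using the infinite multiplicity of $\DD$ -- is handled correctly and the termination argument is sound.
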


For a persistence diagram $D$, its degree-$p$ total persistence is bounded from above by $\card{D} \times \max_{x \in D} \pers(x)^{p}$, where $\card{D}$ denotes the number of generators in $D$.
However, this bound may be weak because, in general, $\card{D}$ cannot be bounded from above.
In particular, if data set has noise, the persistence diagram often has many generators close to the diagonal.
Thus, it is desirable that the total persistence is bounded from above independently of $\card{D}$.
In the case of persistence diagrams obtained from a ball model filtration, we have the following upper bound (see Appendix \ref{sec:total} for the proof):
\begin{lem}
\label{lem:point_total}
Let $M$ be a triangulable compact subspace in $\lR^{d}$, $X$ be a finite subset of $M$, and $p>d$. 
Then, 
\[
\Pers_{p}(D_{q}(X)) \leq \frac{p}{p-d}C_{M}\diam(M)^{p-d},
\]
where $C_{M}$ is a constant depending only on $M$.
\end{lem}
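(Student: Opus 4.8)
\emph{Proof proposal.}
The plan is to realize $D_q(X)$ as the persistence diagram of the sublevel sets of the distance function, decompose the degree-$p$ total persistence by a layer-cake formula, and then control the number of long-lived generators through the triangulability of $M$. Concretely, I would first invoke the identity $D_q(X)=D_q(\Sub(\dist_X))$ established in Section~\ref{subsec:persistence_diagram}, noting that $\dist_X\colon M\to\lR$ is $1$-Lipschitz and tame. From the ball-model discussion, every generator $x=(b_x,d_x)\in D_q(X)$ satisfies $0\le b_x<d_x\le\diam(M)$, hence $\pers(x)\le\diam(M)$. Setting $N_\ell:=\card{\rl{x\in D_q(X)\lmid \pers(x)>\ell}}$ and writing $\pers(x)^p=\int_0^{\pers(x)}p\,\ell^{p-1}\,d\ell$, I obtain
\[
\Pers_p(D_q(X))=\sum_{x\in D_q(X)}\pers(x)^p=\int_0^{\infty}p\,\ell^{p-1}N_\ell\,d\ell=\int_0^{\diam(M)}p\,\ell^{p-1}N_\ell\,d\ell ,
\]
the last equality because $N_\ell=0$ for $\ell\ge\diam(M)$.

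The core of the argument is the estimate $N_\ell\le C_M\,\ell^{-d}$ with $C_M$ depending only on $M$. Since $M$ is triangulable and compact in $\lR^d$, for each $\ell>0$ it admits a triangulation $T_\ell$ all of whose simplices have diameter less than a fixed small fraction of $\ell$, and with at most $C_M\,\ell^{-d}$ simplices: a compact subset of $\lR^d$ of diameter $\diam(M)$ can be triangulated at mesh $\ee$ using $O\pare{(\diam(M)/\ee)^d}$ simplices, with implied constant intrinsic to $M$. Because $\dist_X$ is $1$-Lipschitz, restricting it to $T_\ell$ yields a piecewise-linear filtration that is $(\ell/2)$-interleaved with $\Sub(\dist_X)$, so by the stability theorem for sublevel-set persistence \cite{CEH07} the two persistence diagrams are within bottleneck distance $<\ell/2$; the combinatorial diagram has at most as many generators as $T_\ell$ has simplices, i.e.\ at most $C_M\,\ell^{-d}$. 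Now any multi-bijection witnessing bottleneck distance $<\ell/2$ cannot send a generator $x\in D_q(X)$ with $\pers(x)>\ell$ to the diagonal, since any diagonal point lies at $\ell_\infty$-distance at least $\pers(x)/2>\ell/2$ from $x$; hence such generators are matched injectively into the genuine generators of the combinatorial diagram, giving $N_\ell\le C_M\,\ell^{-d}$. Substituting into the integral above and using $p>d$ so that it converges at $0$,
\[
\Pers_p(D_q(X))\le\int_0^{\diam(M)}p\,\ell^{p-1}\cdot C_M\,\ell^{-d}\,d\ell=C_M\,p\int_0^{\diam(M)}\ell^{\,p-1-d}\,d\ell=\frac{p}{p-d}\,C_M\,\diam(M)^{p-d},
\]
which is the asserted bound, with $C_M$ updated to absorb the fixed numerical factors coming from the choice of mesh.

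I expect the main obstacle to be making the count $N_\ell\le C_M\,\ell^{-d}$ fully rigorous. Two points need care: (i) producing triangulations of the (abstract) triangulable space $M$ of controlled mesh whose simplex count scales like $\ell^{-d}$ with a constant depending only on $M$; and (ii) justifying the comparison between the continuous filtration $\Sub(\dist_X)$ and its combinatorial discretization, i.e.\ checking that the $1$-Lipschitz property indeed yields an $(\ell/2)$-interleaving and then quoting bottleneck stability to transfer the bound on the number of bars. Once this estimate is in hand, everything else is the elementary integral displayed above; the condition $p>d$ is exactly what makes that integral finite.
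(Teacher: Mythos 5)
Your proposal is correct and follows essentially the same route as the paper: reduce to the sublevel-set filtration of the $1$-Lipschitz function $\dist_X$ with $\Amp(\dist_X)\leq\diam(M)$, bound the number of generators of persistence exceeding $\ell$ by a triangulation count of order $\ell^{-d}$, and integrate, using $p>d$ for convergence. The only difference is presentational: the paper quotes the counting/integral estimate as a lemma from \cite{CEHM10} (Appendix \ref{sec:total}), whereas you re-derive that estimate via the layer-cake formula, PL approximation on a fine triangulation, and bottleneck stability, which is exactly the argument behind the cited result.
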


\begin{cor}
\label{cor:point_wasserstein}
Let $M$ be a triangulable compact subspace in $\lR^{d}$, $X,Y$ be finite subsets of $M$, and $p \geq p' > d$. Then 
\begin{align*}
d_{{\rm W}_{p}} (D_{q}(X), D_{q}(Y) )
&\leq  \pare{ \frac{p'}{p'-d}C_{M}\diam(M)^{p'-d} }^{\frac{1}{p}} d_{{\rm B}}(D_{q}(X),D_{q}(Y))^{1-\frac{p'}{p}} \\
&\leq  \pare{ \frac{p'}{p'-d}C_{M}\diam(M)^{p'-d} }^{\frac{1}{p}} d_{{\rm H}}(X,Y)^{1-\frac{p'}{p}}
\end{align*}
where $C_{M}$ is a constant depending only on $M$.
\end{cor}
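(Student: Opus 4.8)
The plan is to concatenate the three stability-type results proved above, with Lemma \ref{lem:point_total} supplying the essential bound on the total persistence that does not depend on the number of generators.

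First I would apply Proposition \ref{prop:wasserstein_stability} to $D = D_{q}(X)$ and $E = D_{q}(Y)$. Its hypotheses are satisfied: $1 \leq p' \leq p < \infty$ holds because $p' > d \geq 1$, and the degree-$p'$ total persistences of $D_{q}(X)$ and $D_{q}(Y)$ are finite since $M$ is triangulable and compact, so that Lemma \ref{lem:point_total} applies to each of the finite subsets $X, Y \subseteq M$. This gives
\[
d_{{\rm W}_{p}}(D_{q}(X), D_{q}(Y)) \leq \pare{\frac{\Pers_{p'}(D_{q}(X)) + \Pers_{p'}(D_{q}(Y))}{2}}^{\frac{1}{p}} d_{{\rm B}}(D_{q}(X), D_{q}(Y))^{1 - \frac{p'}{p}}.
\]
Next I would substitute the bound of Lemma \ref{lem:point_total}. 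The point to keep in mind is that the constant $C_{M}$ there depends only on $M$ and not on the sampled points, so $\Pers_{p'}(D_{q}(X)) \leq \frac{p'}{p'-d} C_{M} \diam(M)^{p'-d}$ and the identical bound holds for $D_{q}(Y)$. Hence the average of the two total persistences is also at most $\frac{p'}{p'-d} C_{M} \diam(M)^{p'-d}$, and plugging this into the display above yields the first inequality of the corollary.

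For the second inequality I would invoke Proposition \ref{prop:point_stability}, which gives $d_{{\rm B}}(D_{q}(X), D_{q}(Y)) \leq d_{{\rm H}}(X,Y)$. Since $p' \leq p$ the exponent $1 - \frac{p'}{p}$ is nonnegative, so the map $t \mapsto t^{1 - p'/p}$ is nondecreasing on $[0,\infty)$ (and is constant equal to $1$ in the degenerate case $p = p'$); applying it to both sides of the bottleneck bound and combining with the first inequality gives
\[
d_{{\rm W}_{p}}(D_{q}(X), D_{q}(Y)) \leq \pare{\frac{p'}{p'-d} C_{M} \diam(M)^{p'-d}}^{\frac{1}{p}} d_{{\rm H}}(X,Y)^{1 - \frac{p'}{p}}.
\]

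I do not expect a real obstacle here: the corollary is a direct chaining of Proposition \ref{prop:wasserstein_stability}, Lemma \ref{lem:point_total}, and Proposition \ref{prop:point_stability}. The only delicate points are the bookkeeping of hypotheses (that $p' > d$ forces $p' \geq 1$, and that compactness and triangulability of $M$ are what make the total persistences finite) and the observation that $C_{M}$ is genuinely independent of the point sample, so a single constant serves for both $X$ and $Y$. The substantive work is all contained in Lemma \ref{lem:point_total}, whose proof is deferred to Appendix \ref{sec:total}.
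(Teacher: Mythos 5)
Your proposal is correct and follows exactly the route the paper intends: the corollary is stated as a direct consequence of chaining Proposition \ref{prop:wasserstein_stability} with the uniform total-persistence bound of Lemma \ref{lem:point_total} and then applying the bottleneck--Hausdorff stability of Proposition \ref{prop:point_stability}. Your bookkeeping of the hypotheses (finiteness of the degree-$p'$ total persistences via Lemma \ref{lem:point_total}, independence of $C_{M}$ from the point sets, and monotonicity of $t \mapsto t^{1-p'/p}$) matches the paper's argument.
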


\subsection{Kernel methods for representing signed measures}
\label{subsec:universal}

As a preliminary to our proposal of vector representation for persistence diagrams, we briefly summarize a method for embedding signed measures with a positive definite kernel.

Let $\Omega$ be a set and $k:\Omega \times \Omega \ra \lR$ be a {\em positive definite kernel} on $\Omega$, i.e., $k$ is symmetric, and for any number of points $x_{1},\ldots,x_{n}$ in $\Omega$, the Gram matrix $\pare{k(x_{i},x_{j})}_{i,j=1,\ldots,n}$ is nonnegative definite.
A popular example of positive definite kernel on $\lR^{d}$ is the Gaussian kernel $k_{{\rm G}}(x,y)=e^{-\frac{\norm{x-y}^{2}}{2 \sigma^{2}}} \ (\sigma>0)$, where $\norm{\cdot}$ is the Euclidean norm in $\lR^{d}$.
From Moore-Aronszajn theorem, it is also known that every positive definite kernel $k$ on $\Omega$ is uniquely associated with a reproducing kernel Hilbert space $\cH_{k}$ (RKHS).

We use a positive definite kernel to represent persistence diagrams by following the idea of the kernel mean embedding of distributions \cite{MFSS17, SGSS07,SFL11}.
Let $\Omega$ be a locally compact Hausdorff space, $M_{{\rm b}}(\Omega)$ be the space of all finite signed Radon measures\footnote{A {\em Radon measure} $\mu$ on $\Omega$ is a Borel measure on $\Omega$ satisfying
(i) $\mu(C) < \infty$ for any compact subset $C \subset \Omega$, and 
(ii) $\mu(B)=\sup \{ \mu(C) \mid C \subset B, ~ C \mbox{:compact}\}$ for any $B$ in the Borel $\sigma$-algebra of $\Omega$.} on $\Omega$, and $k$ be a bounded measurable kernel on $\Omega$.
Since $\int \norm{k(\cdot,x)}_{\cH_{k}} d \mu(x)$ is finite, the integral $\int k(\cdot, x) d \mu(x)$ is well-defined as the Bochner integral \cite{DU77}.
Here, we define a mapping from $M_{{\rm b}}(\Omega)$ to $\cH_{k}$ by
\begin{equation}\label{eq:E_k}
E_{k}:M_{{\rm b}}(\Omega) \ra \cH_{k}, ~~ \mu \mapsto \int k(\cdot, x) d \mu(x).
\end{equation}

For a locally compact Hausdorff space $\Omega$, let $C_{0}(\Omega)$ denote the space of continuous functions vanishing at infinity\footnote{A function $f$ is said to {\em vanish at infinity} if for any $\ee >0$ there is a compact set $K \subset \Omega$ such that $\sup_{x \in K^{c}} |f(x)| \leq \ee$.}.
A kernel $k$ on $\Omega$ is said to be $C_{0}$-kernel if $k(\cdot,x) \in C_{0}(\Omega)$ for any $x \in \Omega$.
If $k$ is $C_{0}$-kernel, the associated RKHS $\cH_{k}$ is a subspace of $C_{0}(\Omega)$.
A $C_{0}$-kernel $k$ is called {\em $C_{0}$-universal} if $\cH_{k}$ is dense in $C_{0}(\Omega)$.
It is known that the Gaussian kernel $k_{{\rm G}}$ is $C_{0}$-universal on $\lR^{d}$ \cite{SFL11}.
When $k$ is $C_{0}$-universal, the vector $E_k(\mu)$ in the RKHS uniquely determines the finite signed measure $\mu$, and thus serves as a representation of $\mu$. We summarize the property as follows:
\begin{prop}[\cite{SFL11}]
\label{prop:C0_distance}
Let $\Omega$ be a locally compact Hausdorff space.
If $k$ is $C_{0}$-universal on $\Omega$, the mapping $E_{k}$ is injective. Thus,
\[
d_{k}(\mu,\nu)=\norm{E_{k}(\mu)-E_{k}(\nu)}_{\cH_{k}}
\]
defines a distance on $M_{{\rm b}}(\Omega)$.
\end{prop}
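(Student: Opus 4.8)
The plan is to use linearity of $E_k$ to reduce injectivity to the implication $E_k(\mu)=0 \Rightarrow \mu=0$, and then to bridge the RKHS side and the measure side through the duality $C_0(\Omega)^* \cong M_{\rm b}(\Omega)$ supplied by the Riesz--Markov--Kakutani representation theorem. Once injectivity is established, the claim that $d_k$ is a metric is immediate: $d_k(\mu,\nu)\geq 0$ and symmetry are trivial, the triangle inequality is inherited from $\norm{\cdot}_{\cH_k}$ together with linearity of $E_k$, and $d_k(\mu,\nu)=0$ forces $E_k(\mu)=E_k(\nu)$, hence $\mu=\nu$ by injectivity.

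First I would identify the pairing of RKHS elements against $E_k(\mu)$. Since $E_k(\mu)=\int k(\cdot,x)\,d\mu(x)$ is a Bochner integral and, for a fixed $f\in\cH_k$, the functional $\inn{f}{\cdot}_{\cH_k}$ is bounded and linear, it commutes with the integral; combined with the reproducing property $\inn{f}{k(\cdot,x)}_{\cH_k}=f(x)$ this yields $\inn{f}{E_k(\mu)}_{\cH_k}=\int f(x)\,d\mu(x)$ for every $f\in\cH_k$. Hence $E_k(\mu)=0$ implies $\int f\,d\mu=0$ for all $f\in\cH_k$.

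Next I would promote this from $\cH_k$ to all of $C_0(\Omega)$. The map $g\mapsto\int g\,d\mu$ is a bounded linear functional on $(C_0(\Omega),\norm{\cdot}_\infty)$, with norm at most the total variation $\norm{\mu}_{\rm TV}$; so if it vanishes on the $\norm{\cdot}_\infty$-dense subspace $\cH_k$ (this is precisely where $C_0$-universality enters), it vanishes on all of $C_0(\Omega)$. By the Riesz--Markov--Kakutani theorem, the assignment $M_{\rm b}(\Omega)\to C_0(\Omega)^*$, $\mu\mapsto(g\mapsto\int g\,d\mu)$, is injective, so $\mu=0$. This proves $E_k$ injective, and the metric claim follows as above.

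The only conceptually substantive step is this last one, marrying the functional-analytic density statement ($\cH_k$ dense in $C_0(\Omega)$) with the measure-theoretic duality; the Bochner-integral manipulation and the metric axioms are routine. I expect the points needing genuine care to be: (a) checking that $E_k(\mu)$ is a well-defined $\cH_k$-valued Bochner integral, which uses that $k$ is bounded and measurable so that $\int\norm{k(\cdot,x)}_{\cH_k}\,d|\mu|(x)<\infty$, as already noted in the excerpt; and (b) invoking the precise form of the Riesz representation theorem valid for a locally compact Hausdorff $\Omega$ and finite signed Radon measures, which is exactly why local compactness and the Hausdorff property are hypothesized.
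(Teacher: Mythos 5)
Your proof is correct, and it is essentially the standard duality argument behind the cited result in \cite{SFL11} (the paper itself states this proposition without proof): pair $E_k(\mu)$ against $f\in\cH_k$ via the reproducing property, use $C_0$-universality to extend the vanishing of $\int f\,d\mu$ from $\cH_k$ to $C_0(\Omega)$, and conclude $\mu=0$ by Riesz--Markov--Kakutani. Nothing further is needed.
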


\section{Kernel methods for persistence diagrams}
\label{sec:pdkernel}
We propose a positive definite kernel for persistence diagrams, called the persistence weighted Gaussian kernel (PWGK), to embed the persistence diagrams into an RKHS.
This vectorization of persistence diagrams enables us to apply any kernel methods to persistence diagrams and explicitly control the effect of persistence.
We show the stability theorem with respect to the distance defined by the embedding and discuss the efficient and precise approximate computation of the PWGK.

\subsection{Vectorization of persistence diagrams}
\label{subsec:vectorization}

We propose a method for vectorizing persistence diagrams using the kernel embedding \eqref{eq:E_k} by regarding a persistence diagram as a discrete measure.
In vectorizing persistence diagrams, it is desirable to have flexibility to discount the effect of generators close to the diagonal, since they often tend to be caused by noise.
To this goal, we explain slightly different two ways of embeddings, which turn out to give the same inner product for two persistence diagrams. 

First, for a persistence diagram $D$, we introduce a measure $\mu^{w}_{D}:=\sum_{x \in D} w(x)\dd_{x}$ with a weight $w(x) >0$ for each generator $x \in D$ (Figure \ref{fig:weighted}), where $\dd_x$ is the Dirac delta measure at $x$.
By appropriately choosing $w(x)$, the measure $\mu^{w}_{D}$ can discount the effect of generators close to the diagonal.
A concrete choice of $w(x)$ will be discussed later.
\begin{figure}[htbp]
\begin{center}
\includegraphics[width=0.8\hsize]{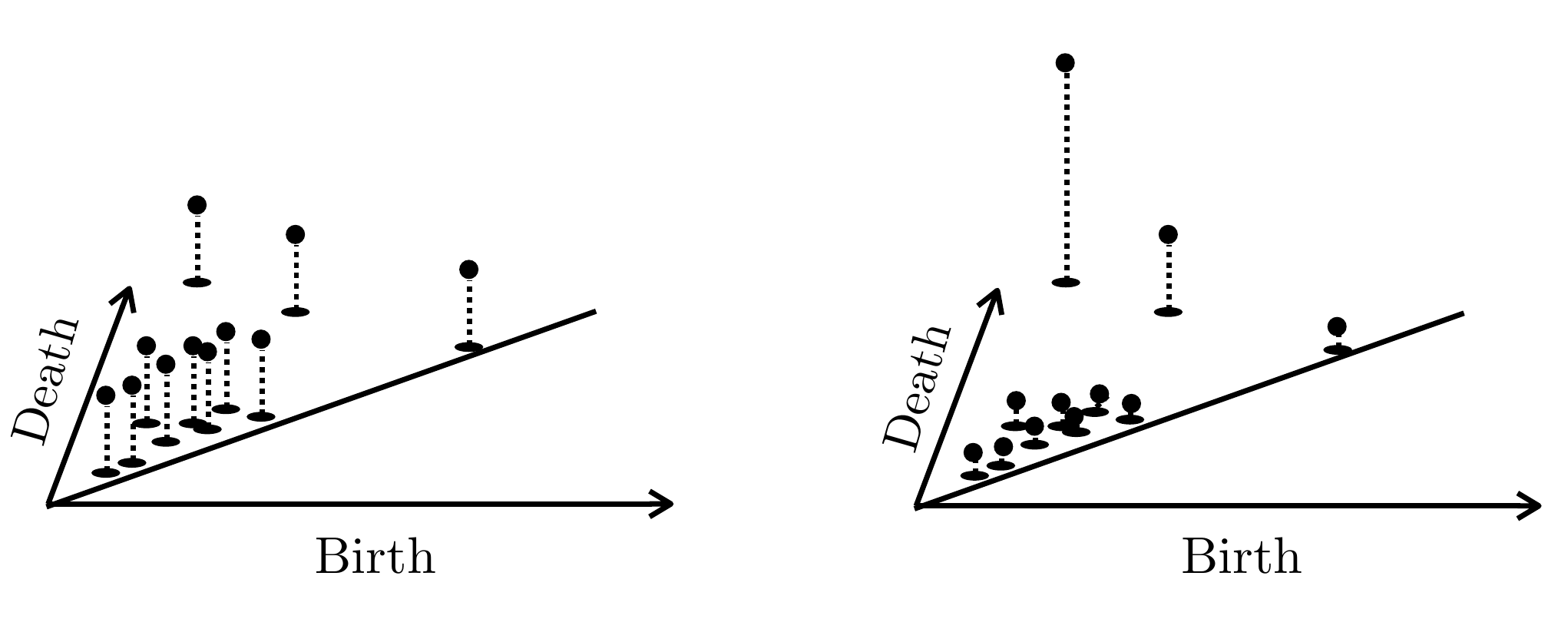}
\caption{Unweighted (left) and weighted (right) measures.}
\vspace{-3mm}
\label{fig:weighted}
\end{center}
\end{figure}

As discussed in Section \ref{subsec:universal}, given a $C_0$-universal kernel $k$ above the diagonal $\lR^{2}_{{\rm ad}}=\{(b,d) \in \lR^{2} \mid b < d\}$, the measure $\mu^{w}_{D}$ can be embedded as an element of the RKHS $\cH_{k}$ via 
\begin{equation}\label{E_k:embed}
\mu^{w}_{D} \mapsto E_{k}(\mu^{w}_{D}):=\sum_{x\in D}w(x)k(\cdot,x).
\end{equation}
From the injectivity in Proposition \ref{prop:C0_distance}, this mapping identifies a persistence diagram; in other words, it does not lose any information about persistence diagrams.
Hence, $E_k(\mu^w_D)\in\cH_k$ serves as a vector representation of the persistence diagram.

As the second construction, let
\[
k^w(x,y):=w(x)w(y)k(x,y)
\]
be the weighted kernel with the same weight function as above.
Then the mapping 
\begin{equation}\label{E_k^w:embed}
E_{k^w}: \mu_D\mapsto \sum_{x\in D}w(x)w(\cdot)k(\cdot,x)\in \cH_{k^w}
\end{equation}
also defines a vectorization of persistence diagrams.
The first construction may be more intuitive by directly weighting a measure, while the second one is also practically useful since all the parameter tuning is reduced to kernel choice.
We note that the inner products introduced by two RKHS vectors \eqref{E_k:embed} and \eqref{E_k^w:embed} are the same:
\[
\langle E_k(\mu_D^w), E_k(\mu_E^w)\rangle_{\cH_k} = \langle E_{k^w}(\mu_D), E_{k^w}(\mu_E)\rangle_{\cH_{k^w}}.
\]
In addition, these two RKHS vectors \eqref{E_k:embed} and \eqref{E_k^w:embed} are essentially equivalent, as seen from the next proposition:

\begin{prop}\label{prop:iso}
Let $k$ be $C_{0}$-universal on $\lR^{2}_{{\rm ad}}$ and $w$ be a positive function on $\lR^{2}_{{\rm ad}}$.
Then the following mapping
\[
\cH_k\to \cH_{k^w}, \quad f\mapsto wf
\]
defines an isomorphism between the RKHSs.  Under this isomorphism, $E_k(\mu_D^w)$ and $E_{k^w}(\mu_D)$ are identified. 
\end{prop}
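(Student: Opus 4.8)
The plan is to realize the map $T\colon f\mapsto wf$ as an isometric isomorphism by identifying its image, equipped with the transported inner product, as a reproducing kernel Hilbert space whose kernel is exactly $k^w$, and then appealing to the uniqueness of the RKHS attached to a positive definite kernel (Moore--Aronszajn). First I would note that $k^w$ is positive definite: for points $x_1,\dots,x_n$, the Gram matrix $(k^w(x_i,x_j))$ equals $\Lambda(k(x_i,x_j))\Lambda$ with $\Lambda=\mathrm{diag}(w(x_1),\dots,w(x_n))$, a congruence of a nonnegative definite matrix, hence nonnegative definite. So $\cH_{k^w}$ is well-defined, and it makes sense to speak of "the" isomorphism onto it.

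Next, since $w>0$ pointwise, $T$ is linear and injective on $\cH_k$. Put $\cG:=T(\cH_k)=\{\,wf\mid f\in\cH_k\,\}$ and define $\langle wf,wg\rangle_{\cG}:=\langle f,g\rangle_{\cH_k}$; this is well posed by injectivity of $T$, and it turns $T\colon\cH_k\to\cG$ into a surjective linear isometry, so $\cG$ is a Hilbert space of functions on $\lR^2_{\mathrm{ad}}$. I would then check the two RKHS axioms for $\cG$ with candidate kernel $k^w$. For fixed $y$ we have $w(y)k(\cdot,y)\in\cH_k$, whence $k^w(\cdot,y)=w(\cdot)\,w(y)\,k(\cdot,y)=T\bigl(w(y)k(\cdot,y)\bigr)\in\cG$; and for any $F=wf\in\cG$,
\[
\langle F,k^w(\cdot,y)\rangle_{\cG}=\bigl\langle wf,\,T(w(y)k(\cdot,y))\bigr\rangle_{\cG}=w(y)\langle f,k(\cdot,y)\rangle_{\cH_k}=w(y)f(y)=F(y),
\]
which is the reproducing property. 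By uniqueness of the RKHS of a positive definite kernel, $\cG=\cH_{k^w}$ with identical norms, i.e.\ $T$ is the asserted isometric isomorphism. (I would also remark that $C_0$-universality of $k$ is not needed here; only $w>0$ is used.)

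Finally, for the identification of the two embeddings: for a finite persistence diagram $D$ the vector $E_k(\mu^w_D)=\sum_{x\in D}w(x)k(\cdot,x)$ is a finite linear combination in $\cH_k$, so applying $T$ term by term gives $T\bigl(E_k(\mu^w_D)\bigr)=\sum_{x\in D}w(x)\,w(\cdot)\,k(\cdot,x)=E_{k^w}(\mu_D)$. Hence the two RKHS vectors correspond under the isomorphism; in particular the inner products $\langle E_k(\mu^w_D),E_k(\mu^w_E)\rangle_{\cH_k}$ and $\langle E_{k^w}(\mu_D),E_{k^w}(\mu_E)\rangle_{\cH_{k^w}}$ coincide, as claimed in the text preceding the proposition.

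\textbf{Main obstacle.} There is nothing deep here; the only point requiring care is the middle step, namely confirming that the transported inner-product space $\cG$ is genuinely a Hilbert \emph{function} space with continuous point evaluations before one is entitled to speak of its reproducing kernel. Both facts drop out immediately from the observation that $T$ is a bijective isometry onto $\cG$, so the argument is essentially bookkeeping; one should just not forget the preliminary (trivial) verification that $k^w$ is positive definite so that $\cH_{k^w}$ exists in the first place.
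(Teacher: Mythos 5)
Your proposal is correct and follows essentially the same route as the paper's proof: transport the inner product to $\{wf \mid f\in\cH_k\}$, verify the reproducing property of $k^w$ on that space, invoke uniqueness of the RKHS, and identify the two embeddings termwise. Your added remarks (the explicit positive-definiteness check for $k^w$ and the observation that $C_0$-universality is not actually needed here) are accurate but do not change the argument.
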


\begin{proof}
Let $\tilde{\cH}:=\{wf:\lR^{2}_{{\rm ad}}\to\lR\mid f\in \cH_k\}$ and define its inner product by
\[
\langle wf,wg\rangle_{\tilde{\cH}}:=\langle f, g\rangle_{\cH_k}.
\]
Then, it is easy to see that $\tilde{\cH}$ is a Hilbert space and the mapping $f\mapsto wf$ gives an isomorphism between $\tilde{\cH}$ and $\cH_{k}$ of the Hilbert spaces.
In fact, we can show that $\tilde{\cH}$ is the same as $\cH_{k^w}$.
To see this, it is sufficient to check that $k^w$ is a reproducing kernel of $\tilde{\cH}$ from the uniqueness property of a reproducing kernel for an RKHS.
The reproducing property is proven from 
\[
\langle wf, k^w(\cdot,x)\rangle_{\tilde{\cH}}=\langle f,w(x)k(\cdot,x)\rangle_{\cH_k} = w(x)f(x) = (wf)(x).
\]
The second assertion is obvious from Equations \eqref{E_k:embed} and \eqref{E_k^w:embed}.
\end{proof}

\subsection{Stability with respect to the kernel embedding}
\label{subsec:stability}

Given a data set $X$, we compute the persistence diagram $D_{q}(X)$ and vectorize it as an element $E_{k}(\mu_{D_{q}(X)}^{w})$ of the RKHS.
Then, for practical applications, this map $X \mapsto E_{k}(\mu_{D_{q}(X)}^{w})$ should be stable with respect to perturbations to the data as discussed in Section \ref{sec:bottleneck_stability}. 

Let $D$ and $E$ be persistence diagrams and $\gamma:D \cup \DD \ra E \cup \DD$ be any multi-bijection. Here, we partition $D$ (resp. $\DD$) into $D_{1}$ and $D_{2}$ (resp. $\DD_{1}$ and $\DD_{2}$) such as 
\[
\gamma(D_{1}) \subset \lR^{2}_{{\rm ad}}, \ \gamma(D_{2}) \subset \DD, \ \gamma(\DD_{1}) \subset \lR^{2}_{{\rm ad}}, \ \gamma(\DD_{2}) \subset \DD.
\]
Then $D_{1} \cup \DD_{1}$ and $E$ are bijective under $\gamma$.
Now, let a weight function $w$ be zero on the diagonal $\DD$.
Then, the norm of the difference between RKHS vectors is calculated as follows:
\begin{align*}
&\dk{k}{w}{D}{E} \nonumber \\
&=\norm{ \sum_{x \in D} w(x)k(\cdot,x) -\sum_{y \in E} w(y) k(\cdot,y)}_{\cH_{k}} \nonumber \\
&=\norm{ \sum_{x \in D} w(x)k(\cdot,x) -\sum_{x \in D_{1} \cup \DD_{1}} w(\gamma(x)) k(\cdot,\gamma(x))}_{\cH_{k}} \nonumber \\
&=\norm{ \sum_{x \in D \cup \DD_{1}} \biggl(w(x)k(\cdot,x)-w(\gamma(x))k(\cdot,\gamma(x)) \biggr)   + \sum_{x \in D_{2}} w(\gamma(x))k(\cdot,\gamma(x))}_{\cH_{k}} \nonumber \\
& = \norm{ \sum_{x \in D \cup \DD_{1} } \biggl(w(x)k(\cdot,x)-w(\gamma(x))k(\cdot,\gamma(x))  \biggr)  }_{\cH_{k}} \nonumber \\
& = \norm{ \sum_{x \in D  } w(x) \biggl(k(\cdot,x)-k(\cdot,\gamma(x))  \biggr)  +  \sum_{x \in D \cup \DD_{1} } \biggl(w(x)-w(\gamma(x)) \biggr) k(\cdot,\gamma(x))    }_{\cH_{k}} \nonumber \\
&\leq \sum_{x \in D} w(x) \norm{k(\cdot,x)-k(\cdot,\gamma(x))}_{\cH_{k}}+ \sum_{x \in D \cup \DD_{1}} \abs{w(x)-w(\gamma(x))} \norm{k(\cdot,\gamma(x))}_{\cH_{k}}.
\end{align*}

In the sequel, we consider the Gaussian kernel $k_{{\rm G}}(x,y)=e^{-\frac{\norm{x-y}^{2}}{2 \sigma^{2}}} \ (\sigma>0)$ for a $C_{0}$-universal kernel.
Since $\norm{k_{{\rm G}}(\cdot,x)-k_{{\rm G}}(\cdot,y)}_{\cH_{G}} \leq \frac{\sqrt{2}}{\sigma} \norm{x-y}_{\infty}$ (Lemma \ref{lemm:Lip_k} in Appendix \ref{sec:stability}) and $\norm{k_{{\rm G}}(\cdot,x)}_{\cH_{k_{{\rm G}}}}=\sqrt{k_{{\rm G}}(x,x)} \equiv 1$ for any $x,y \in \lR^{2}$, we have 
\begin{align}
\dk{k_{{\rm G}}}{w}{D}{E} \leq  \frac{\sqrt{2}}{\sigma}\sum_{x \in D} w(x) \norm{x-\gamma(x)}_{\infty}+ \sum_{x \in D \cup \DD_{1}} \abs{w(x)-w(\gamma(x))}.  \label{eq:dk_halfway}
\end{align}

In this paper, we propose to use a weight function 
\[
w_{{\rm arc}}(x) = \arctan (C \pers(x)^{p}) ~~~ (C>0, \ p \in \lZ_{>0}).
\]
This is a bounded and increasing function of $\pers(x)$.
The corresponding positive definite kernel is
\begin{equation}
k_{{\rm PWG}}(x,y)=w_{{\rm arc}}(x)w_{{\rm arc}}(y)e^{-\frac{\norm{x-y}^{2}}{2 \sigma^{2}}}.
\end{equation}
We call it {\em persistence weighted Gaussian kernel} (PWGK).
This function $w_{{\rm arc}}$ gives a small (resp. large) weight on a noisy (resp. essential) generator. 
In addition, by appropriately adjusting the parameters $C$ and $p$ in $w_{{\rm arc}}$, we can control the effect of the persistence.  
Furthermore, we show that the PWGK has the following property: 

\begin{prop}
\label{prop:general_stability}
Let $p > 2$, and $D$ and $E$ be finite persistence diagrams whose degree-$(p-1)$ total persistence are bounded from above.
Then,
\begin{align*}
\dk{k_{{\rm G}}}{w_{{\rm arc}}}{D}{E} \leq L(D,E;C,p,\sigma) d_{{\rm B}}(D,E),
\end{align*}
where $L(D,E;C,p,\sigma)$ is a constant bounded from above by
\begin{align*}
\biggl\{ \frac{\sqrt{2}}{\sigma} \Pers_{p}(D)+ 2p \biggl( \Pers_{p-1}(D) + \Pers_{p-1}(E) \biggr)  \biggr\} C .
\end{align*}
\end{prop}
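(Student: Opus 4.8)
The plan is to bound the two sums appearing in the already-derived inequality \eqref{eq:dk_halfway} separately, using the specific form $w_{{\rm arc}}(x) = \arctan(C \pers(x)^p)$ and choosing $\gamma$ to be an optimal (or near-optimal) multi-bijection realizing the bottleneck distance $d_{{\rm B}}(D,E)$. Write $\ee := d_{{\rm B}}(D,E)$, so that $\norm{x - \gamma(x)}_\infty \leq \ee$ for all $x \in D \cup \DD$. I will use two elementary facts about $\arctan$: first, $\arctan(t) \leq t$ for $t \geq 0$, which gives $w_{{\rm arc}}(x) \leq C\pers(x)^p$; second, $\arctan$ is $1$-Lipschitz, so $\abs{w_{{\rm arc}}(x) - w_{{\rm arc}}(\gamma(x))} \leq C\abs{\pers(x)^p - \pers(\gamma(x))^p}$. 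Note also that $w_{{\rm arc}}$ vanishes on $\DD$ since $\pers = 0$ there, which is the hypothesis used to derive \eqref{eq:dk_halfway}.

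For the first sum, $\frac{\sqrt{2}}{\sigma}\sum_{x \in D} w_{{\rm arc}}(x)\norm{x-\gamma(x)}_\infty \leq \frac{\sqrt{2}}{\sigma}\sum_{x\in D} C\pers(x)^p \cdot \ee = \frac{\sqrt{2}}{\sigma} C\, \Pers_p(D)\, \ee$, which already matches the first term in the claimed bound on $L$. For the second sum, I need to control $\sum_{x \in D \cup \DD_1} \abs{\pers(x)^p - \pers(\gamma(x))^p}$. The key observation is that since $\norm{x - \gamma(x)}_\infty \leq \ee$, the persistences differ by at most $2\ee$: $\abs{\pers(x) - \pers(\gamma(x))} \leq 2\norm{x-\gamma(x)}_\infty \leq 2\ee$. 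Then I apply the mean value theorem to $t \mapsto t^p$: $\abs{\pers(x)^p - \pers(\gamma(x))^p} \leq p\, \xi^{p-1}\abs{\pers(x) - \pers(\gamma(x))} \leq 2p\, \xi^{p-1}\ee$ where $\xi$ is between $\pers(x)$ and $\pers(\gamma(x))$, hence $\xi^{p-1} \leq \max(\pers(x), \pers(\gamma(x)))^{p-1} \leq \pers(x)^{p-1} + \pers(\gamma(x))^{p-1}$. Summing over $x \in D \cup \DD_1$ and using that $\gamma$ restricted to $D_1 \cup \DD_1$ is a bijection onto $E$ (with the $D_2$ terms contributing $\pers(\gamma(x)) = 0$ since $\gamma(D_2) \subset \DD$, and the $\DD_1$ terms having $\pers(x) = 0$), the total is bounded by $2p\ee\bigl(\Pers_{p-1}(D) + \Pers_{p-1}(E)\bigr) C$ after carefully matching up which diagram each persistence term belongs to. Combining both sums and factoring out $\ee = d_{{\rm B}}(D,E)$ yields the stated bound on $L(D,E;C,p,\sigma)$.

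The main obstacle I anticipate is the bookkeeping in the second sum: the index set $D \cup \DD_1$ mixes generators of $D$ with diagonal points, and $\gamma$ maps this bijectively onto $E$, so I must be careful to attribute each term $\pers(x)^{p-1}$ to $\Pers_{p-1}(D)$ and each term $\pers(\gamma(x))^{p-1}$ to $\Pers_{p-1}(E)$ without double-counting, and to verify that the points of $D_2$ (which are \emph{not} in the domain of the bijection onto $E$) still satisfy $\abs{w_{{\rm arc}}(x) - w_{{\rm arc}}(\gamma(x))} \leq C \pers(x)^{p-1} \cdot 2p\ee$ or similar — but these contribute only to $\Pers_{p-1}(D)$, which is fine. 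A secondary point is that the hypothesis $p > 2$ is needed so that $\Pers_{p-1}$ is meaningful in the Wasserstein-stability sense (Proposition \ref{prop:wasserstein_stability}) and so that the degree-$(p-1)$ total persistences are the natural quantities to assume bounded; the actual inequalities above work for any integer $p \geq 1$, but the statement is phrased with $p > 2$ to align with the finiteness assumptions. Finally, since $\gamma$ is only an infimizing sequence rather than a genuine minimizer in general, I would either invoke that for finite persistence diagrams an optimal multi-bijection exists, or pass to a limit over near-optimal $\gamma$; either way the constant $L$ is defined as the infimum over $\gamma$ of the explicit expression, and the displayed formula is an upper bound for it.
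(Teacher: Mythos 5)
Your proposal is correct and follows essentially the same route as the paper: starting from the inequality \eqref{eq:dk_halfway}, bounding the first sum via $w_{{\rm arc}}(x)\leq C\pers(x)^{p}$, and bounding the second sum by a Lipschitz-type estimate $\abs{w_{{\rm arc}}(x)-w_{{\rm arc}}(\gamma(x))}\leq 2pC\max\{\pers(x)^{p-1},\pers(\gamma(x))^{p-1}\}\norm{x-\gamma(x)}_{\infty}$ (which the paper isolates as Lemma \ref{lemm:w_continuous}, proved by the algebraic factorization of $s^{p}-t^{p}$ rather than your mean value theorem, an immaterial difference), followed by the same bookkeeping identifying $\Pers_{p-1}(\gamma(D\cup\DD_{1}))$ with $\Pers_{p-1}(E)$. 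Your remarks on the role of $p>2$ and on the existence of an optimal multi-bijection are consistent with the paper's treatment.
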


\begin{proof}
Let $d_{{\rm B}}(D,E) = \ee$ and $\gamma:D \cup \DD \ra E \cup \DD$ be a multi-bijection achieving the bottleneck distance, i.e., $\sup_{x \in D \cup \DD} \norm{x- \gamma(x)}_{\infty} = \ee$. We have already observed 
\begin{align*}
\dk{k_{{\rm G}}}{w_{{\rm arc}}}{D}{E} \leq  \frac{\sqrt{2}}{\sigma}\sum_{x \in D} w_{{\rm arc}}(x) \norm{x-\gamma(x)}_{\infty}+ \sum_{x \in D \cup \DD_{1}} \abs{w_{{\rm arc}}(x)-w_{{\rm arc}}(\gamma(x))}
\end{align*}
in Equation \eqref{eq:dk_halfway}. 
From Lemma \ref{lemm:w_continuous} in Appendix \ref{sec:stability}, the right-hand side of the above inequality is bounded from above by
\begin{align}
&  \frac{\sqrt{2}}{\sigma} \sum_{x \in D} w_{{\rm arc}}(x) \norm{x-\gamma(x)}_{\infty} + 2pC \sum_{x \in D \cup \DD_{1}}  \max \{\pers(x)^{p-1}, \pers(\gamma(x))^{p-1}\} \norm{x-\gamma(x)}_{\infty} \nonumber \\
& \leq  \frac{\sqrt{2}}{\sigma} C \ee \sum_{x \in D} \pers(x)^{p}  + 2pC \ee \sum_{x \in D \cup \DD_{1}} \max \{ \pers(x)^{p-1},    \pers(\gamma(x))^{p-1} \}   \label{eq:warc_bound} \\
& \leq \biggl\{ \frac{\sqrt{2}}{\sigma} \Pers_{p}(D)+ 2p \biggl( \Pers_{p-1}(D) + \Pers_{p-1}(\gamma(D \cup \DD_{1})) \biggr) \biggr\} C \ee \nonumber \\
&= \biggl\{ \frac{\sqrt{2}}{\sigma} \Pers_{p}(D)+ 2p  \biggl( \Pers_{p-1}(D) + \Pers_{p-1}(E) \biggr)  \biggr\} C \ee \label{eq:total_last}.
\end{align}
We have used the fact $w_{\rm arc}(x)\leq C \pers(x)^p$ in \eqref{eq:warc_bound} and $\Pers_{p-1}(\gamma(D \cup \DD_{1})) = \Pers_{p-1}(E)$ in \eqref{eq:total_last}.
Thus, if both degree-$(p-1)$ total persistences of $D$ and $E$ are bounded from above, since degree-$p$ total persistence of $D$ is also bounded from above from Proposition \ref{prop:persistence_inequality}, the coefficient of $\ee$ appearing in \eqref{eq:total_last} is bounded from above.
\end{proof}

The constant $L(D,E;C,p,\sigma)$ is dependent on $D$ and $E$, and hence we cannot say that the map $D \mapsto E_{k_{{\rm G}}}(\mu^{w_{\rm arc}}_{D})$ is continuous.
In the case of persistence diagrams obtained from ball model filtrations, from Lemma \ref{lem:point_total}, the PWGK satisfies the following stability property.
Recall that $D_{q}(X)$ denotes the persistence diagram to the ball model for $X$:

\begin{thm}
\label{thm:kernel_stability}
Let $M$ be a triangulable compact subspace in $\lR^{d}$, $X,Y \subset M$ be finite subsets, and $p>d+1$.
Then,
\[
\dk{k_{{\rm G}}}{w_{{\rm arc}}}{D_{q}(X)}{D_{q}(Y)} \leq L(M,d;C,p,\sigma) d_{{\rm B}}(D_{q}(X),D_{q}(Y)),
\]
where $L(M,d;C,p,\sigma)$ is a constant depending on $M,d,C,p,\sigma$.
\end{thm}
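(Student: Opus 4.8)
The plan is to combine Proposition~\ref{prop:general_stability} with the uniform bound on total persistence provided by Lemma~\ref{lem:point_total}. Recall that Proposition~\ref{prop:general_stability} gives, for any two finite persistence diagrams $D$ and $E$ with bounded degree-$(p-1)$ total persistence,
\[
\dk{k_{{\rm G}}}{w_{{\rm arc}}}{D}{E} \leq \biggl\{ \frac{\sqrt{2}}{\sigma} \Pers_{p}(D) + 2p \bigl( \Pers_{p-1}(D) + \Pers_{p-1}(E) \bigr) \biggr\} C \cdot d_{{\rm B}}(D,E).
\]
So the only task is to replace the diagram-dependent quantities $\Pers_{p}(D)$, $\Pers_{p-1}(D)$, $\Pers_{p-1}(E)$ by quantities depending only on $M$ and $d$, and to check that the hypotheses of Proposition~\ref{prop:general_stability} are met.

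First I would apply Lemma~\ref{lem:point_total} with $D = D_{q}(X)$ and $E = D_{q}(Y)$. Since $X, Y \subset M$ and $M$ is a triangulable compact subspace of $\lR^{d}$, the lemma gives, for any exponent $s > d$,
\[
\Pers_{s}(D_{q}(X)) \leq \frac{s}{s-d} C_{M} \diam(M)^{s-d}, \qquad \Pers_{s}(D_{q}(Y)) \leq \frac{s}{s-d} C_{M} \diam(M)^{s-d}.
\]
The hypothesis $p > d+1$ ensures $p > d$ and $p - 1 > d$, so both exponents $s = p$ and $s = p-1$ are admissible; in particular the degree-$(p-1)$ total persistences of $D_{q}(X)$ and $D_{q}(Y)$ are finite, so Proposition~\ref{prop:general_stability} applies. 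Substituting these three bounds into the coefficient above yields
\[
\dk{k_{{\rm G}}}{w_{{\rm arc}}}{D_{q}(X)}{D_{q}(Y)} \leq L(M,d;C,p,\sigma) \, d_{{\rm B}}(D_{q}(X),D_{q}(Y)),
\]
where
\[
L(M,d;C,p,\sigma) := \biggl\{ \frac{\sqrt{2}}{\sigma} \cdot \frac{p}{p-d} C_{M} \diam(M)^{p-d} + 4p \cdot \frac{p-1}{p-1-d} C_{M} \diam(M)^{p-1-d} \biggr\} C.
\]
This constant depends only on $M$ (through $C_{M}$ and $\diam(M)$), on $d$, and on the kernel parameters $C, p, \sigma$, as claimed.

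There is really no serious obstacle here: the theorem is essentially a corollary obtained by feeding the geometric bound of Lemma~\ref{lem:point_total} into the already-proven Proposition~\ref{prop:general_stability}. The only points requiring a moment's care are bookkeeping ones — verifying that $p > d+1$ is exactly what is needed so that both $p$ and $p-1$ exceed $d$ (so that the denominators $p-d$ and $p-1-d$ in Lemma~\ref{lem:point_total} are positive), and making sure the weight function $w_{{\rm arc}}$ indeed vanishes on the diagonal $\DD$ (it does, since $\pers(x) = 0$ there and $\arctan(0) = 0$), which is the standing assumption under which the chain of inequalities leading to \eqref{eq:dk_halfway} and Proposition~\ref{prop:general_stability} was derived. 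If one wants an even cleaner statement, one can absorb the two terms into a single constant times $\max\{\diam(M)^{p-d}, \diam(M)^{p-1-d}\}$, but that is cosmetic.
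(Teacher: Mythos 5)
Your proof is correct and follows essentially the same route as the paper: it feeds the uniform bound of Lemma \ref{lem:point_total} (applied with exponents $p$ and $p-1$, both admissible since $p>d+1$) into the coefficient from Proposition \ref{prop:general_stability}, and the constant you obtain is exactly the paper's $L(M,d;C,p,\sigma)$, merely written in unfactored form.
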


\begin{proof}
For any finite set $X \subset M$, from Lemma \ref{lem:point_total}, there exists a constant $C_{M}>0$ such that
\[
\Pers_{p}(D_{q}(X)) \leq \frac{p}{p-d} C_{M}\diam(M)^{p-d}.
\]
By replacing $D$ and $E$ with $D_{q}(X)$ and $D_{q}(Y)$ in \eqref{eq:total_last}, respectively, we have 
\begin{align*}
& \dk{k_{{\rm G}}}{w_{{\rm arc}}}{D_{q}(X)}{D_{q}(Y)} \\
& \leq \biggl\{ \frac{\sqrt{2}}{\sigma} \Pers_{p}(D_{q}(X))+ 2p \biggl( \Pers_{p-1}(D_{q}(X)) + \Pers_{p-1}(D_{q}(Y)) \biggr)  \biggr\} C d_{{\rm B}}(D_{q}(X),D_{q}(Y))  \\
& \leq \biggl( \frac{\sqrt{2}}{\sigma} \frac{p}{p-d} \diam(M) + \frac{4p(p-1)}{p-1-d}  \biggr) C_{M}\diam(M)^{p-1-d}C d_{{\rm B}}(D_{q}(X),D_{q}(Y))
\end{align*}
Then, $L(M,d;C,p,\sigma):=\biggl( \frac{\sqrt{2}}{\sigma} \frac{p}{p-d} \diam(M) + \frac{4p(p-1)}{p-1-d}  \biggr) C_{M}\diam(M)^{p-1-d}C$ is a constant independent of $X$ and $Y$.
\end{proof}

Let $\cP_{{\rm finite}}(M)$ be the set of finite subsets in a triangulable compact subspace $M \subset \lR^{d}$.
Since the constant $L(M,d;C,p,\sigma)$ is independent of $X$ and $Y$, Proposition \ref{prop:point_stability} and Theorem \ref{thm:kernel_stability} conclude that the map
\[
\cP_{{\rm finite}}(M) \ra \cH_{k_{{\rm G}}} , ~~ X \mapsto E_{k_{{\rm G}}}(\mu_{D_{q}(X)}^{w_{{\rm arc}}})
\]
is Lipschitz continuous.
Note again that this implies a desirable stability property of the PWGK with the ball model: small perturbation of data points in terms of the Hausdorff distance causes only small perturbation of the persistence diagrams in terms of the RKHS distance with the PWGK.

As the most relevant work to the PWGK, the persistence scale-space kernel (PSSK, \cite{RHBK15})\footnote{See Section \ref{subsubsec:pssk}.} provides another kernel method for vectorization of persistence diagrams and its stability result is shown with respect to $1$-Wasserstein distance.
However, to the best of our knowledge, $1$-Wasserstein stability with respect to the Hausdorff distance is not shown, that is, for point sets $X$ and $Y$, $d_{ {\rm W}_{1}}(D_{q}(X),D_{q}(Y))$ is not estimated by $d_{{\rm H}}(X,Y)$ such as Proposition \ref{prop:point_stability} or Corollary \ref{cor:point_wasserstein}.
Furthermore, it is shown \cite{RHBK15} that the PSSK does not satisfy the stability with respect to $p$-Wasserstein distance for $p>1$, including the bottleneck distance $d_{{\rm B}} = d_{{\rm W}_{\infty}}$, and hence it is not ensured that results obtained from the PSSK are stable under perturbation of data points in terms of the Hausdorff distance.
On the other hand, since the PWGK has the desirable stability (Theorem \ref{thm:kernel_stability}), it is one of the advantages of our method over the previous research\footnote{In fact, if we apply Theorem 3 in \cite{RHBK15} to the PWGK directly, it concludes that the PWGK also does not satisfy the bottleneck stability. However, by using Proposition \ref{prop:general_stability}, we can avoid this difficulty, and Theorem  \ref{thm:kernel_stability} holds. For more details, see Appendix \ref{sec:additive}.}.
In addition, by the similar way in \cite{RHBK15}, we show the stability with respect to $1$-Wasserstein distance for our kernel vectorization. 

\begin{prop}
\label{prop:pwgk_wasserstein}
Let $D$ and $E$ be persistence diagrams.
If a weight function $w$ is zero on the diagonal and there exist constants $c_{1},c_{2}>0$ such that 
\[
\abs{w(x)} \leq c_{1}, ~~ \abs{w(x)-w(y)} \leq c_{2} \norm{x-y}_{\infty}
\]
for any $x,y \in \lR^{2}$, then
\[
\dk{k_{{\rm G}}}{w}{D}{E} \leq \pare{ \frac{\sqrt{2}}{\sigma} c_{1}+c_{2}} d_{ {\rm W}_{1}}(D,E).
\]
\end{prop}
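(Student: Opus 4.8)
The plan is to reuse the computation that produced inequality~\eqref{eq:dk_halfway}, replacing the bottleneck-type estimate of the two resulting sums by a $1$-Wasserstein transport cost. The key observation is that the chain of inequalities preceding~\eqref{eq:dk_halfway} was carried out for an \emph{arbitrary} multi-bijection $\gamma:D\cup\DD\to E\cup\DD$ and used only that $w$ vanishes on $\DD$; the sole place positivity of $w$ entered is the triangle inequality $\norm{\sum_{x\in D}w(x)\pare{k(\cdot,x)-k(\cdot,\gamma(x))}}\le\sum_{x\in D}w(x)\norm{k(\cdot,x)-k(\cdot,\gamma(x))}$, which remains valid with $\abs{w(x)}$ in place of $w(x)$. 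Hence, using $\norm{k_{{\rm G}}(\cdot,x)-k_{{\rm G}}(\cdot,y)}_{\cH_{k_{{\rm G}}}}\le\frac{\sqrt2}{\sigma}\norm{x-y}_\infty$ and $\norm{k_{{\rm G}}(\cdot,x)}_{\cH_{k_{{\rm G}}}}=1$ exactly as there, one obtains, for every such $\gamma$,
\[
\dk{k_{{\rm G}}}{w}{D}{E}\le\frac{\sqrt2}{\sigma}\sum_{x\in D}\abs{w(x)}\,\norm{x-\gamma(x)}_\infty+\sum_{x\in D\cup\DD_{1}}\abs{w(x)-w(\gamma(x))}.
\]

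Next I would plug in the two hypotheses on $w$. The bound $\abs{w(x)}\le c_{1}$ controls the first sum by $\frac{\sqrt2}{\sigma}c_{1}\sum_{x\in D}\norm{x-\gamma(x)}_\infty$, and $\abs{w(x)-w(\gamma(x))}\le c_{2}\norm{x-\gamma(x)}_\infty$ controls the second by $c_{2}\sum_{x\in D\cup\DD_{1}}\norm{x-\gamma(x)}_\infty$. Since $D$ and $D\cup\DD_{1}$ are sub-multisets of $D\cup\DD$ and every summand is nonnegative, each of these two sums is at most $\sum_{x\in D\cup\DD}\norm{x-\gamma(x)}_\infty$, so
\[
\dk{k_{{\rm G}}}{w}{D}{E}\le\pare{\frac{\sqrt2}{\sigma}c_{1}+c_{2}}\sum_{x\in D\cup\DD}\norm{x-\gamma(x)}_\infty .
\]
Taking the infimum over all multi-bijections $\gamma$ makes the right-hand side equal $\pare{\frac{\sqrt2}{\sigma}c_{1}+c_{2}}\,d_{{\rm W}_{1}}(D,E)$, which is the assertion.

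I do not expect a genuine obstacle; the only care needed is in re-reading the pre-\eqref{eq:dk_halfway} derivation to confirm that it requires neither positivity nor boundedness of $w$, only $w|_{\DD}=0$, and that it is valid verbatim for an arbitrary $\gamma$. The one step worth a sentence is the passage from the two partial sums to the full transport cost $\sum_{x\in D\cup\DD}\norm{x-\gamma(x)}_\infty$: this follows just from nonnegativity of the terms, so no renormalization of $\gamma$ on the diagonal is needed, and the final infimum is literally the definition of $d_{{\rm W}_1}$. This mirrors the argument of \cite{RHBK15} for the PSSK, the difference being that here the weight function is carried through the estimate.
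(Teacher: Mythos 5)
Your proposal is correct and follows essentially the same route as the paper: both start from the pre-established bound \eqref{eq:dk_halfway} for an arbitrary multi-bijection $\gamma$, insert the hypotheses $\abs{w(x)}\le c_1$ and the Lipschitz bound on $w$, and conclude by taking the infimum over $\gamma$, which yields the $1$-Wasserstein bound. Your extra remarks (that positivity of $w$ is not needed and that the partial sums over $D$ and $D\cup\DD_1$ are dominated by the full transport cost) just make explicit steps the paper leaves implicit.
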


\begin{proof}
From Equation \eqref{eq:dk_halfway}, we have
\begin{align}
\dk{k_{{\rm G}}}{w}{D}{E} 
& \leq \frac{\sqrt{2}}{\sigma}\sum_{x \in D} w(x) \norm{x-\gamma(x)}_{\infty}+ \sum_{x \in D \cup \DD_{1}} \abs{w(x)-w(\gamma(x))} \label{eq:pwgk_inequality} \\
&\leq \frac{\sqrt{2}}{\sigma} c_{1} \sum_{x \in D} \norm{x-\gamma(x)}_{\infty} + c_{2} \sum_{x \in D \cup \DD_{1}} \norm{x-\gamma(x)}_{\infty} \nonumber 
\end{align}
Since this inequality holds for any multi-bijection $\gamma$,  we obtain the $1$-Wasserstein stability.
\end{proof}

The weight function $w_{{\rm arc}}$ is bounded from above by $\frac{\pi}{2}$, and for $p=1$, from Lemma \ref{lemm:w_continuous}, we have
\[
\abs{w_{{\rm arc}}(x)-w_{{\rm arc}}(y)} \leq 2C \norm{x-y}_{\infty} ~~~ (x,y \in \lR^{2}_{{\rm ad}}) .
\]
Therefore, from Proposition \ref{prop:pwgk_wasserstein}, the PWGK also have $1$-Wasserstein stability:
\begin{cor}
Let $p=1$, and $D$ and $E$ be persistence diagrams.
Then
\[
\dk{k_{{\rm G}}}{w_{{\rm arc}}}{D}{E} \leq \pare{ \frac{\pi}{\sqrt{2}\sigma} +2C} d_{ {\rm W}_{1}}(D,E).
\]
\end{cor}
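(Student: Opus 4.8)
The plan is to obtain this corollary as an immediate instance of Proposition \ref{prop:pwgk_wasserstein}, applied to the weight function $w_{{\rm arc}}$ with the exponent $p=1$. Thus the whole task reduces to checking that $w_{{\rm arc}}(x)=\arctan(C\pers(x))$ meets the three hypotheses of that proposition, namely that it vanishes on the diagonal $\DD$, that it is bounded by some constant $c_{1}$, and that it is $\norm{\cdot}_{\infty}$-Lipschitz with some constant $c_{2}$.

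First I would dispose of the diagonal condition: if $x\in\DD$ then $\pers(x)=0$, so $w_{{\rm arc}}(x)=\arctan(0)=0$. For the bound, since $0\leq\arctan(t)<\tfrac{\pi}{2}$ for $t\geq0$, we may take $c_{1}=\tfrac{\pi}{2}$ on all of $\lR^{2}_{{\rm ad}}$ (and on $\DD$). For the Lipschitz estimate I would invoke the specialization of Lemma \ref{lemm:w_continuous} to $p=1$, which is exactly the inequality $\abs{w_{{\rm arc}}(x)-w_{{\rm arc}}(y)}\leq 2C\norm{x-y}_{\infty}$ recorded just above the statement; hence $c_{2}=2C$ works. (Alternatively one could note directly that $\arctan$ is $1$-Lipschitz on $\lR$, that $\pers(b,d)=d-b$ is $2$-Lipschitz in $\norm{\cdot}_{\infty}$, and that multiplication by $C$ scales by $C$, which composes to $2C$.)

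With $c_{1}=\tfrac{\pi}{2}$ and $c_{2}=2C$ in hand, Proposition \ref{prop:pwgk_wasserstein} gives
\[
\dk{k_{{\rm G}}}{w_{{\rm arc}}}{D}{E}\leq\pare{\frac{\sqrt{2}}{\sigma}c_{1}+c_{2}}d_{{\rm W}_{1}}(D,E)=\pare{\frac{\sqrt{2}}{\sigma}\cdot\frac{\pi}{2}+2C}d_{{\rm W}_{1}}(D,E)=\pare{\frac{\pi}{\sqrt{2}\,\sigma}+2C}d_{{\rm W}_{1}}(D,E),
\]
which is the claimed inequality. Since every step is a direct substitution into an already-proved proposition, I do not anticipate any genuine obstacle; the only ingredient that is not completely immediate is the Lipschitz bound for $w_{{\rm arc}}$, and that is precisely what Lemma \ref{lemm:w_continuous} (specialized to $p=1$) supplies. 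The remainder is merely bookkeeping of the constants.
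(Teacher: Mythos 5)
Your proposal is correct and follows essentially the same route as the paper: the paper also derives this corollary by feeding $c_{1}=\tfrac{\pi}{2}$ (boundedness of $\arctan$) and $c_{2}=2C$ (Lemma \ref{lemm:w_continuous} with $p=1$) into Proposition \ref{prop:pwgk_wasserstein}. Your explicit check that $w_{{\rm arc}}$ vanishes on the diagonal is a small point the paper leaves implicit, but otherwise the arguments coincide.
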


For $p>1$, we have
\begin{align*}
\sum_{x \in D \cup \DD_{1}} \abs{w_{{\rm arc}}(x)-w_{{\rm arc}}(\gamma(x))}
& \leq 2pC \sum_{x \in D \cup \DD_{1}} \max \{ \pers(x)^{p-1}, \pers(\gamma(x))^{p-1} \} \norm{x-\gamma(x)}_{\infty} \\
& \leq 2pC \biggl( \Pers_{p-1}(D) + \Pers_{p-1}(E) \biggr) \sum_{x \in D \cup \DD_{1}}  \norm{x-\gamma(x)}_{\infty},
\end{align*}
from Lemma \ref{lemm:w_continuous} and, hence, from Equation \eqref{eq:pwgk_inequality}, we have
\[
\dk{k_{{\rm G}}}{w_{{\rm arc}}}{D}{E} \leq \biggl\{ \frac{\pi}{\sqrt{2}\sigma} +2pC \biggl( \Pers_{p-1}(D) + \Pers_{p-1}(E) \biggr) \biggr\} d_{ {\rm W}_{1}}(D,E).
\]
Although the above inequality does not directly imply the Lipschitz continuity of the PWGK for $p>1$ with respect to $1$-Wasserstein distance, combining with Lemma \ref{lem:point_total}, we have the following $1$-Wasserstein stability:
\begin{cor}
Let $M$ be a triangulable compact subspace in $\lR^{d}$, $X,Y \subset M$ be finite subsets, and $p>d+1$.
Then,
\[
\dk{k_{{\rm G}}}{w_{{\rm arc}}}{D_{q}(X)}{D_{q}(Y)} \leq \pare{ \frac{\pi}{\sqrt{2}\sigma} + \frac{4p(p-1)}{p-1-d} C_{M}\diam(M)^{p-1-d} C}  d_{ {\rm W}_{1}}(D_{q}(X),D_{q}(Y)),
\]
for some constant $C_{M} > 0$.
\end{cor}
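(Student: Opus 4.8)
The plan is to start from the intermediate $1$-Wasserstein estimate established just above the corollary, namely
\[
\dk{k_{{\rm G}}}{w_{{\rm arc}}}{D}{E} \leq \biggl\{ \frac{\pi}{\sqrt{2}\sigma} + 2pC \biggl( \Pers_{p-1}(D) + \Pers_{p-1}(E) \biggr) \biggr\} d_{{\rm W}_{1}}(D,E),
\]
which holds for any finite persistence diagrams $D$ and $E$ and any $p>1$, and then to specialize to $D = D_{q}(X)$ and $E = D_{q}(Y)$ for finite subsets $X,Y \subset M$. The only work remaining is to strip the dependence of the coefficient on $X$ and $Y$ by bounding the degree-$(p-1)$ total persistences uniformly over $\cP_{{\rm finite}}(M)$.

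The key step is to invoke Lemma \ref{lem:point_total} with the exponent $p-1$ in place of $p$; this is legitimate precisely because the hypothesis $p > d+1$ yields $p-1 > d$. The lemma then produces a single constant $C_{M}>0$, depending only on $M$ and not on the particular finite subset, with
\[
\Pers_{p-1}(D_{q}(X)) \leq \frac{p-1}{p-1-d} C_{M} \diam(M)^{p-1-d}, \qquad \Pers_{p-1}(D_{q}(Y)) \leq \frac{p-1}{p-1-d} C_{M} \diam(M)^{p-1-d}.
\]
Adding these and substituting into the intermediate estimate gives
\[
2pC\bigl(\Pers_{p-1}(D_{q}(X)) + \Pers_{p-1}(D_{q}(Y))\bigr) \leq \frac{4p(p-1)}{p-1-d} C_{M}\diam(M)^{p-1-d} C,
\]
so the coefficient of $d_{{\rm W}_{1}}(D_{q}(X),D_{q}(Y))$ becomes $\frac{\pi}{\sqrt{2}\sigma} + \frac{4p(p-1)}{p-1-d} C_{M}\diam(M)^{p-1-d} C$, which is exactly the asserted constant and is independent of $X$ and $Y$.

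There is no serious obstacle here: the argument is a direct combination of the already-established $1$-Wasserstein bound with the uniform total-persistence bound of Lemma \ref{lem:point_total}, entirely parallel to the reduction carried out for the bottleneck distance in the proof of Theorem \ref{thm:kernel_stability}. The only points that must be checked with care are that the constant $C_{M}$ delivered by Lemma \ref{lem:point_total} is genuinely the same for $X$ and for $Y$ (it depends on $M$ alone), so that the final coefficient is uniform over $\cP_{{\rm finite}}(M)$, and that the index shift from $p$ to $p-1$ keeps us inside the range of validity of that lemma, which is exactly why the hypothesis reads $p > d+1$ rather than $p > d$.
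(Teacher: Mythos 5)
Your argument is correct and matches the paper's intended derivation: the corollary is obtained exactly by substituting the uniform bound of Lemma \ref{lem:point_total} (applied with exponent $p-1$, valid since $p>d+1$ gives $p-1>d$) into the $1$-Wasserstein estimate stated just before the corollary, with the same constant $C_{M}$ for both $X$ and $Y$ because it depends only on $M$. Nothing further is needed.
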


\subsection{Kernel methods on RKHS}

Once persistence diagrams are represented as RKHS vectors, we can apply any kernel methods to those vectors by defining a kernel over the vector representation.
In a similar way to the standard vectors, the simplest choice is to consider the inner product as a linear kernel
\begin{align}
K_{{\rm L}}(D,E;k,w):= \inn{\Ek{k}{w}{D}}{\Ek{k}{w}{E}}_{\cH_{k}} =\sum_{x \in D} \sum_{y \in E} w(x)w(y)k(x,y)  \label{eq:linear_rkhs}
\end{align}
on the RKHS and we call it the {\em $(k,w)$-linear kernel}.  

If $k$ is a $C_{0}$-universal kernel and $w$ is strictly positive on $\lR^{2}_{{\rm ad}}$, from Proposition \ref{prop:C0_distance}, $\dk{k}{w}{D}{E}$ defines a distance on the persistence diagrams and it is computed as 
\[
\sqrt{K_{{\rm L}}(D,D;k,w)+K_{{\rm L}}(E,E;k,w)-2K_{{\rm L}}(D,E;k,w)}. 
\]
Then, we can also consider a nonlinear kernel
\begin{align}
K_{{\rm G}}(D, E; k,w) = \exp \pare{- \frac{1 }{ 2 \tau^{2} } \dk{k}{w}{D}{E}^{2} } \ (\tau >0)  \label{eq:gauss_rkhs}
\end{align}
on the RKHS and we call it the {\em $(k,w)$-Gaussian kernel}.

In this paper, if there is no confusion, we also refer to the $(k_{{\rm G}},w_{{\rm arc}})$-Gaussian kernel as the PWGK.
\cite{MFDS12} observed better performance with nonlinear kernels for some complex tasks and this is one of the reasons that we will use the Gaussian kernel on the RKHS.

\subsection{Computation of Gram matrix}
\label{subsec:calculation}

Let $\cD=\{D_{\ell} \mid \ell=1,\ldots,n\}$ be a collection of persistence diagrams.
In many practical applications, the number of generators in a persistence diagram can be large, while $n$ is often relatively small;
in Section \ref{subsec:glass}, for example, the number of generators is about 30000, while $n=80$.

If the persistence diagrams contain at most $m$ points, each element of the Gram matrix $(K_{{\rm G}}(D_{i},D_{j};k_{{\rm G}},w))_{i,j=1,\ldots,n}$ involves $O(m^2)$ evaluations of $e^{-\frac{\norm{x-y}^{2}}{2\sigma^{2}}}$, resulting the complexity $O(m^{2}n^{2})$ for obtaining the Gram matrix.
Hence, reducing computational cost with respect to $m$ is an important issue.

We solve this computational issue by using the random Fourier features \cite{RR07}.
To be more precise, let $z_{1},\ldots,z_{M_{{\rm rff}}}$ be random variables from the $2$-dimensional normal distribution $N( (0,0), \sigma^{-2} I)$ where $I$ is the identity matrix.  This method approximates $e^{-\frac{\norm{x-y}^{2}}{2\sigma^{2}}}$ by $\frac{1}{M_{{\rm rff}}}\sum_{a=1}^{M_{{\rm rff}}} e^{\sqrt{-1}z_{a}x} (e^{\sqrt{-1}z_{a}y})^{*}$, where $*$ denotes the complex conjugate.
Then,  $\sum_{x \in D_{i}} \sum_{y \in D_{j}} w(x)w(y)k_{{\rm G}}(x,y)$ is approximated by $\frac{1}{M_{{\rm rff}}}\sum_{a=1}^{M_{{\rm rff}}}B^{a}_{i} (B^{a}_{j})^{*}$, where $B^{a}_{\ell}=\sum_{x \in D_{\ell}} w(x) e^{\sqrt{-1}z_{a}x}$.
As a result, the computational complexity of the approximated Gram matrix is $O(mnM_{{\rm rff}}+n^2M_{{\rm rff}})$, which is linear to $m$.

We note that the approximation by the random Fourier features can be sensitive to the choice of $\sigma$.
If $\sigma$ is much smaller than $\norm{x-y}$, the relative error can be large.  
For example, in the case of $x=(1,2),y=(1,2.1)$ and $\sigma=0.01$, $e^{-\frac{\norm{x-y}^{2}}{2\sigma^{2}}}$ is about $10^{-22}$ while we observed the approximated value can be about $10^{-3}$ with $M_{{\rm rff}}=10^{3}$.
As a whole, these $m^{2}$ errors may cause a critical error to the statistical analysis.
Moreover, if $\sigma$ is largely deviated from the ensemble $\norm{x-y}$ for $x \in D_{i},y \in D_{j}$, then most values $e^{-\frac{\norm{x-y}^{2}}{2 \sigma^{2}}}$ become close to $0$ or $1$.

In order to obtain a good approximation and extract meaningful values, the choice of parameters is important.  For supervised learning such as SVM, we use the cross-validation (CV) approach.  For unsupervised case, we follow the heuristics proposed in \cite{GFTSSS07} and set 
\[
\sigma =\median \{ \sigma(D_{\ell}) \mid \ell=1,\ldots,n\}, \mbox{ where } \sigma(D)=\median \{ \norm{x_{i}-x_{j}} \mid x_{i},x_{j} \in D, \ i<j \},
\]
so that $\sigma$ takes close values to many $\norm{x-y}$.
For the parameter $C$, we also set 
\[
C=( \median \{\pers(D_{\ell}) \mid \ell=1,\ldots,n\} )^{-p},  \mbox{ where } \pers(D)=\median \{ \pers(x_{i}) \mid x_{i} \in D \}.
\]
Similarly, the parameter $\tau$ in the $(k,w)$-Gaussian kernel is defined by 
\begin{align}
\median \rl{ \dk{k}{w}{D_{i}}{D_{j}}  \lmid 1 \leq i < j \leq n}. \label{eq:tau}
\end{align}

\section{Experiments}
\label{sec:experiment}

In this section, we apply the kernel method of the PWGK to synthesized and real data, and compare the performance between the PWGK and other statistical methods of persistence diagrams. 
All persistence diagrams are obtained from the ball model filtrations and computed by CGAL \cite{DLY15} and PHAT \cite{BKRW14}. 
With respect to the dimension of persistence diagrams, we use $2$-dimensional persistence diagrams in Section \ref{subsec:granular} and $1$-dimensional ones in other parts.

\subsection{Comparison to previous works}
\label{subsec:comparison}

\subsubsection{Persistence scale-space kernel}
\label{subsubsec:pssk}
The most relevant work to our method is the one proposed by \cite{RHBK15}.
Inspired by the heat equation, they propose a positive definite kernel called {\em persistence scale-space kernel} (PSSK) $K_{{\rm PSS}}$ on the persistence diagrams:
\begin{align}
K_{{\rm PSS}}(D,E)=\inn{\Phi_{t}(D)}{\Phi_{t}(E)}_{L^{2}(\lR^{2})} =\frac{1}{8 \pi t} \sum_{x \in D} \sum_{y \in E}  e^{-\frac{ \norm{x-y}^{2} }{8 t}} - e^{-\frac{ \norm{x-\bar{y}}^{2} }{8 t}}, \label{eq:pssk}
\end{align}
where $\Phi_{t}(D)(x)=\frac{1}{4\pi t} \sum_{y \in D} e^{-\frac{\norm{x-y}^{2}}{4 t}} - e^{-\frac{\norm{x-\bar{y}}^{2}}{4 t}}$ and $\bar{y}:=(y^{2},y^{1})$ for $y=(y^{1},y^{2})$.
We note that $\Phi_{t}(D)$ also takes zero on the diagonal by subtracting the Gaussian kernels for $y$ and $\bar{y}$.  

In fact, we can verify that the $(k,w)$-linear kernel contains the PSSK.
Let $\tilde{D}:=D \cup D^{*}$ where $D^{*}=\{(d,b) \in \lR^{2} \mid (b,d) \in D\}$.
Then, $\Phi_{t}(D)$ can also be expressed as 
\begin{align*}
\Phi_{t}(D)=\frac{1}{4\pi t}\sum_{y \in \tilde{D}} w_{{\rm PSS}}(y) k_{{\rm G}}(\cdot,y) \ \mbox{ where }  \ 
w_{{\rm PSS}}(y)=
\begin{cases}
1, & y^{2}>y^{1} \\
0, & y \in \DD \\
-1, & y^{2} <y^{1}
\end{cases},
\end{align*}
which is equal to $\frac{1}{4\pi t} E_{k_{{\rm G}}}(\mu^{w_{{\rm PSS}}}_{\tilde{D}})$.
Furthermore, the inner product in $\cH_{k_{{\rm G}}}$ is
\begin{align}
K_{{\rm L}}(\tilde{D},\tilde{E};k_{{\rm G}},w_{{\rm PSS}})=\inn{E_{k_{{\rm G}}}(\mu^{w_{{\rm PSS}}}_{\tilde{D}})}{E_{k_{{\rm G}}}(\mu^{w_{{\rm PSS}}}_{\tilde{E}})}_{\cH_{k_{{\rm G}}}}=2\sum_{x \in D} \sum_{y \in E}  k_{{\rm G}}(x,y)-k_{{\rm G}}(x,\bar{y}). \label{eq:pssk_embedding}
\end{align}
By scaling the variance parameter $\sigma$ in the Gaussian kernel $k_{{\rm G}}$ and multiplying by an appropriate scalar, Equation \eqref{eq:pssk} is the same as Equation \eqref{eq:pssk_embedding}.
Thus, the PSSK can also be approximated by the random Fourier features. When we apply the random Fourier features for the PSSK, we set $\tilde{\sigma}=\median\{\sigma(\tilde{D}_{\ell}) \mid \ell =1,\cdots, n\}$ as before and $t=\frac{\tilde{\sigma}^{2}}{4}$.

While both methods discount noisy generators, the PWGK has the following advantages over the PSSK.
(i) The PWGK can control the effect of the persistence by $C$ and $p$ in $w_{{\rm arc}}$ independently of the bandwidth parameter $\sigma$ in the Gaussian factor, while in the PSSK only one parameter $t$ cannot adjust the global bandwidth and the effect of persistence simultaneously.
(ii) The PSSK does not satisfy the stability with respect to the bottleneck distance (see also remarks after Theorem \ref{thm:kernel_stability}).

\subsubsection{Persistence landscape}
\label{subsubsec:pl}
The {\em persistence landscape} \cite{Bu15} is a well-known approach in TDA for vectorization of persistence diagrams.
For a persistence diagram $D$, the persistence landscape $\lambda_{D}$ is defined by
\[
\lambda_{D}(k,t) = k \mbox{-th largest value of } \min \{ t-b_{i},d_{i}-t\}_{+},
\]
where $c_{+}$ denotes $\max \{c,0\}$, and it is a vector in the Hilbert space $L^{2}(\lN \times \lR)$.
Here, we define a positive definite kernel of persistence landscapes as a linear kernel on $L^{2}(\lN \times \lR)$:
\begin{align}
K_{{\rm PL}}(D,E):=\inn{\lambda_{D}}{\lambda_{E}}_{L^{2}(\lN \times \lR)}=\int_{\lR} \sum_{k=1} \lambda_{D}(k,t)\lambda_{E}(k,t) dt. \label{eq:landscape}
\end{align}
Since a persistence landscape does not have any parameters, we do not need to consider the parameter tuning.
However, the integral computation is required and it causes much computational time.
Let $\cD=\{D_{\ell} \mid \ell=1,\ldots,n\}$ be a collection of persistence diagrams which contain at most $m$ points.
Since $\lambda_{D_{i}}(k,t) \equiv 0$ for any $k>m, ~ t \in \lR, ~ i=1,\cdots,n$, calculating $\{\lambda_{D_{i}}(k,t) \mid k \in \lZ_{\geq 0}\}$, which needs sorting, is in $O(m \log m)$ (see also \cite{BD17}).
For a fixed $t$, we can calculate $( \sum_{k=1} \lambda_{D_{i}}(k,t)\lambda_{D_{j}}(k,t) )_{i,j=1,\cdots, n}$ in $O(nm \log m + n^{2}m)$, and the Gram matrix $(K_{{\rm PL}}(D_{i},D_{j}))_{i,j = 1,\cdots,n}$ in $O(M_{{\rm int}} (nm \log m +n^{2}m))$, where $M_{{\rm int}}$ is the number of partitions in the integral interval.
Theoretically speaking, this implies that it takes more time to calculate the Gram matrix of $K_{{\rm PL}}$ than the PWGK and the PSSK by the random Fourier features.

\subsubsection{Persistence image}
\label{subsubsec:pi}
As a finite dimensional vector representation of a persistence diagram, a {\em persistence image} is proposed in \cite{AEKNPSCHMZ17}.
First, we prepare a differentiable probability density function $\phi_{x}:\lR^{2} \ra \lR$ with mean $x$ and a weight function $w:\lR^{2}_{{\rm ad}} \ra \lR$. For a persistence diagram $D$, the {\em corresponding persistence surface} is defined by
\begin{align}
\rho_{D}(z) := \sum_{x \in D} w(x)\phi_{x}(z). \label{eq:pi}
\end{align}
Then, for fixed points $a_{0} < \cdots < a_{M} ~ (a_{i} \in \lR)$, the {\em persistence image} ${\rm PI}(D)$ is defined by an $M \times M$ matrix whose $(i,j)$-element is assigned to the integral of $\rho_{D}$ over the pixel $P_{i,j}:=(a_{i-1},a_{i}] \times (a_{j-1},a_{j}]$, i.e., 
\[
{\rm PI}(D)_{i,j}: = \int _{ P_{i,j}} \rho_{D}(z) dz.
\]
Since the persistence image can be regarded as an $M^{2}$-dimensional vector, we define a vector ${\rm PIV}(D) \in \lR^{M^{2}}$ by
\[
{\rm PIV}(D)_{i+M (j -1)}: = {\rm PI}(D)_{i, j} ~ ,
\]
and, in this paper, call it the persistence image vector.

In \cite{AEKNPSCHMZ17}, they use the $2$-dimensional Gaussian distribution $\frac{1}{2\pi \sigma^{2}}k_{{\rm G}}(x,z)$ as $\phi_{x}(z)$ and a piecewise linear weighting function $w_{{\rm pers}}(x)$ defined by
\begin{align*}
w_{{\rm pers}}(x):=
\begin{cases}
0 & (\pers(x) < 0) \\
\frac{1}{L} \pers(x)& (0 \leq \pers(x) \leq L) \\
1 & (\pers(x)>L)
\end{cases} ~~,
\end{align*}
where $L$ is a parameter.
In this paper, for a collection of persistence diagrams $\cD=\{D_{\ell} \mid \ell=1,\ldots,n\}$, we set $L$ as 
\[
L=\max \{ L(D_{\ell}) \mid \ell=1,\cdots,n \}, \mbox{ where } L(D)=\max \{ d_{i} \mid (b_{i},d_{i}) \in D\}. 
\]
For points $a_{0} < \cdots < a_{M}$ of a pixel $P_{i,j}=(a_{i-1},a_{i}] \times (a_{j-1},a_{j}]$, we set $a_{M}=L$ and $a_{i}=\frac{i}{M}a_{M}$ for $0 \leq i \leq M$\footnote{Here, we set $a_{0}=0$ because all generators in the ball model filtrations are born after $b=0$.}.

Here, by choosing $\phi_{x}$ and $w$ in the proposed way, we define a positive definite kernel of persistence image vector as a linear kernel on $\lR^{M^{2}}$:
\begin{align}
K_{{\rm PI}}(D,E)&:=\inn{{\rm PIV}(D)}{{\rm PIV}(E)}_{\lR^{M^{2}}} \nonumber \\
&=\sum_{i,j=1}^{M}{\rm PI}(D)_{i,j}{\rm PI}(E)_{i,j} \nonumber \\
&=\frac{1}{(2\pi\sigma^{2})^{2}} \sum_{x \in D} \sum_{y \in E} w_{{\rm pers}}(x)w_{{\rm pers}}(y) \sum_{i,j=1}^{M} \int_{P_{i,j}} k_{{\rm G}}(x,z) dz \int_{P_{i,j}} k_{{\rm G}}(y,z) dz . \label{eq:inner_pi}
\end{align}

If we choose $\phi_{x}(z)$ as a (normalized) positive definite kernel $k(x,z)$, the corresponding persistence surface \eqref{eq:pi} is the same as the RKHS vector $E_{k}(\mu^{w}_{D})$\footnote{\cite{AEKNPSCHMZ17} use a persistence diagram in birth-persistence coordinates. That is, by a linear transformation $T(b,d)=(b,d-b)$, a persistence diagram $D$ is transformed into $T(D)$. In this paper, in order to compare with the persistence image and the PWGK, we use birth-death coordinates.}.
Thus, it may be expected that the persistence image and the PWGK show similar performance for data analysis.
However, there are several differences between the persistence image and the PWGK.
(i) The mapping from a persistence diagram to the persistence image is not injective due to the discretization by the integral, on the other hand, the injectivity of the RKHS vector $E_{k}(\mu^{w}_{D})$ is ensured in Proposition \ref{prop:C0_distance}.
(ii) It is also shown that the persistence image has a stability result with respect to $1$-Wasserstein distance, but it does not satisfy the bottleneck stability (Remark 1 in \cite{AEKNPSCHMZ17}) or the Haussdorff stability as noted after Theorem \ref{thm:kernel_stability}.
(iii) The computational complexity of a persistence image does not depend on the number of generators in a persistence diagram, but instead, it depends on the number of pixels. We can reduce the computational time of the persistence image by choosing a small mesh size $M$. However, as data in Section \ref{subsec:Synthesized}, some situations need a fine mesh (i.e., a large mesh size). Thus, we have to be careful with the choice of mesh size.

\subsection{Classification with synthesized data}
\label{subsec:Synthesized}
We compare the performance among the PWGK, the PSSK, the persistence landscape, and the persistence image for a simple binary classification task with SVMs.

\subsubsection{Synthesized data}  

In this experiment, we design data sets so that important generators close to the diagonal must be taken into account to solve the classification task.

Let $S^{1}(x,y,r,N)$
 be a set composed of $N$ points sampled with equal distance from a circle in $2$-dimensional Euclidean space with radius $r$ centered at $(x,y)$.
When we compute the persistence diagram of $S^{1}(x,y,r,N)$ for $N>3$, there always exists a generator whose birth time is approximately $\frac{\pi r}{N}$ (here we use $\sin \theta \approx \theta$ for small $\theta$) and death time is $r$ (Figure \ref{fig:birth-death}).
\begin{figure}[htbp]
\begin{center}
\includegraphics[width=0.6\hsize]{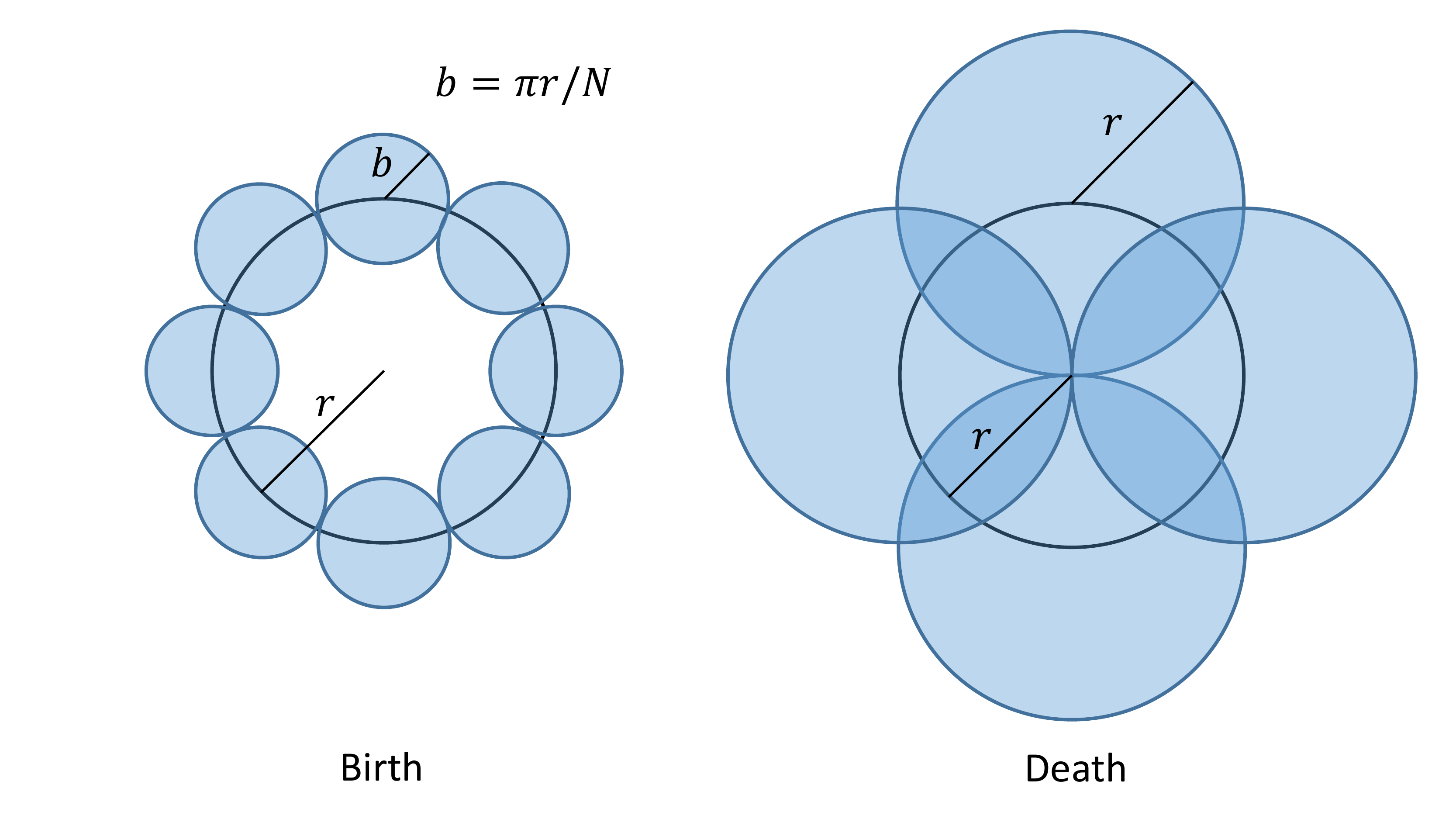}
\end{center}
\caption{Birth and death of the generator for $S^{1}(x,y,r,N)$.}
\label{fig:birth-death}
\end{figure}

In order to add randomness on $S^{1}(x,y,r,N)$, we extend it into $\lR^{3}$ and change $S^{1}(x,y,r,N)$ to $S_{z}^{1}(x,y,r,N)$ and $\tilde{S}_{z}^{1}(x,y,r,N)$ as follows:
\begin{align*}
S_{z}^{1}(x,y,r,N) &:=\{(z_{1},z_{2},z_{3}) \mid (z_{1},z_{2}) \in S^{1}(x,y,r,N), \ z_{3} \mbox{ is uniformly sampled from }[0, 0.01] \}\\
\tilde{S}_{z}^{1}(x,y,r,N) &:=S_{z}^{1}(x+W_{x}^{2},y+W_{y}^{2},r+W_{r}^{2}, \lceil N+2 W_{N} \rceil), 
\end{align*}
where $W_{x},W_{y} \sim N(0,2)$\footnote{$N(\mu,\sigma^{2})$ is the $1$-dimensional normal distribution with mean $\mu$ and variance $\sigma^{2}$.}, $W_{r},W_{N} \sim N(0,1)$ and $\lceil c \rceil$ is the smallest integer greater than or equal to $c$.
Then, we add $S_{2}:=S_{z}^{1}(x_{2},y_{2},r_{2},N_{2})$ to $S_{1}:=\tilde{S}_{z}^{1}(x_{1},y_{1},r_{1},N_{1})$ with probability $0.5$ and use it as the synthesized data.

In this paper, we choose parameters by
\begin{align*}
r_{1}&=1+8 W^{2} ~~ (W\sim N(0,1)), \\
x_{1}=y_{1}&=1.5 r_{1}, \\
N_{1}& ~:~ \mbox{a random integer with equal probability in } (\lceil \frac{\pi r}{2} \rceil, 4\pi r),
\end{align*}
and set $(x_{2},y_{2},r_{2},N_{2})$ as $(0,0,0.2,10)$ (Figure \ref{fig:synthesized}).
\begin{figure}[htbp]
\begin{center}
\includegraphics[width=0.6\hsize]{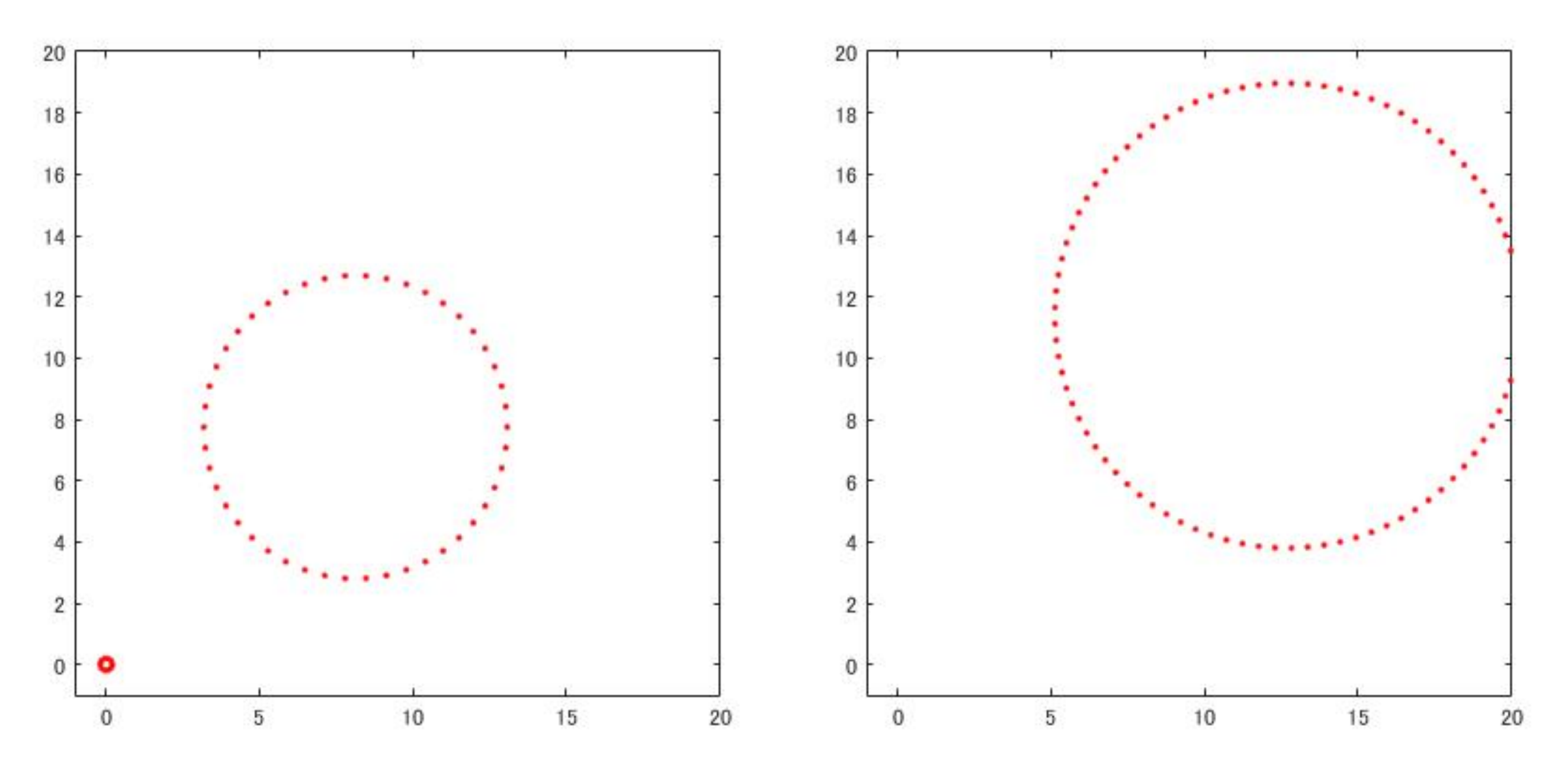}
\end{center}
\caption{Examples of synthesized data.  Left: $S_2$ exits. Right: $S_2$ does not exist. }
\label{fig:synthesized}
\end{figure}

For the binary classification, we introduce the following labels:
\begin{align*}
z_{0}=1 &~~  \mbox{if a generator for $S_{1}$ is born before $1$ and dies after $4$.} \\
z_{1}=1 &~~ \mbox{if $S_2$ exists.}
\end{align*}
The class label of the data set is then given by ${\bf XOR}(z_0,z_1)$.
By this construction, identifying $z_{0}$ requires relatively smooth function in the area of long lifetimes, while classifying the existing of $z_{1}$ needs delicate control of the resolution around the diagonal.

\subsubsection{SVM results} 

SVMs are trained from persistence diagrams given by 100 data sets, and evaluated with 100 independent test data sets.  
As a positive definite kernel $k$, we choose the Gaussian kernel $k_{{\rm G}}$ and the linear kernel $k_{{\rm L}}(x,y):=\inn{x}{y}_{\lR^{2}}$.
For a weight function $w$, we use the proposed function $w_{{\rm arc}}(x)=\arctan (C \pers(x)^{p})$, the piecewise linear weighting function $w_{{\rm pers}}(x)$ defined in Section \ref{subsubsec:pi}, and an unweighted function $w_{{\rm one}}(x) \equiv 1$.
The hyper-parameters $(\sigma, C)$ in the PWGK and $t$ in the PSSK are chosen by the 10-fold cross-validation, and the degree $p$ in $w_{{\rm arc}}(x)$ is set as $1, 5, 10$.
For $K_{{\rm PSS}}$ and $K_{{\rm PL}}$, while they originally consider only the inner product, we also apply the Gaussian kernels on RKHS following Equation \eqref{eq:gauss_rkhs}.
Since $K_{{\rm PI}}$ can be seen as a discretization of the $(k_{{\rm G}},w_{{\rm pers}})$-linear kernel, we also construct another kernel of persistence image by replacing $w_{{\rm pers}}$ with $w_{{\rm arc}}$, which is considered as a discretization of the PWGK.
In order to check whether the persistence image with $w_{{\rm arc}}$ is an appropriate discretization of the PWGK, we try several mesh size $M=20,50,100$.

\begin{table}[htbp]
\caption{Results of SVMs with the $(k,w)$-linear/Gaussian kernel, the PSSK, the persistence landscape, and the persistence image. Average classification rates ($\%$) and standard deviations for 100 test data sets are shown.}\label{table:Synth_results}
\centering
\begin{tabular}{ c  c | c | c }
\hline
\multicolumn{2}{c|}{} & Linear  & Gaussian \\ \hline
\multicolumn{2}{c|}{\textbf{PWGK}} &  	&    \\  
\textbf{kernel} & \textbf{weight} &  &  \\
			& $w_{{\rm arc}} \ (p=1)$ 	&  75.7 $\pm$ 2.31   		& 85.8 $\pm$ 5.19 (PWGK) \\
			& $w_{{\rm arc}} \ (p=5)$ 	&  75.8 $\pm$ 2.47 ($\triangle$)		& 85.6 $\pm$ 5.01 (PWGK, $\square$) \\
$k_{{\rm G}}$   & $w_{{\rm arc}} \ (p=10)$ &  76.0 $\pm$ 2.39		& 86.0 $\pm$ 4.98 (PWGK) \\
			& $w_{{\rm pers}}$ 		&  49.3 $\pm$ 2.72  		& 52.3 $\pm$ 6.60  \\
			& $w_{{\rm one}}$ 		&  53.8 $\pm$ 4.76 		& 55.1 $\pm$ 8.42 \\ \hline
			& $w_{{\rm arc}} \ (p=5)$ 	&  49.3 $\pm$ 6.92		& 51.8 $\pm$ 3.52 \\
$k_{{\rm L}}$	& $w_{{\rm pers}}$ 		&  51.0 $\pm$ 6.84		& 55.7 $\pm$ 8.68 \\
			& $w_{{\rm one}}$ 		&  50.5 $\pm$ 6.90		& 53.0 $\pm$ 4.89 \\ \hline
\multicolumn{2}{c|}{\textbf{PWGK with Persistence image}} &  	&    \\  
$M=20$		& $w_{{\rm arc}} \ (p=5)$	&  48.8 $\pm$ 3.75 ($\triangle_{20}$)		& 52.0 $\pm$ 5.65 ($\square_{20}$) \\
$M=50$		& $w_{{\rm arc}} \ (p=5)$ 	&  49.2 $\pm$ 5.77 ($\triangle_{50}$)		& 51.8 $\pm$ 7.23 ($\square_{50}$) \\
$M=100$		& $w_{{\rm arc}} \ (p=5)$ 	&  75.0 $\pm$ 2.20 ($\triangle_{100}$)		& 85.8 $\pm$ 4.15 ($\square_{100}$) \\  \hline
\multicolumn{2}{c|}{\textbf{PSSK} }				&  50.5 $\pm$ 5.60 ($K_{{\rm PSS}}$) 	& 53.6 $\pm$ 6.69 \\ \hline
\multicolumn{2}{c|}{\textbf{Persistence landscape}}   &   50.6 $\pm$ 5.92 ($K_{{\rm PL}}$) 	& 48.8 $\pm$ 4.25   \\  \hline
\multicolumn{2}{c|}{\textbf{Persistence image}} &  	&    \\  
$M=20$		& $w_{{\rm pers}}$	 	&  51.1 $\pm$ 4.38  $(K_{{\rm PI}}$)	& 51.7 $\pm$ 6.86  \\
$M=50$		& $w_{{\rm pers}}$	 	&  49.0 $\pm$ 6.14  $(K_{{\rm PI}}$)	& 52.3 $\pm$ 7.21  \\
$M=100$		& $w_{{\rm pers}}$	 	&  54.5 $\pm$ 8.76  $(K_{{\rm PI}}$)	& 52.1 $\pm$ 6.70  \\ \hline
\end{tabular}
\end{table}
In Table \ref{table:Synth_results}, we can see that the PWGK $\triangle$ and the Gaussian kernel on the persistence image with $w_{{\rm arc}}$ and large mesh size $\square_{100}$ show higher classification rates ($85\%$ accuracy) than the other methods ($K_{{\rm PSS}}: 50\%$, $K_{{\rm PL}}: 50\%$, and $K_{{\rm PI}}: 55\%$).
Although the $(k_{{\rm G}},w_{{\rm pers}})$-Gaussian kernel and the persistence image with the original weight $w_{{\rm pers}}$ discount noisy generators, the classification rates are close the chance level.
These unfavorable results must be caused by the difficulty in handling the local and global locations of generators simultaneously.
While the result of the persistence image with a large mesh size is similar to that of the PWGK (e.g., $\square$ and $\square_{100}$), a small mesh size gives bad approximation results (e.g., $\square$ and $\square_{50}$).
The reason is because a small mesh size makes rough pixels, and $S_{2}$ itself and noisy generators are treated in some rough pixel.
On the other hand, we remark that a large mesh size $M$ needs much computational time since the computational complexity of the persistence image depends on $O(M^{2})$.  

We observe that the classification accuracies are not sensitive to $p$. 
Thus, in the rest of this paper, we set $p=5$ because the assumption $p>d+1$ in Theorem \ref{thm:kernel_stability} ensures the continuity in the kernel embedding of persistence diagrams and all data points are obtained from $\lR^{3}$.

\subsection{Analysis of granular system}
\label{subsec:granular} 

We apply the PWGK, the PSSK, the persistence landscape, and the persistence image to persistence diagrams obtained by experimental data in a granular packing system \cite{FSCS11}.
In this example, a partially crystallized packing with $150,000$ monosized beads (diameter $=1$mm, polydispersity $=0.025$mm) in a container is obtained by experiments, where the configuration of the beads is imaged by means of X-ray Computed Tomography.
One of the fundamental interests in the study of granular packings is to understand the transition from random packings to crystallized packings.
In particular, the maximum packing density $\phi_*$ that random packings can attain is still a controversial issue (e.g., see \cite{TTD00}).
Here, we apply the change point analysis to detect $\phi_*$.  

In oder to observe configurations of various densities, we divide the original full system into $35$ cubical subsets containing approximately $4000$ beads.
The data are provided by the authors of the paper \cite{FSCS11}.
The packing densities of the subsets range from $\phi=0.590$ to $\phi=0.730$.
\cite{STRFH17} computed a persistence diagram for each subset by taking the beads configuration as a finite subset in $\mathbb{R}^3$, and found that the persistence diagrams characterize different configurations in random packings (small $\phi$) and crystallized packings (large $\phi$).
Hence, it is expected that the change point analysis applied to these persistence diagrams can detect the maximum packing density $\phi_*$ as a transition from the random to crystallized packings. 

Our strategy is to regard the maximum packing density as the change point and detect it from a collection $\cD=\{D_{\ell} \mid \ell=1,\ldots,n\} \ (n=35)$ of persistence diagrams made by beads configurations of granular systems, where $\ell$ is the index of the packing densities listed in the increasing order.
As a statistical quantity for the change point detection, we use the kernel Fisher discriminant ratio \cite{HMB09} defined by
\begin{equation}
\label{eq:kfdr}
\KFDR_{n,\ell,\gamma}(\cD)=\frac{\ell(n- \ell )}{n} \norm{ \pare{\frac{ \ell }{n} \hat{\Sigma}_{1: \ell }+\frac{n- \ell }{n} \hat{\Sigma}_{ \ell+1:n} +\gamma I}^{-\frac{1}{2}} \pare{\hat{\mu}_{ \ell+1:n}-\hat{\mu}_{1: \ell }} }_{\cH_{K}},
\end{equation}
where the empirical mean element $\hat{\mu}_{i:j}$ and the empirical covariance operator $\hat{\Sigma}_{i:j}$ with data $D_{i}$ through $D_{j} \ (i<j)$ are given by
\begin{align*}
&\hat{\mu}_{i:j}=\frac{1}{j-i+1} \sum^{j}_{\ell=i} K(\cdot,D_{\ell}), \\
&\hat{\Sigma}_{i:j}=\frac{1}{j-i+1}\sum^{j}_{\ell=i} \pare{K(\cdot,D_{\ell})-\hat{\mu}_{i:j} } \otimes  \pare{K(\cdot,D_{\ell})-\hat{\mu}_{i:j} }
\end{align*}
respectively, and $\gamma$ is a regularization parameter (in this paper we set $\gamma=10^{-3}$).
The index $\ell$ achieving the maximum of $\KFDR_{n,\ell,\gamma}(\cD)$ corresponds to the estimated change point. 
In Figure \ref{fig:packing_KFDR}, all the four methods detect $\ell=23$ as the sharp maximizer of the KFDR. This result indicates that the maximum packing density $\phi_*$ exists in the interval $[0.604,0.653]$ and supports the traditional observation $\phi_* \approx 0.636$ \cite{An72}.

\begin{figure}[htbp]
\begin{center}
\includegraphics[width=0.9\hsize]{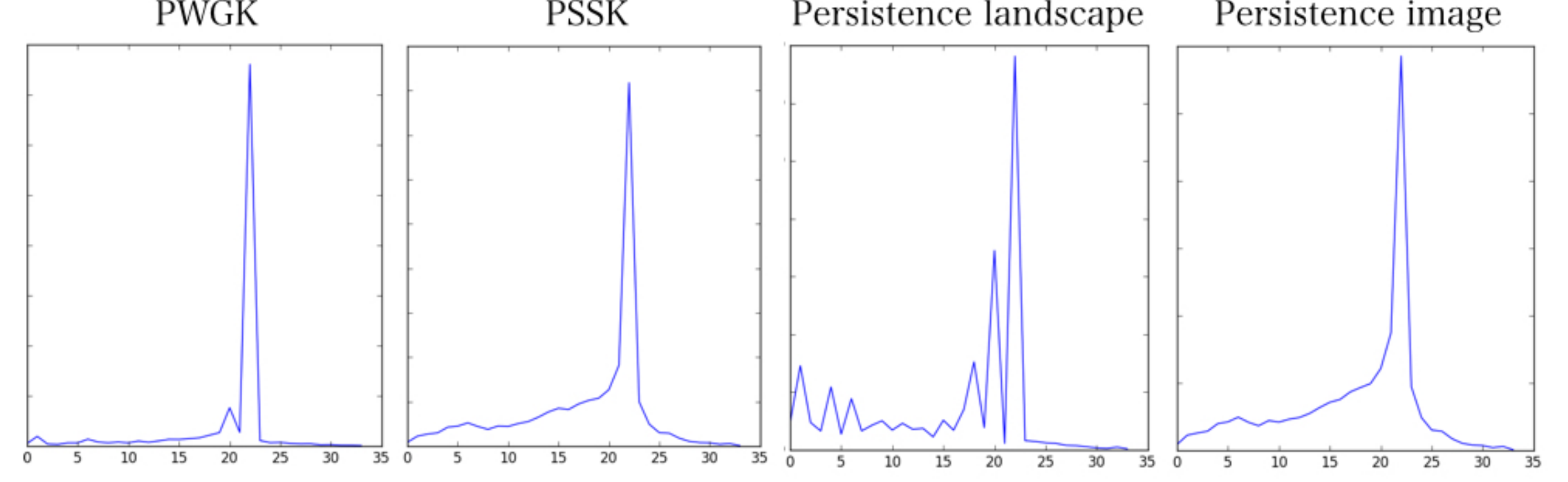}
\end{center}
\vspace{-3mm}
\caption{The $\KFDR$ graphs of the PWGK, the PSSK, the persistence landscape, and the persistence image.}
\label{fig:packing_KFDR}
\end{figure}

We also apply kernel principal component analysis (KPCA) to the same collection of the 35 persistence diagrams. 
Figure \ref{fig:packing_kpca} shows the $2$-dimensional KPCA plots where each green triangle (resp. red circle) indicates the persistence diagram of random packing (resp. crystallized packing).
We can see clear two-cluster structure corresponding to two physical states.

\begin{figure}[htbp]
\begin{center}
\includegraphics[width=0.9\hsize]{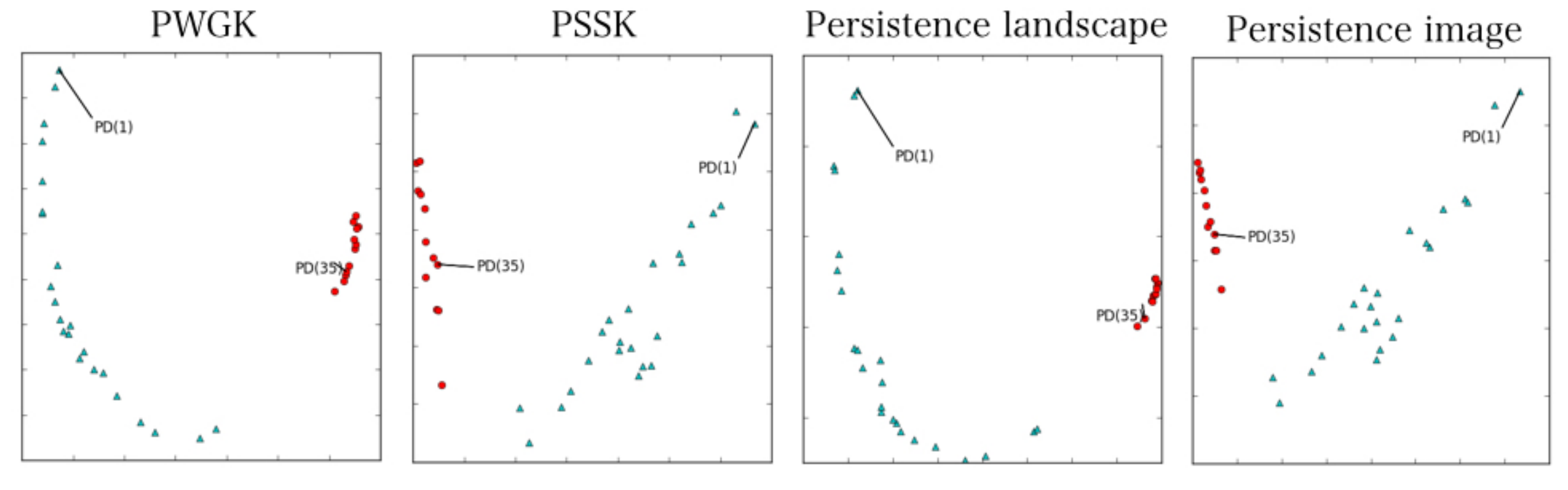}
\end{center}
\vspace{-3mm}
\caption{The KPCA plots of the PWGK (contribution rate: 92.9\%), the PSSK (99.7\%), the persistence landscape (83.8\%), and the persistence image (98.7\%).}
\label{fig:packing_kpca}
\end{figure}

\subsection{Analysis of ${\rm {\bf SiO_2}}$}
\label{subsec:glass}

When we rapidly cool down the liquid state of ${\rm SiO}_2$, it avoids the usual crystallization and changes into a glass state. 
Understanding the liquid-glass transition is an important issue for the current physics and industrial applications \cite{GS07}.
Glass is an amorphous solid, which does not have a clear structure in the configuration of molecules, but it is also known that the medium distance structure such as rings have important influence on the physical properties of the material.
It is thus promising to apply the persistent homology to express the topological and geometrical structure of the glass configuration.
For estimating the glass transition temperature by simulations, a traditional physical method is to prepare atomic configurations of ${\rm SiO_{2}}$ for a certain range of temperatures by molecular dynamics simulations, and then draw the temperature-enthalpy graph. 
The graph consists of two lines in high and low temperatures with slightly different slopes which correspond to the liquid and the glass states, respectively, and the glass transition temperature is conventionally estimated as an interval of the transient region combining these two lines
 (e.g., see \cite{El90}).
However, since the slopes of two lines are close to each other, determining the interval is a subtle problem.
Usually only the rough estimate of the interval is available. Hence, we apply our framework of topological data analysis with kernels to detect the glass transition temperature. 

Let $\{D_\ell\mid \ell=1,\dots,80\}$ be a collection of the persistence diagrams made by atomic configurations of ${\rm SiO}_2$ and sorted by the decreasing order of the temperature. The same data was used in the previous works by \cite{HNHEMN16,NHHEN15}. The interval of the glass transition temperature $T$ estimated by the conventional method explained above is $2000K\leq T\leq 3500K$, which corresponds to $35\leq \ell \leq 50$. 

\begin{figure}[htbp]
\begin{center}
\includegraphics[width=0.9\hsize]{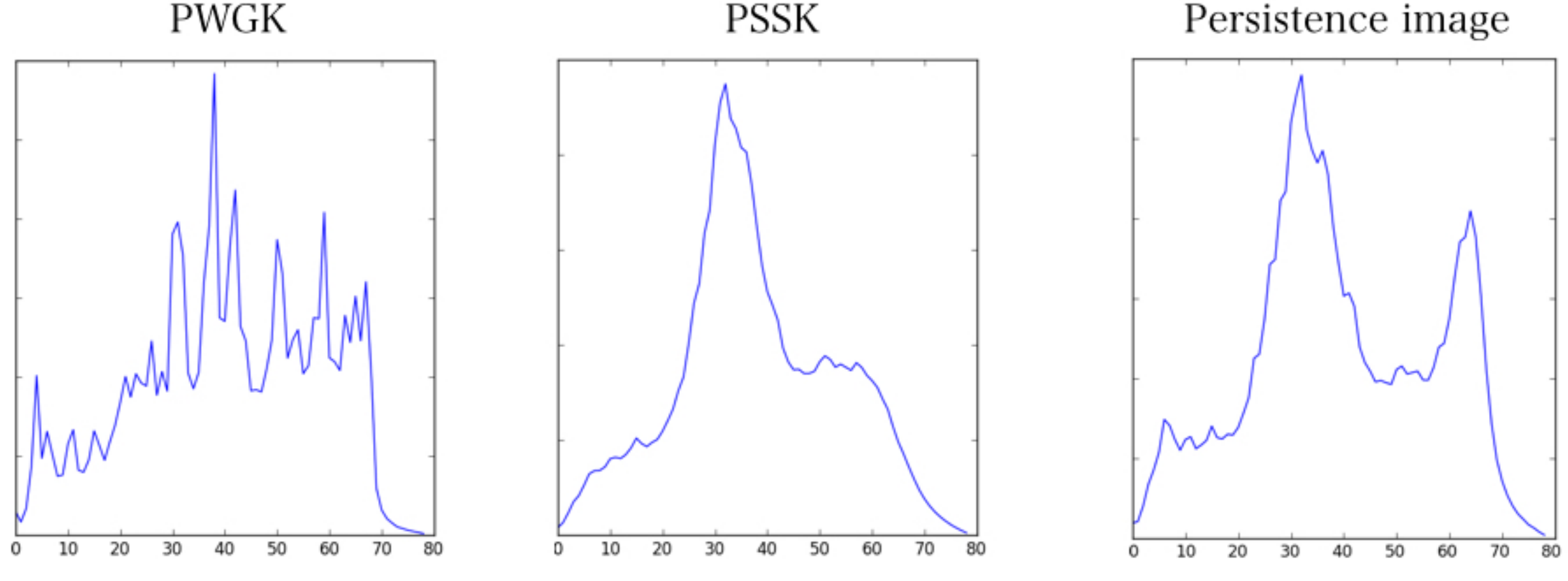}
\end{center}
\vspace{-3mm}
\caption{The $\KFDR$ graphs of the PWGK (left), the PSSK (center) and the persistence image (right).}
\label{fig:glass_KFDR}
\end{figure}
In Figure \ref{fig:glass_KFDR}, the KFDR plots show that the change point is estimated as $\ell=39$ by the PWGK, $\ell=33$ by the PSSK, and $\ell=33$   by the persistence image. For the persistence landscape, we cannot obtain the KFDR or the KPCA results with reasonable computational time.

\begin{figure}[htbp]
\begin{center}
\includegraphics[width=0.9\hsize]{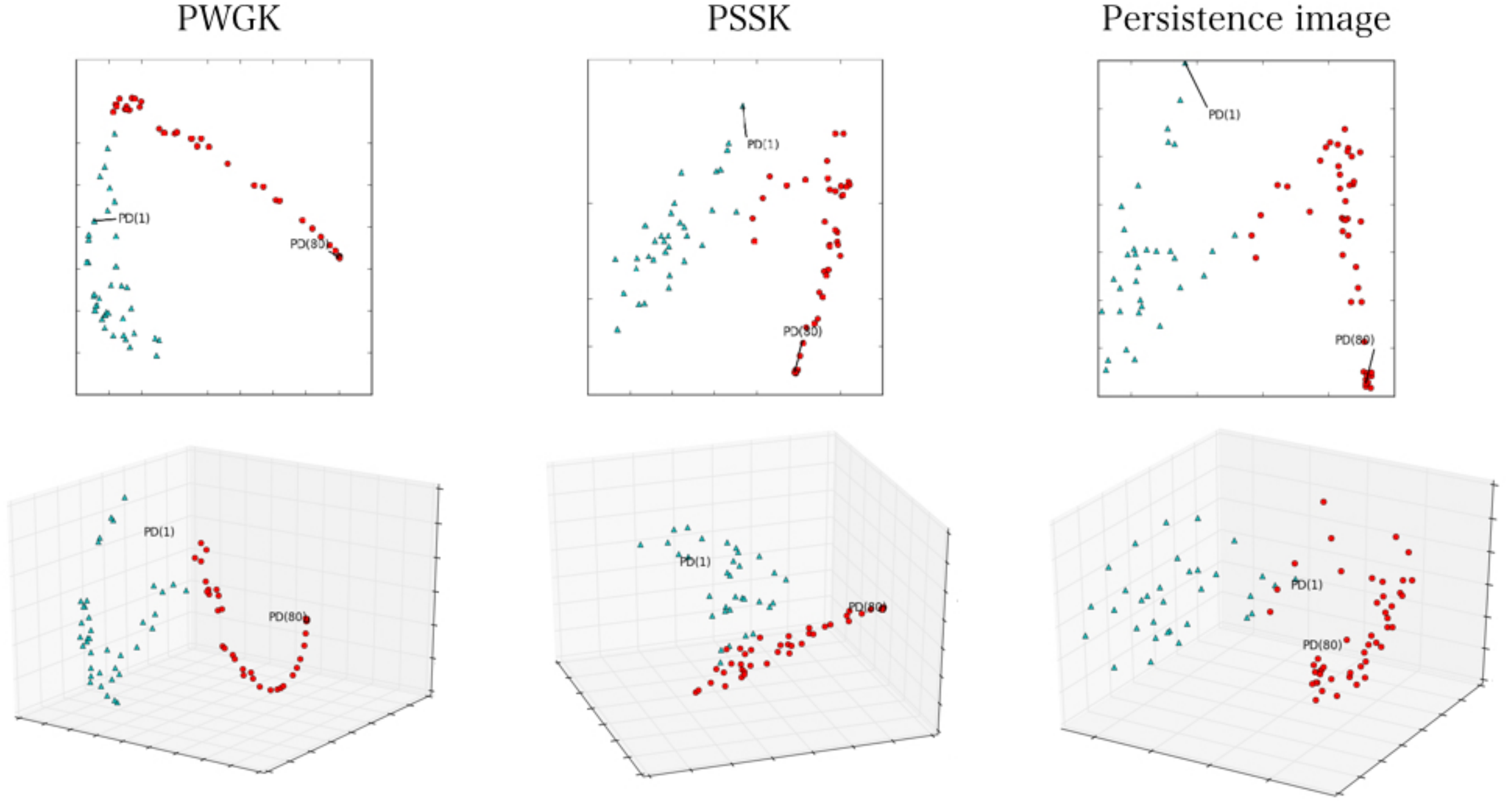}
\end{center}
\vspace{-3mm}
\caption{The $2$-dimensional and $3$-dimensional KPCA plots of the PWGK (contribution rates for $2$-dimension: 81.7\%, $3$-dimension: 92.1\%), the PSSK (97.2\%, 99.3\%) and the persistence image (99.9\%, 99.9\%).}
\label{fig:glass_kpca}
\end{figure}
As we see from the $2$-dimensional plots given by KPCA (Figure \ref{fig:glass_kpca}), the PWGK presents the clear phase change between before (green triangle) and after (red circle) the change point determined by the KFDR.
This strongly suggests that the glass transition occurs at the detected change point.
On the other hand, we cannot observe clear two-cluster structure in the KPCA plots of the PSSK and the persistence image.  
We also remark that the detailed cluster structure is observed in the $3$-dimensional KPCA plots of the PWGK.

\subsection{Protein classification}
\label{subsec:Protein}

We apply the PWGK to two classification tasks studied in \cite{CMWOXW15}.
They introduced the molecular topological fingerprint (MTF) as a feature vector constructed from the persistent homology, and used it for the input to the SVM.
The MTF is given by the $13$-dimensional vector whose elements consist of the persistences of some specific generators\footnote{The MTF method is not a general method for persistence diagrams because some elements of the MTF vector are specialized for protein data, e.g., the ninth element of the MTF vector is defined by the number of Betti $1$ bars that locate at $[4.5, 5.5]$\AA, divided by the number of atoms. For the details, see \cite{CMWOXW15}.} in persistence diagrams.
We compare the performance between the PWGK and the MTF method under the same setting of the SVM reported in \cite{CMWOXW15}.

The first task is a protein-drug binding problem, where the binding and non-binding of drug to the M2 channel protein of the influenza A virus is to be classified.
For each of the two forms, 15 data were obtained by NMR experiments, and 10 data are used for training and the remaining for testing.
We randomly generate 100 ways of partitions and calculate the average classification rates.

In the second problem, the taut and relaxed forms of hemoglobin are to be classified.
For each form, 9 data were collected by the X-ray crystallography.
We select one data from each class for testing and use the remaining  for training.
All the 81 combinations are performed to calculate the CV classification rates.

The results of the two problems are shown in Table \ref{table:Protein_results}.
We can see that the PWGK achieves better performance than the MTF in both problems.
\begin{table}[ttt]
\caption{CV classification rates ($\%$) of SVMs with the PWGK and the MTF (cited from \cite{CMWOXW15}).}
\label{table:Protein_results}
\centering
\begin{tabular}{c|c c}
\hline
& Protein-Drug  &  Hemoglobin \\ \hline
PWGK  &  100  & 88.90  \\
MTF  &  (nbd) 93.91 / (bd) 98.31 & 84.50 \\
\hline
\end{tabular}
\vspace{-3mm}
\end{table}

\section{Conclusion and Discussions}

One of the contributions of this paper is to introduce a kernel framework to topological data analysis with persistence diagrams.
We applied the kernel embedding approach to vectorize the persistence diagrams, which enables us to utilize any standard kernel methods for data analysis.
Another contribution is to propose a kernel specific to persistence diagrams, that is called persistence weighted Gaussian kernel (PWGK).
As a significant advantage, our kernel enables one to control the effect of persistence in data analysis.
We have also proven the stability property with respect to the distance in the Hilbert space.
Furthermore, we have analyzed the synthesized and real data by using the proposed kernel.
The change point detection, the principal component analysis, and the support vector machine derived meaningful results for the tasks. From the viewpoint of computations, our kernel can utilize an efficient approximation to compute the Gram matrix.

One of the main theoretical results of this paper is the stability of the PWGK (Theorem \ref{thm:kernel_stability}).
It is obtained as a corollary of Proposition \ref{prop:general_stability} by restricting the class of persistence diagrams to that obtained from ball model filtrations.
The reason of this restriction is because the total persistence can be bounded from above independent of the persistence diagram.
Thus, one direction to extend this work is to examine the boundedness condition about the total persistence of other persistence diagrams, for example obtained from sub-level sets or Rips complexes.

Another direction to extend this work is to generalize the class of weight functions.
The reason of the choice of $w_{{\rm arc}}$ is mainly for the stability property, but in principle, we can apply any weight function to data analysis.
Then, the question is what types of weight functions have a stability property with respect to the bottleneck or $p$-Wasserstein distance.
Even if we do not concern about stability properties, which weight function is practically good for data analysis?
Suppose generators close to the diagonal are sometimes seen as important features.
Then, our statistical framework can treat such small generators as significant ones by a weight function which has large weight close to the diagonal, while other statistical methods for persistence diagrams always see small generators as noisy ones.

\section*{Acknowledgement}
We thank Ulrich Bauer for giving us useful comments in Section \ref{subsubsec:pssk}, and Mohammad Saadatfar and Takenobu Nakamura for providing experimental and simulation data used in Section \ref{subsec:granular} and \ref{subsec:glass}.
This work is partially supported by JST CREST Mathematics (15656429), JSPS KAKENHI Grant Number 26540016, Structural Materials for Innovation Strategic Innovation Promotion Program D72, Materials research by Information Integration” Initiative (MI$^{2}$I) project of the Support Program for Starting Up, Innovation Hub from JST, and JSPS Research Fellow (17J02401).

\newpage

\appendix

\section{Topological tools}
\label{sec:topology}
This section summarizes some topological tools used in the paper.
To study topological properties algebraically, simplicial complexes are often considered as basic objects.
We start with a brief explanation of simplicial complexes, and gradually increase the generality from simplicial homology to singular and persistent homology.
For more details, see \cite{Ha02}.

\subsection{Simplicial complex}\label{sec:sc}
We first introduce a combinatorial geometric model called simplicial complex to define homology.
Let $P=\{1,\dots,n\}$ be a finite set (not necessarily points in a metric space).
A {\em simplicial complex} with the vertex set $P$ is defined by a collection $S$ of subsets in $P$ satisfying the following properties:
\begin{enumerate}
\item $\{i\}\in S$ for $i=1,\dots,n$, and
\item if $\sigma\in S$ and $\tau\subset \sigma$, then $\tau\in S$.
\end{enumerate}

Each subset $\sigma$ with $q+1$ vertices is called a $q$-simplex.
We denote the set of $q$-simplices by $S_q$.
A subcollection $T\subset S$ which also becomes a simplicial complex (with possibly less vertices) is called a subcomplex of $S$.

We can visually deal with a simplicial complex $S$ as a polyhedron by pasting simplices in $S$ into a Euclidean space.
The simplicial complex obtained in this way is called a geometric realization, and its polyhedron is denoted by $|S|$. 
In this context, the simplices with small $q$ correspond to points ($q=0$), edges ($q=1$), triangles ($q=2$), and tetrahedra ($q=3$). 
\begin{exam}\label{exam:sc}
{\rm 
Figure \ref{fig:sc} shows two polyhedra of simplicial complexes
\begin{align*}
&S=\{
\{1\},
\{2\},
\{3\},
\{1,2\},
\{1,3\},
\{2,3\},
\{1,2,3\}\},\\
&T=\{
\{1\},
\{2\},
\{3\},
\{1,2\},
\{1,3\},
\{2,3\}\}.
\end{align*}

\begin{figure}[htbp]
\begin{center}
\includegraphics[width=0.3\hsize]{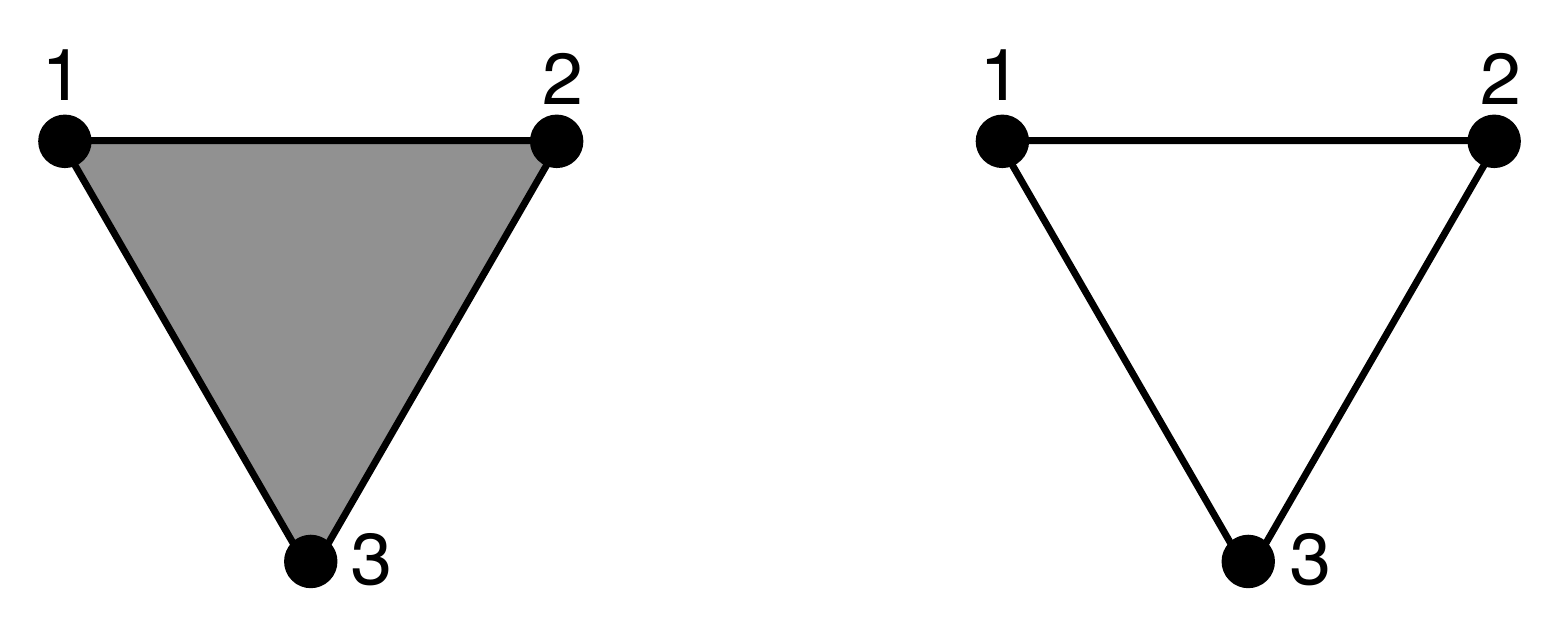}
\end{center}
\caption{The polyhedra of the simplicial complexes $S$ (left) and $T$ (right).}
\label{fig:sc}
\end{figure}
}\end{exam}

\subsection{Homology}\label{sec:homology}
\subsubsection{Simplicial homology}\label{sec:simplicial_homology}
The procedure to define homology is summarized as follows:
\begin{enumerate}
\item Given a simplicial complex $S$, build a chain complex  $C_*(S)$. This is an algebraization of $S$ characterizing the boundary. 
\item Define homology by quotienting out  certain subspaces in $C_*(S)$ characterized by the boundary. 
\end{enumerate}

We begin with the procedure 1 by assigning orderings on simplices. 
When we deal with a $q$-simplex $\sigma=\{i_0,\dots,i_q\}$ as an ordered set, there are $(q+1)!$ orderings on $\sigma$.
For $q>0$, we define an equivalence relation $i_{j_0},\dots,i_{j_q}  \sim  i_{\ell_0},\dots,i_{\ell_q} $ on two orderings of $\sigma$ such that they are mapped to each other by even permutations. 
By definition, two equivalence classes exist, and each of them is called an oriented simplex. 
An oriented simplex is denoted by $\langle i_{j_0},\dots,i_{j_q} \rangle$, and its opposite orientation is expressed by adding the minus $-\langle i_{j_0},\dots,i_{j_q} \rangle$.
We write $\langle\sigma\rangle=\langle i_{j_0},\dots,i_{j_q} \rangle$ for the equivalence class including $i_{j_0}<\dots<i_{j_q}$. For $q=0$, we suppose that we have only one orientation for each vertex. 

Let $K$ be a field. We construct a $K$-vector space $C_q(S)$ as 
\begin{align*}
C_q(S)={\rm Span}_K\{\langle \sigma\rangle\mid \sigma \in S_q\}
\end{align*}
for $S_q\neq\emptyset$ and $C_q(S)=0$ for $S_q=\emptyset$.
Here, ${\rm Span}_K(A)$ for a set $A$ is a vector space over $K$ such that the elements of $A$ formally form a basis of the vector space.
Furthermore, we define a linear map called the {\em boundary map} $\partial_q: C_q(S)\rightarrow C_{q-1}(S)$ by the linear extension of 
\begin{align}\label{eq:boundary}
\partial_q\langle i_0,\dots,i_q\rangle=\sum_{\ell=0}^q(-1)^\ell\langle i_0,\dots,\widehat{i_\ell},\dots,i_q\rangle,
\end{align}
where $\widehat{i_\ell}$ means the removal of the vertex $i_\ell$. We can regard the linear map $\partial_q$ as algebraically capturing the $(q-1)$-dimensional boundary of a $q$-dimensional object. 

For example, the image of the $2$-simplex $\langle\sigma\rangle=\langle 1,2,3\rangle$ is given by $\partial_2\langle\sigma\rangle=\langle2,3\rangle-\langle1,3\rangle+\langle1,2\rangle$, which is the boundary of $\sigma$ (see Figure \ref{fig:sc}). 

In practice, by arranging some orderings of the oriented $q$- and $(q-1)$- simplices, we can represent the boundary map as a matrix $M_q=(M_{\sigma,\tau})_{\sigma\in S_{q-1},\tau\in S_q}$ with the entry $M_{\sigma,\tau}=0,\pm 1$ given by the coefficient in \eqref{eq:boundary}.
For the simplicial complex $S$ in Example \ref{exam:sc}, the matrix representations $M_1$ and $M_2$ of the boundary maps are  given by 
\begin{align}\label{eq:matrix}
M_2=\left[
\begin{array}{r}
1\\
1\\
-1
\end{array}\right],\quad
M_1=\left[
\begin{array}{rrr}
-1& 0&-1\\
 1&-1&0\\
 0& 1&1
\end{array}
\right]
\end{align}
Here, the $1$-simplices (resp. $0$-simplices) are ordered by $\langle 1,2\rangle, \langle2,3\rangle, \langle 1,3\rangle$ (resp. $\langle1\rangle$, $\langle2\rangle$, $\langle3\rangle$).

We call a sequence of the vector spaces and linear maps
\begin{align*}
\xymatrix{
\cdots\ar[r] & C_{q+1}(S)\ar[r]^{\partial_{q+1}} & C_q(S)\ar[r]^{\partial_q} & C_{q-1}(S)\ar[r] & \cdots
}
\end{align*}
the {\em chain complex} of $S$. As an easy exercise, we can show $\partial_{q}\circ \partial_{q+1}=0$.
Hence, the subspaces $Z_q(S)={\rm ker}\partial_q$ and $B_q(S)={\rm im}\partial_{q+1}$ satisfy $B_q(S)\subset Z_q(S)$. Then, the $q$-th (simplicial) {\em homology} is defined by taking the quotient space
\begin{align*}
H_q(S)=Z_q(S)/B_q(S).
\end{align*}
Intuitively, the dimension of $H_q(S)$ counts the number of $q$-dimensional holes in $S$ and each generator of the vector space $H_q(S)$ corresponds to these holes.
We remark that the homology as a vector space is independent of the orientations of simplices. 

For a subcomplex $T$ of $S$, the inclusion map $\rho:T\hookrightarrow S$ naturally induces a linear map in homology $\rho_q: H_q(T)\rightarrow H_q(S)$. Namely, an element $[c]\in H_q(T)$ is mapped to $[c]\in H_q(S)$, where the equivalence class $[c]$ is taken in each vector space. 

For example, the simplicial complex $S$ in Example \ref{exam:sc} has 
\[
Z_1(S)={\rm Span}_K[\begin{array}{ccc}1 & 1 & -1\end{array}]^T=B_1(S)
\]
 from (\ref{eq:matrix}). Hence $H_1(S)=0$, meaning that there are no $1$-dimensional hole (ring) in $S$.
On the other hand, since $Z_1(T)=Z_1(S)$ and $B_1(T)=0$, we have $H_1(T)\cong K$, meaning that $T$ consists of one ring.
Hence, the induced linear map $\rho_1: H_1(T)\rightarrow H_1(S)$ means that the ring in $T$ disappears in $S$ under $T\hookrightarrow S$.

A topological space $X$ is called {\em triangulable} if there exists a geometric realization of a simplicial complex $S$ whose polyhedron is homeomorphic\footnote{A continuous map $f:X \ra Y$ is said to be {\em homeomorphic} if $f:X \ra Y$ is bijective and the inverse $f^{-1}:Y \ra X$ is also continuous.} to $X$.
For such a triangulable topological space, the homology is defined by $H_q(X)=H_q(S)$.
This is well-defined, since a different geometric realization provides an isomorphic homology. 

\subsubsection{Singular homology}\label{sec:singular_homology}
We here extend the homology to general topological spaces.
Let $e_0,\dots,e_q$ be the standard basis of ${\mathbb R}^{q+1}$ (i.e., $e_i=(0,\dots,0,1,0,\dots,0)$, 1 at  $(i+1)$-th position, and 0 otherwise), and set
\begin{align*}
&\Delta_q=\rl{
\sum_{i=0}^q\lambda_ie_i \lmid \sum_{i=0}^q\lambda_i=1, \lambda_i\geq 0
},\\
&\Delta^\ell_q=\rl{
\sum_{i=0}^q\lambda_ie_i \lmid \sum_{i=0}^q\lambda_i=1, \lambda_i\geq 0, \lambda_\ell=0
}.
\end{align*}
We also denote the inclusion by $\iota^\ell_q: \Delta^\ell_q\hookrightarrow \Delta_q$. 

For a topological space $X$, a continuous map $\sigma: \Delta_q\rightarrow X$ is called a singular $q$-simplex, and let $X_q$ be the set of $q$-simplices. 
We construct a $K$-vector space $C_q(X)$ as 
\begin{align*}
C_q(X)={\rm Span}_K\{\sigma\mid\sigma\in X_q\}.
\end{align*}
The boundary map $\partial_q:C_q(X)\rightarrow C_{q-1}(X)$ is defined by the linear extension of
\begin{align*}
\partial_q \sigma=\sum_{\ell=0}^q(-1)^\ell\sigma\circ \iota^\ell_q.
\end{align*}

Even in this setting, we can show that $\partial_{q}\circ\partial_{q+1}=0$, and hence the subspaces $Z_q(X)={\rm ker}\partial_q$ and $B_q(X)={\rm im}\partial_{q+1}$ satisfy $B_q(X)\subset Z_q(X)$. Then, the $q$-th (singular) {\em homology} is similarly defined by 
\begin{align*}
H_q(X)=Z_q(X)/B_q(X).
\end{align*}
It is known that, for a triangulable topological space, the homology of this definition is isomorphic to that defined in 
\ref{sec:simplicial_homology}. From this reason, we hereafter identify simplicial and singular homology. 

The induced linear map in homology for an inclusion pair of topological space $Y\subset X$ is similarly defined as in \ref{sec:simplicial_homology}.

\section{Total persistence}
\label{sec:total}

Let $(M,d_{M})$ be a triangulable compact metric space.
For a Lipschitz function $f:M \ra \lR$, we define the degree-$p$ total persistence over $t$ by 
\begin{equation*}
\Pers_{p}(D_{q}(\Sub(f)),t)=\sum_{\substack{x \in D_{q}(\Sub(f)) \\ \pers(x)>t}} \pers(x)^{p} 
\end{equation*}
for $0 \leq t \leq \Amp(f)$, where $\Amp(f) := \max_{\bm{x} \in M}f(\bm{x})-\min_{\bm{x} \in M} f(\bm{x})$ is the amplitude of $f$.
Let $S$ be a triangulated simplicial complex of $M$ by a homeomorphism $\vartheta: \abs{S} \ra M$.
The diameter of a simplex $\sigma \in S$ and the mesh of the triangulation $S$ are defined by $\diam(\sigma) = \max_{\bm{x},\bm{y} \in \sigma} d_{M}(\vartheta(\bm{x}),\vartheta(\bm{y}))$ and $\mesh(S) = \max_{\sigma \in S} \diam(\sigma )$, respectively.
Furthermore, let us set $N(r)= \min_{\mesh(S ) \leq r} \card{S}$.
Then, the degree-$p$ total persistence over $t$ is bounded from above as follows:
\begin{lem}[\cite{CEHM10}]
Let $M$ be a triangulable compact metric space and $f:M \ra \lR$ be a tame Lipschitz function.
Then, $\Pers_{p}(D_{q}(\Sub(f)),t)$ is bounded from above by
\[
t^{p}N \pare{\frac{t}{\Lip(f)}} + p \int^{\Amp(f)}_{\ee=t} N \pare{\frac{\ee}{\Lip(f)}} \ee^{p-1}d\ee,
\]
where $\Lip(f)$ is the Lipschitz constant of $f$.
\end{lem}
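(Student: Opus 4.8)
The plan is to reduce the claimed bound to an estimate on the \emph{number} of long-lived generators. For $s>0$ put $h(s):=\card{\{x\in D_{q}(\Sub(f))\mid \pers(x)>s\}}$, a finite quantity since $f$ is tame. The proof then splits into (i) a counting estimate $h(s)\le N(s/\Lip(f))$, and (ii) a layer-cake identity that writes $\Pers_{p}(D_{q}(\Sub(f)),t)$ as an integral of $h$; combining (i) and (ii) yields the result.

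For (i), fix $s>0$ and pick a triangulation $S$ of $M$ attaining the minimum in $N(s/\Lip(f))$, so that $\mesh(S)\le s/\Lip(f)$ and $\card{S}=N(s/\Lip(f))$. Consider the subcomplex filtration $\{\abs{S^{(a)}}\}_{a\in\lR}$ with $S^{(a)}:=\{\sigma\in S\mid \max_{x\in\sigma}f(x)\le a\}$; this is a filtration by subcomplexes of $S$ (the condition is downward closed), and in the corresponding filtered complex the degree-$q$ barcode has at most $\card{S}$ bars, since each bar is opened by a distinct simplex. Because $f$ is $\Lip(f)$-Lipschitz and every simplex of $S$ has $M$-diameter at most $\mesh(S)$, one checks the inclusions
\[
{\rm Sub}(f;a-\Lip(f)\mesh(S))\subseteq \abs{S^{(a)}}\subseteq {\rm Sub}(f;a),
\]
which interleave the persistence modules $a\mapsto H_{q}({\rm Sub}(f;a))$ and $a\mapsto H_{q}(\abs{S^{(a)}})$. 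Matching long generators of $D_{q}(\Sub(f))$ through this interleaving to bars of the simplicial barcode gives the counting estimate. (A direct appeal to bottleneck stability only yields $h(s)\le N(s/(2\Lip(f)))$; recovering the sharper constant in the statement needs the refined argument of \cite{CEHM10}.)

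For (ii), note that on the compact space $M$ every generator of finite persistence is born in $[\min_{x\in M}f(x),\max_{x\in M}f(x))$ and dies in $(\min_{x\in M}f(x),\max_{x\in M}f(x)]$, so $\pers(x)\le\Amp(f)$ and hence $h(\ee)=0$ for $\ee\ge\Amp(f)$. For a generator with $\pers(x)=P>t$, the fundamental theorem of calculus gives $P^{p}=t^{p}+p\int_{t}^{P}\ee^{p-1}\,d\ee$. Summing over all generators with $\pers(x)>t$ and interchanging the finite sum with the integral — using that $\pers(x)>\ee$ implies $\pers(x)>t$ whenever $\ee\ge t$ — yields
\[
\Pers_{p}(D_{q}(\Sub(f)),t)=t^{p}h(t)+p\int_{t}^{\Amp(f)}\ee^{p-1}h(\ee)\,d\ee.
\]
Substituting $h(\ee)\le N(\ee/\Lip(f))$ (at $\ee=t$ and on $[t,\Amp(f)]$) produces precisely the asserted upper bound.

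I expect the layer-cake step (ii) to be entirely routine once (i) is available. The main obstacle is (i), and within it the sharp interleaving constant that produces $N(s/\Lip(f))$ rather than $N(s/(2\Lip(f)))$; for that point I would lean on the cited work \cite{CEHM10}.
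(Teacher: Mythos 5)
The paper itself offers no proof of this lemma: it is imported verbatim from \cite{CEHM10} and used as a black box to derive Lemmas \ref{lem:total} and \ref{lem:point_total}, so the only possible comparison is with the argument of the cited source, which your sketch essentially reconstructs (mesh-controlled triangulation, a counting bound for long bars, and a layer-cake integration). Your step (ii) is correct and complete: $h$ is a finite step function vanishing on $[\Amp(f),\infty)$, and summing $P^{p}=t^{p}+p\int_{t}^{P}\ee^{p-1}d\ee$ over the generators with $\pers(x)=P>t$ gives exactly the identity you state, after which $h(\ee)\le N(\ee/\Lip(f))$ yields the claimed bound. Your step (i) is also sound as far as it goes: the downward-closedness of $S^{(a)}$, the bound $\card{S}$ on the number of bars, and the inclusions ${\rm Sub}(f;a-\Lip(f)\mesh(S))\subseteq\abs{S^{(a)}}\subseteq{\rm Sub}(f;a)$ all check out.

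The one point you leave open --- getting $N(s/\Lip(f))$ rather than $N(s/(2\Lip(f)))$ --- is not actually an obstacle, because your sandwich is one-sided. Concretely, $\abs{S^{(a)}}$ is the sub-level filtration of $g(x):=\min\{\max_{y\in\sigma}f(y)\mid \sigma\in S,\ x\in\sigma\}$, which satisfies $f\le g\le f+\dd$ with $\dd:=\Lip(f)\,\mesh(S)$. Replacing $g$ by $g-\dd/2$ changes neither the number of bars nor their lengths and gives $\norm{f-(g-\dd/2)}_{\infty}\le\dd/2$, so the stability theorem for tame functions provides a matching moving points by at most $\dd/2$; any generator of $D_{q}(\Sub(f))$ with persistence $>s\ge\dd$ lies at $\ell^{\infty}$-distance $>\dd/2$ from the diagonal and must therefore be matched, injectively, to an off-diagonal bar of the simplicial barcode, of which there are at most $\card{S}$. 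Choosing $S$ optimal with $\mesh(S)\le s/\Lip(f)$ gives $h(s)\le N(s/\Lip(f))$ with no appeal to any refined argument from \cite{CEHM10}. Finally, even the weaker bound $h(s)\le N(s/(2\Lip(f)))$ that you do establish would suffice for every use the paper makes of this lemma, since for $M\subset\lR^{d}$ it merely replaces $C_{M}$ by $2^{d}C_{M}$ in Lemmas \ref{lem:total} and \ref{lem:point_total}, and hence in Theorem \ref{thm:kernel_stability}.
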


For a compact triangulable subspace $M$ in $\lR^{d}$, the number of $d$-cubes with length $r>0$ covering $M$ is bounded from above by $O(\frac{1}{r^{d}})$, and hence there exists some constant $C_{M}$ depending only on $M$ such that $N(r) \leq \frac{C_{M}}{r^{d}}$.

For $p>d$, we can find the upper bounds for the both terms as follows:
\begin{align*}
t^{p}N\pare{\frac{t}{\Lip(f)}} \leq t^{p}C_{M} \frac{ \Lip(f)^{d}} {t^{d}} \ \ra \ 0 ~~ (t \ra \infty)
\end{align*}
and
\begin{align*}
p \int^{\Amp(f)}_{\ee=t} N\pare{\frac{\ee}{\Lip(f)}} \ee^{p-1}d\ee \leq \frac{p}{p-d} C_{M} \Lip(f)^{d} \Amp(f)^{p-d} .
\end{align*}
Then, the upper bound of the total persistence $\Pers_{p}(D_{q}(\Sub(f)))=\Pers_{p}(D_{q}(\Sub(f)),0)$ is given as follows:
\begin{lem}
\label{lem:total}
Let $M$ be a triangulable compact subspace in $\lR^{d}$ and $p>d$.
For any Lipschitz function $f:M \ra \lR$, 
\[
\Pers_{p}(D_{q}(\Sub(f))) \leq \frac{p}{p-d}C_{M} \Lip(f)^{d} \Amp(f)^{p-d},
\]
where $C_{M}$ is a constant depending only on $M$.
\end{lem}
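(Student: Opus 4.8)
The plan is to combine the preceding lemma of \cite{CEHM10} (the bound on the $t$-truncated total persistence) with the covering estimate $N(r)\le C_{M}r^{-d}$ and then let the truncation level $t$ tend to $0$. I would first note that, since $f$ is tame, $D_{q}(\Sub(f))$ is a finite multi-set, so $t\mapsto\Pers_{p}(D_{q}(\Sub(f)),t)$ is a nondecreasing step function that is constant and equal to $\Pers_{p}(D_{q}(\Sub(f)))$ for all $t$ smaller than $\min_{x\in D_{q}(\Sub(f))}\pers(x)$; in particular
\[
\Pers_{p}(D_{q}(\Sub(f)))=\lim_{t\to0^{+}}\Pers_{p}(D_{q}(\Sub(f)),t).
\]
If $f$ is constant, then $\Lip(f)=\Amp(f)=0$ and both sides of the claimed inequality vanish, so I may assume $\Lip(f)>0$.

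Next I would record the covering estimate. For a compact triangulable $M\subset\lR^{d}$, cover $M$ by axis-parallel $d$-cubes of side $r$: only $O(r^{-d})$ of them meet $M$, with the implied constant governed by $\diam(M)$, and each such cube can be triangulated with a number of simplices depending only on $d$, each of diameter at most a fixed multiple of $r$. Assembling these local triangulations into a triangulation $S$ of $M$ with $\mesh(S)\le r$ gives $\card{S}\le C_{M}r^{-d}$, hence $N(r)\le C_{M}r^{-d}$, with $C_{M}$ depending only on $M$ (not on $f$, $q$, or $p$).

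Then I would substitute into the two terms of the quoted bound. The first term satisfies
\[
t^{p}N\pare{\frac{t}{\Lip(f)}}\le C_{M}\Lip(f)^{d}\,t^{p-d},
\]
which tends to $0$ as $t\to0^{+}$ because $p>d$; the integral term satisfies
\[
p\int^{\Amp(f)}_{\ee=t}N\pare{\frac{\ee}{\Lip(f)}}\ee^{p-1}\,d\ee\le pC_{M}\Lip(f)^{d}\int^{\Amp(f)}_{\ee=t}\ee^{p-d-1}\,d\ee\le\frac{p}{p-d}C_{M}\Lip(f)^{d}\Amp(f)^{p-d}.
\]
Adding the two bounds and letting $t\to0^{+}$ yields $\Pers_{p}(D_{q}(\Sub(f)))\le\frac{p}{p-d}C_{M}\Lip(f)^{d}\Amp(f)^{p-d}$, as claimed.

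The only step carrying real content is the covering estimate $N(r)\le C_{M}r^{-d}$: one must verify that a triangulation of $M$ of mesh at most $r$ can be taken with a simplex count of order $r^{-d}$ and with a constant depending on $M$ alone. Everything else is a direct substitution into the inequality of \cite{CEHM10} together with an elementary integral, so I expect no further obstacle.
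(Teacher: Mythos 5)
Your proposal is correct and follows essentially the same route as the paper: quote the $t$-truncated bound of \cite{CEHM10}, use the cube-covering estimate $N(r)\leq C_{M}r^{-d}$, bound the two terms by $C_{M}\Lip(f)^{d}t^{p-d}$ and $\frac{p}{p-d}C_{M}\Lip(f)^{d}\Amp(f)^{p-d}$, and pass to $t\to 0^{+}$ (the paper phrases this as evaluating at $t=0$, with the first term vanishing in the limit). Your extra care about the limit via finiteness of the diagram and the degenerate case $\Lip(f)=0$ is a harmless refinement, not a different argument.
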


In the case of a finite subset $X \subset \lR^{d}$, there always exists an $R$-ball $M$ containing $X$ for some $R>0$, which is a triangulable compact subspace in $\lR^{d}$.
Moreover, by estimating $\Lip(\dist_{X})^{d} \Amp(\dist_{X})^{p-d}$, we show Lemma \ref{lem:point_total} as a corollary of Lemma \ref{lem:total}:

\begin{proof}[Lemma \ref{lem:point_total}]
The Lipschitz constant of $\dist_{X}$ is $1$, because, for any $\bm{x},\bm{y} \in M$,
\begin{align*}
\dist_{X}(\bm{x})-\dist_{X}(\bm{y}) 
&=\min_{\bm{x}_{i} \in X} d_{M}(\bm{x},\bm{x}_{i})-\min_{\bm{x}_{i} \in X} d_{M}(\bm{y},\bm{x}_{i}) \\
&\leq \min_{\bm{x}_{i} \in X} (d_{M}(\bm{x},\bm{y})+d_{M}(\bm{y},\bm{x}_{i})) - \min_{\bm{x}_{i} \in X} d_{M}(\bm{y},\bm{x}_{i}) \\
&=d_{M}(\bm{x},\bm{y}).
\end{align*}
Moreover, 
\[
\Amp(\dist_{X}) \leq \diam(M):=\max_{\bm{x}_{i},\bm{x}_{i} \in M} d_{M}(\bm{x}_{i},\bm{x}_{i}),
\]
because $\min_{\bm{x} \in M} \dist_{X}(\bm{x})=0$ and $\max_{\bm{x} \in M} \dist_{X}(\bm{x}) \leq \diam(M)$.
Thus, for some constant $C_{M}$ depending only on $M$,  we have
\begin{align*}
\Pers_{p}(D_{q}(X))
&= \Pers_{p}(D_{q}(\Sub(\dist_{X})))\\
&\leq \frac{p}{p-d}C_{M}\Lip(\dist_{X})^{d} \Amp(\dist_{X})^{p-d} \\
&\leq \frac{p}{p-d}C_{M}\diam(M)^{p-d}.
\end{align*}
\end{proof}

For a persistence diagram $D=\{x_{1},\ldots,x_{n} \}$, we construct a $n$-dimensional vector
\[
v(D):=\pare{ \pers(x_{1}),\ldots,\pers(x_{n}) }.
\]
Then, the degree-$p$ total persistence is represented as 
\[
\Pers_{p}(D)=\norm{v(D)}^{p}_{p},
\]
where $\norm{\cdot}_{p}$ denotes the $\ell^{p}$-norm of $\lR^{n}$.
Since $\norm{v}_{q} \leq \norm{v}_{p} \ (v \in \lR^{n}, \ 1 \leq p \leq q < \infty)$, we have
\[
\Pers_{q}(D)^{\frac{1}{q}} =\norm{v(D)}_{q} \leq \norm{v(D)}_{p} = \Pers_{p}(D)^{\frac{1}{p}}.
\]

\begin{prop}
\label{prop:persistence_inequality}
If $1 \leq p \leq q<\infty$ and $\Pers_{p}(D)$ is bounded from above, $\Pers_{q}(D)$ is also bounded from above.
\end{prop}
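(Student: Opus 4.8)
The plan is to obtain the claim directly from the $\ell^p$-norm monotonicity inequality recorded immediately above the statement, so the argument will be short. Recall that to a persistence diagram $D=\{x_1,\ldots,x_n\}$ one associates the vector $v(D)=\pare{\pers(x_1),\ldots,\pers(x_n)}$, so that $\Pers_r(D)=\norm{v(D)}_r^r$ for every $r\in[1,\infty)$, and that for $1\leq p\leq q<\infty$ one has $\norm{v}_q\leq\norm{v}_p$, hence $\Pers_q(D)^{1/q}\leq\Pers_p(D)^{1/p}$.

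First I would raise both sides of the last inequality to the power $q$, which gives the pointwise estimate $\Pers_q(D)\leq\Pers_p(D)^{q/p}$. Second, I would note that since $q/p\geq 1$ the function $t\mapsto t^{q/p}$ is non-decreasing on $[0,\infty)$; therefore any upper bound $\Pers_p(D)\leq B$ --- in particular a bound that is uniform over a family of diagrams, which is the sense in which ``bounded from above'' is used in Propositions \ref{prop:wasserstein_stability} and \ref{prop:general_stability} --- transforms into $\Pers_q(D)\leq B^{q/p}$, proving the assertion.

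I do not expect any real obstacle. The only point requiring attention is that the $\ell^p$-norm inequality is being applied to a finite-dimensional vector; this is legitimate because the paper assumes throughout that every persistence diagram has finite cardinality (a tame persistent homology yields a finite multi-set), so $v(D)$ lies in $\lR^n$ for some finite $n$. This proposition is precisely what is invoked in the proof of Proposition \ref{prop:general_stability} to deduce boundedness of the degree-$p$ total persistence from that of the degree-$(p-1)$ total persistence.
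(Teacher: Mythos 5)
Your argument is correct and is exactly the paper's: the paper derives $\Pers_q(D)^{1/q}=\norm{v(D)}_q\leq\norm{v(D)}_p=\Pers_p(D)^{1/p}$ from $\ell^p$-norm monotonicity on the finite vector $v(D)$ and states the proposition as an immediate consequence, which is precisely the estimate $\Pers_q(D)\leq\Pers_p(D)^{q/p}$ you spell out.
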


\section{Lemmata for Proposition \ref{prop:general_stability}}
\label{sec:stability}

\begin{lem}
\label{lemm:Lip_k}
For any $x,y \in \lR^{2}$, $\norm{k_{{\rm G}}(\cdot,x)-k_{{\rm G}}(\cdot,y)}_{\cH_{k_{{\rm G}}}} \leq \frac{\sqrt{2}}{\sigma} \norm{x-y}_{\infty}$.
\end{lem}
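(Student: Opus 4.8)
\textbf{Proof proposal for Lemma \ref{lemm:Lip_k}.}
The plan is to compute the squared RKHS norm of the difference directly via the reproducing property and then bound it using an elementary inequality for the exponential. First I would expand
\[
\norm{k_{{\rm G}}(\cdot,x)-k_{{\rm G}}(\cdot,y)}_{\cH_{k_{{\rm G}}}}^{2}
= k_{{\rm G}}(x,x) - 2 k_{{\rm G}}(x,y) + k_{{\rm G}}(y,y),
\]
using $\langle k_{{\rm G}}(\cdot,x), k_{{\rm G}}(\cdot,y)\rangle_{\cH_{k_{{\rm G}}}} = k_{{\rm G}}(x,y)$. Since $k_{{\rm G}}(x,x)=k_{{\rm G}}(y,y)=1$, this equals $2\bigl(1 - e^{-\norm{x-y}^{2}/(2\sigma^{2})}\bigr)$.

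Next I would apply the bound $1 - e^{-t} \leq t$ valid for all $t \geq 0$, with $t = \norm{x-y}^{2}/(2\sigma^{2})$, to conclude
\[
\norm{k_{{\rm G}}(\cdot,x)-k_{{\rm G}}(\cdot,y)}_{\cH_{k_{{\rm G}}}}^{2} \leq \frac{\norm{x-y}^{2}}{\sigma^{2}}.
\]
Finally I would pass from the Euclidean norm to the sup-norm on $\lR^{2}$ via the standard equivalence $\norm{z} \leq \sqrt{2}\,\norm{z}_{\infty}$ (since $\norm{z}^{2} = z_{1}^{2} + z_{2}^{2} \leq 2\max\{z_{1}^{2},z_{2}^{2}\}$), which gives $\norm{k_{{\rm G}}(\cdot,x)-k_{{\rm G}}(\cdot,y)}_{\cH_{k_{{\rm G}}}} \leq \frac{\sqrt{2}}{\sigma}\norm{x-y}_{\infty}$, as desired.

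This argument is entirely routine; there is no real obstacle. The only point requiring minimal care is justifying that the cross term is exactly $k_{{\rm G}}(x,y)$ — i.e., invoking the reproducing property correctly — and noting that the elementary inequality $1-e^{-t}\le t$ holds on all of $[0,\infty)$, so no restriction on $\norm{x-y}$ is needed.
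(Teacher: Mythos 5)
Your proposal is correct and coincides with the paper's own argument: expand the squared RKHS norm via the reproducing property to get $2\bigl(1-e^{-\norm{x-y}^{2}/(2\sigma^{2})}\bigr)$, apply $1-e^{-t}\le t$, and then use $\norm{z}\le\sqrt{2}\,\norm{z}_{\infty}$ on $\lR^{2}$. No differences worth noting.
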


\begin{proof}
\begin{align}
\norm{k_{{\rm G}}(\cdot,x)-k_{{\rm G}}(\cdot,y)}^{2}_{\cH_{k_{{\rm G}}}} \nonumber 
&=k_{{\rm G}}(x,x)+k_{{\rm G}}(y,y)-2k_{{\rm G}}(x,y) \nonumber \\
&=1+1-2 e^{-\frac{\norm{x-y}^{2}}{2 \sigma^{2}}} \nonumber \\
&= 2 \pare{ 1-e^{-\frac{\norm{x-y}^{2}}{2 \sigma^{2}}} } \nonumber  \\
& \leq  \frac{1}{\sigma^{2}}\norm{x-y}^{2} \label{eq:eq1} \\
& \leq \frac{2}{\sigma^{2}}\norm{x-y}^{2}_{\infty}. \label{eq:eq2} 
\end{align}
We have used the fact $1-e^{-t} \leq t \ ( t \in \lR)$ in \eqref{eq:eq1} and $\norm{x}^{2} \leq 2 \norm{x}^{2}_{\infty} \ (x \in \lR^{2})$ in \eqref{eq:eq2}.
\end{proof}

\begin{lem}
\label{lemm:persistence}
For any $x,y \in \lR^{2}$, the difference of persistences $\abs{\pers(x)-\pers(y)}$ is less than or equal to $2 \norm{x-y}_{\infty}$.
\end{lem}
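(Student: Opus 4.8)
The plan is to write everything out in coordinates and apply the triangle inequality on $\lR$. Write $x = (b_x, d_x)$ and $y = (b_y, d_y)$, so that by definition $\pers(x) = d_x - b_x$ and $\pers(y) = d_y - b_y$. Then
\[
\pers(x) - \pers(y) = (d_x - b_x) - (d_y - b_y) = (d_x - d_y) - (b_x - b_y),
\]
so that $\abs{\pers(x) - \pers(y)} \leq \abs{d_x - d_y} + \abs{b_x - b_y}$ by the triangle inequality in $\lR$.

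Next I would observe that each of the two terms on the right is controlled by the sup-norm of the difference: by definition $\norm{x - y}_{\infty} = \max\{\abs{b_x - b_y}, \abs{d_x - d_y}\}$, hence both $\abs{b_x - b_y} \leq \norm{x-y}_{\infty}$ and $\abs{d_x - d_y} \leq \norm{x-y}_{\infty}$. Combining this with the previous display gives
\[
\abs{\pers(x) - \pers(y)} \leq \abs{d_x - d_y} + \abs{b_x - b_y} \leq 2 \norm{x-y}_{\infty},
\]
which is exactly the claimed bound.

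There is essentially no obstacle here: the statement is a one-line consequence of the linearity of $\pers$ as a functional on $\lR^2$ together with the elementary fact that each coordinate difference is dominated by the $\ell^\infty$-norm of the vector difference. The only point worth stating carefully is the sign bookkeeping in the first display so that the triangle inequality is applied to $(d_x - d_y)$ and $-(b_x - b_y)$ rather than to $\pers(x)$ and $\pers(y)$ directly; everything else is immediate.
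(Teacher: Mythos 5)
Your proof is correct and is essentially identical to the paper's: both expand $\pers$ in coordinates, regroup the difference as $(d_x-d_y)-(b_x-b_y)$, apply the triangle inequality, and bound each coordinate difference by $\norm{x-y}_{\infty}$. Nothing further is needed.
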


\begin{proof}
For $x=(x_{1},x_{2}),y=(y_{1},y_{2})$, we have
\begin{align}
| \pers(x)-\pers(y) |
&= |(x_{2}-x_{1}) - (y_{2}-y_{1})| \nonumber \\
&\leq |x_{2}-y_{2}|+|x_{1}-y_{1}| \nonumber \\
&\leq 2 \norm{x-y}_{\infty}. \nonumber
\end{align}
\end{proof}

\begin{lem}
\label{lemm:w_continuous}
For any $x,y \in \lR^{2}$, we have 
\begin{align*}
\abs{w_{{\rm arc}}(x)-w_{{\rm arc}}(y)}  \leq 2pC \max \{\pers(x)^{p-1}, \pers(y)^{p-1}\} \norm{x-y}_{\infty}.
\end{align*}
\end{lem}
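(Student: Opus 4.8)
The plan is to peel off the three layers of $w_{{\rm arc}}$ one at a time --- the outer $\arctan$, the power $t\mapsto Ct^{p}$, and finally the map $x\mapsto\pers(x)$ --- obtaining a Lipschitz-type estimate at each stage and then chaining them together.

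First I would use that $\arctan$ is $1$-Lipschitz on $\lR$, since $\bigl|\tfrac{d}{dt}\arctan t\bigr| = \tfrac{1}{1+t^{2}}\leq 1$. Hence
\[
\abs{w_{{\rm arc}}(x)-w_{{\rm arc}}(y)} = \abs{\arctan(C\pers(x)^{p})-\arctan(C\pers(y)^{p})} \leq C\,\abs{\pers(x)^{p}-\pers(y)^{p}}.
\]
Next I would bound $\abs{a^{p}-b^{p}}$ for $a,b\geq 0$ via the algebraic identity $a^{p}-b^{p}=(a-b)\sum_{j=0}^{p-1}a^{j}b^{p-1-j}$; estimating each of the $p$ summands by $\max\{a,b\}^{p-1}$ gives $\abs{a^{p}-b^{p}}\leq p\max\{a,b\}^{p-1}\abs{a-b}$. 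Applying this with $a=\pers(x)$ and $b=\pers(y)$ (both nonnegative, since the points lie above the diagonal) and then invoking Lemma~\ref{lemm:persistence}, i.e. $\abs{\pers(x)-\pers(y)}\leq 2\norm{x-y}_{\infty}$, yields
\[
\abs{\pers(x)^{p}-\pers(y)^{p}} \leq 2p\,\max\{\pers(x),\pers(y)\}^{p-1}\,\norm{x-y}_{\infty}.
\]
Since $t\mapsto t^{p-1}$ is nondecreasing on $[0,\infty)$, we have $\max\{\pers(x),\pers(y)\}^{p-1}=\max\{\pers(x)^{p-1},\pers(y)^{p-1}\}$, and combining this with the two displays above gives exactly the claimed inequality.

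There is essentially no serious obstacle here; the only points that require a little care are that $\pers$ takes nonnegative values, so that the factorization of $a^{p}-b^{p}$ applies and the $\max$ may be pulled through the $(p-1)$-st power, and that $p\in\lZ_{>0}$, so the finite sum in the factorization is meaningful (the degenerate case $p=1$ is covered, the sum then having the single term $1$).
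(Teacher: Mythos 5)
Your proposal is correct and follows essentially the same route as the paper's proof: the $1$-Lipschitz property of $\arctan$, the factorization bound $\abs{s^{p}-t^{p}}\leq p\max\{s^{p-1},t^{p-1}\}\abs{s-t}$, and Lemma \ref{lemm:persistence} for $\abs{\pers(x)-\pers(y)}\leq 2\norm{x-y}_{\infty}$, chained in the same order. No substantive difference to report.
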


\begin{proof}
\begin{align}
&\abs{w_{{\rm arc}}(x)-w_{{\rm arc}}(y) } \nonumber \\
&=\abs{ \arctan(C \pers(x)^{p})-\arctan(C \pers(y)^{p}) } \label{eq:arctan} \\
& \leq C \abs{ \pers(x)^{p}-\pers(y)^{p}} \nonumber \\
& \leq C \abs{\pers(x)-\pers(y)} p \max \{\pers(x)^{p-1}, \pers(y)^{p-1}\} \label{eq:p}\\
& \leq 2 pC \max \{\pers(x)^{p-1}, \pers(y)^{p-1}\} \norm{x-y}_{\infty} \label{eq:infty_norm}. 
\end{align}
We have used the fact that the Lipschitz constant of $\arctan$ is $1$ in \eqref{eq:arctan},
\begin{align*}
s^{p}-t^{p} 
&=(s-t)(s^{p-1}+s^{p-2}t+\cdots+t^{p-1}) \\
& \leq (s-t) p \max \{ s^{p-1},t^{p-1}\}
\end{align*}
for any $s,t >0$ in \eqref{eq:p}, and Lemma \ref{lemm:persistence} in \eqref{eq:infty_norm}.
\end{proof}

\section{Remark on the bottleneck stability of the PWGK}
\label{sec:additive}
Let $K$ be a positive definite kernel on persistence diagrams.
Then, 
\[
d_{K}(D,E) = \sqrt{K(D,D) + K(E,E) -2K(D,E)}
\]
defines a semi-metric on persistence diagrams.
A positive definite kernel $K$ is said to be {\em additive} if $K(D \cup D', E) = K(D,E) + K(D',E)$ and {\em trivial} if $K(D,E) = 0$ for any persistence diagrams $D,D',E$. 
It is shown that a non-trivial additive kernel does not satisfy the $d_{ {\rm W}_{p}}$ stability for $p>1$ by giving a counterexample.

\begin{prop}[\cite{RHBK15}]
\label{prop:additive}
Let $K$ be a non-trivial additive positive definite kernel $K$ on persistence diagrams such that $K(\cdot, \DD) =0$ for the diagonal set $\DD$.
Then, for any $1 < p \leq \infty$, there exists no constant $L>0$ such that
\[
d_{K}(D,E) \leq L d_{ {\rm W}_{p}}(D,E).
\]
\end{prop}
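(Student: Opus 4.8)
The plan is to turn additivity into a bilinear formula for $K$ and then exhibit a sequence of diagrams along which $d_K$ grows linearly while $d_{{\rm W}_p}$ grows only sublinearly. First I would unwind additivity: iterating $K(D\cup D',E)=K(D,E)+K(D',E)$ together with symmetry shows that for finite multisets $D=\{x_1,\dots,x_m\}$ and $E=\{y_1,\dots,y_n\}$ one has $K(D,E)=\sum_{i,j}\kappa(x_i,y_j)$, where $\kappa(x,y):=K(\{x\},\{y\})$, and also $K(\emptyset,E)=0$ (take $D'=\emptyset$ in the additivity identity). Since $K$ is positive definite, every Gram matrix $(\kappa(x_i,x_j))_{i,j}$ is positive semidefinite, so Cauchy--Schwarz gives $\kappa(x,y)^2\le\kappa(x,x)\,\kappa(y,y)$. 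Non-triviality of $K$ means $K(D,E)\neq 0$ for some diagrams, hence $\kappa(x_0,y_0)\neq 0$ for some points $x_0,y_0$ strictly above the diagonal, and then $\kappa(x_0,x_0)>0$. Fix such a point $a:=x_0$; note $\pers(a)>0$ since $a$ lies strictly above the diagonal. (The hypothesis $K(\cdot,\DD)=0$ is what makes $d_K$ insensitive to diagonal points, so these manipulations are compatible with regarding persistence diagrams up to their diagonal parts.)

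Next, for each $n\ge 1$ let $D_n$ be the persistence diagram consisting of $n$ copies of $a$ counted with multiplicity. By the additivity formula, $K(D_n,D_n)=n^2\kappa(a,a)$, $K(D_n,\emptyset)=0$ and $K(\emptyset,\emptyset)=0$, so
\[
d_K(D_n,\emptyset)=\sqrt{K(D_n,D_n)-2K(D_n,\emptyset)+K(\emptyset,\emptyset)}=n\sqrt{\kappa(a,a)}.
\]
On the other hand, the cheapest multi-bijection from $D_n\cup\DD$ to $\emptyset\cup\DD=\DD$ sends each copy of $a$ to its nearest diagonal point, at $\ell^\infty$-distance $\pers(a)/2$, which yields $d_{{\rm W}_p}(D_n,\emptyset)=n^{1/p}\,\pers(a)/2$ for $1\le p<\infty$ and $d_{{\rm W}_\infty}(D_n,\emptyset)=\pers(a)/2$.

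Finally, if some constant $L>0$ satisfied $d_K(D,E)\le L\,d_{{\rm W}_p}(D,E)$ for all $D,E$, then plugging in $D=D_n$ and $E=\emptyset$ would force $n\sqrt{\kappa(a,a)}\le L\,n^{1/p}\pers(a)/2$, i.e.\ $n^{\,1-1/p}$ (read as $n$ when $p=\infty$) would stay bounded above by $L\,\pers(a)/(2\sqrt{\kappa(a,a)})$ for every $n$; since $p>1$ this exponent is strictly positive, a contradiction. The main thing to get right is the multiset/multi-bijection bookkeeping in the additivity reduction — making sure $K(D,E)=\sum_{i,j}\kappa(x_i,y_j)$ respects multiplicities and that $a$ can genuinely be taken above the diagonal — and the verification that the matching realizing $d_{{\rm W}_p}(D_n,\emptyset)$ is indeed the one projecting every point onto the diagonal (which it is, since the target diagram is empty and contributes only $\DD$); the rest is a direct computation.
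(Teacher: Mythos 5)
Your proof is correct and follows essentially the same route as the paper: both arguments exploit that additivity makes $d_K$ scale linearly in $n$ under the $n$-fold union of a fixed diagram while $d_{{\rm W}_p}$ scales only like $n^{1/p}$, which is sublinear for $p>1$. The only cosmetic differences are that you expand $K$ into a pointwise kernel and take $n$ copies of a single off-diagonal point compared against $\emptyset$ (using $K(\emptyset,\cdot)=0$ from additivity), whereas the paper takes $n$ copies of a whole diagram $D$ with $K(D,D)>0$ and compares against $\DD$, using the hypothesis $K(\cdot,\DD)=0$.
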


\begin{proof}
Since $K$ is non-trivial, there exists a persistence diagram $D$ such that $K(D,D) > 0$.
Then, for any $n>0$, we compute both distance between $\cup^{n}_{i =1}D$ and the diagonal set $\DD$:
\begin{align*}
d_{K}(\cup^{n}_{i =1}D, \DD)&=n \sqrt{K(D,D)}, \\
d_{ {\rm W}_{p}}(\cup^{n}_{i =1}D, \DD)&=d_{ {\rm W}_{p}}(D,\DD)
\begin{cases}
\sqrt[n]{n}, & 1 < p < \infty \\
1, & p = \infty
\end{cases},
\end{align*}
Hence, $d_{K}$ cannot be bounded by $L d_{ {\rm W}_{p}}$ with a constant $L>0$.
\end{proof}

Actually, since the kernel $K_{{\rm PWG}}$ defined by $K_{{\rm PWG}}(D, E)=\inn{E_{k}(\mu^{w}_{D})}{E_{k}(\mu^{w}_{E})}_{\cH_{k}}$ is non-trivial, additive, and $E_{k}(\mu^{w}_{\DD})=0$, it seems that the PWGK would not satisfy the bottleneck stability and it contradicts Theorem \ref{thm:kernel_stability}.
However, when this counterexample is applied to Proposition \ref{prop:general_stability}, 
because $\Pers_{p}(\cup^{n}_{i =1}D) = n \Pers_{p}(D)$ and $\Pers_{p}(\DD) = 0$, we obtain
\begin{align*}
\dk{k_{{\rm G}}}{w}{\cup^{n}_{i =1}D}{\DD} 
&= n \sqrt{K_{{\rm PWG}}(D,D)}, \\
L(\cup^{n}_{i =1}D,\DD;C,p,\sigma)
&=nL(D,\DD;C,p,\sigma), \\
d_{{\rm B}}(\cup^{n}_{i =1}D, \DD) 
&= d_{{\rm B}}(D, \DD). 
\end{align*}
In other words, Proposition \ref{prop:general_stability} is not affected by $n$ in $\cup^{n}_{i =1}D$ and Theorem \ref{thm:kernel_stability} does not contradict with Proposition \ref{prop:additive}.

\newpage

\bibliographystyle{alpha}
\bibliography{reference}

\end{document}